\def\I{\mathbb{I}}
\def\N{\mathbb{N}}
\def\V{\mathbb{V}}
\def\P{\mathbb{P}}
\def\E{\mathbb{E}} 
\def\R{\mathbb{R}} 
\def\B{\mathbb{B}} 
\def \HH{\mathcal{H}}
\def \FF{\mathcal{F}}
\def\EE{\mathcal{E}} 
\def\II{\mathcal{I}} 
\def\NN{\mathcal{N}} 
\newtheorem{theorem}{Theorem}
\newtheorem{proposition}{Proposition}
\newtheorem{assumption}{Assumption}
\newtheorem{corollary}{Corollary}
\newtheorem{remark}{Remark}
\newenvironment{proof}[1][{\it Proof.}]{\begin{trivlist}
\item[\hskip \labelsep {\bfseries #1}]}{ \hfill
$\Box$\end{trivlist}\vskip -0.2 cm}
\begin{document}
\title{Delay-Adaptive Learning in \\
	Generalized Linear Contextual Bandits} 
\author{Jose Blanchet
\thanks{Department of Management Science and Engineering, Stanford University, USA. \textbf{Email:} jose.blanchet@stanford.edu}
\and
Renyuan Xu
\thanks{Mathematical Institute, University of Oxford, UK. \textbf{Email:} xur@maths.ox.ac.uk}
\and
Zhengyuan Zhou
\thanks{Stern School of Business, New York University, USA. \textbf{Email:} zzhou@stern.nyu.edu}
}
  
\maketitle
\begin{abstract}
In this paper, we consider online learning in generalized linear contextual bandits where rewards are not immediately observed. Instead, rewards are available to the decision maker only after some delay, which is unknown and stochastic. We study the performance of two well-known algorithms adapted to this delayed setting: one based on upper confidence bounds, and the other based on Thompson sampling. We describe  modifications on how these two algorithms should be adapted to handle delays and give regret characterizations for both algorithms. Our results contribute to the broad landscape of contextual bandits literature by establishing that both algorithms can be made to be robust to delays, thereby 
helping clarify and reaffirm the empirical success of these two algorithms, which are widely deployed in modern recommendation engines.
\end{abstract}

\section{Introduction}
The growing availability of user-specific data has welcomed the exciting era of personalized recommendation,
a paradigm that uncovers the heterogeneity across individuals and provides tailored service decisions that lead to improved outcomes. Such heterogeneity is ubiquitous across a variety of application domains (including online advertising, medical treatment assignment, product/news recommendation~(\cite{LCLS2010}, \cite{BCN2012},\cite{chapelle2014},\cite{bastani2015online},\cite{SBF2017})) and manifests itself as different individuals responding differently to the recommended items. Rising to this opportunity, contextual bandits~(\cite{besbes2009dynamic, rigollet2010nonparametric, goldenshluger2011note, hsu2014taming,agrawal2016efficient}) have emerged to be the predominant mathematical formalism that provides an elegant and powerful formulation: its three core components, the features (representing individual characteristics), the actions (representing the recommendation), and the rewards (representing the observed feedback), capture the salient aspects of the problem and provide fertile ground for developing algorithms that balance exploring and exploiting users' heterogeneity.

As such, the last decade has witnessed extensive research efforts in developing effective and efficient contextual bandits algorithms. In particular, two types of algorithms--upper confidence bounds (UCB) based algorithms (\cite{LCLS2010,FCGS2010, chu2011contextual,JBNW2017, LLZ2017}) and Thompson sampling (TS) based algorithms (\cite{AG2013a, AG2013b, RV2014, russo2016information,agrawal2017thompson})--stand out from this flourishing and fruitful line of work: their theoretical guarantees have been analyzed in many settings, often yielding (near-)optimal regret bounds; their empirical performance have been thoroughly validated, often providing insights into their practical efficacy (including the consensus that TS based algorithms, although sometimes suffering from intensive computation for posterior updates, are generally more effective than their UCB counterparts, whose performance can be sensitive to hyper-parameter tuning). To a large extent, these two family of algorithms have been widely deployed
in many modern recommendation engines.

However, a key assumption therein--both the algorithm design and their analyses--is that the reward is immediately available after an action is taken. Although useful as a first-step abstraction, this is a stringent requirement that is rarely satisfied in practice, particularly in large-scale systems where the time-scale of a single recommendation is significantly smaller than the time-scale of a user's feedback. For instance, in E-commerce, a recommendation is typically made by the engine in milliseconds, whereas a user's response time (i.e. to buy a product or conversion) is typically much larger, ranging from hours to days, sometimes even to weeks. For instance, a thorough empirical study in~\cite{chapelle2014} found that more than 10\% of the conversions in Criteo (a real-time bidding company) were at least 2 weeks old. Furthermore, \cite{chapelle2014} found that the delay distribution from the company's data follows the exponential distribution closely and hence does have heavy tails. Similarly, in clinical trials, it is infeasible to immediately observe and hence take into account the medical outcome after applying a treatment to a patient--collecting medical feedback can be a time-consuming and often random process; and in general, it is common to have applied trial treatments to a large number of patients, with individual medical outcomes only available much later at different, random points in time. In both the E-commerce (\cite{KCW2001,chapelle2014})and the clinical trials cases (\cite{CC2011}), a random and often significantly delayed reward is present. Further, such delays empirically often follow a heavy tail distribution, and hence \textit{a priori} can have substantially negative impact on the learning performance. Consequently, to understand such impact of delays,
adjustments in classical formulations must be made, both at the algorithmic level and at the analysis level.

\subsection{Related Work}  


In the past five years or so, the problem of learning on bandits with delays has received increasing attention and has been studied in several different settings in the existing literature, where most of the efforts have concentrated on the multi-armed bandits setting, including both the stochastic multi-armed bandits and the adversarial multi-armed bandits.

For stochastic multi-armed bandits with delays, \cite{JGS2013} show a regret bound $O(\log T +\E[\tau]+\sqrt{\log T \E[\tau]})$ where $\E[\tau]$ is the mean of the \textbf{iid} delays.
\cite{DKVB2014} consider Gaussian Process bandits with a bounded stochastic delay.
\cite{MLBP2015} follow the work of \cite{JGS2013} and propose a queue-based multi-armed bandit algorithm to handle delays. 
 \cite{PASG2017} match the same regret bound as in \cite{JGS2013} when feedback is not only delayed but also anonymous.

For adversarial multi-armed bandits with delays, \cite{NAGS2010} establish the regret bound of $\E [R_T] \leq O(\tau_{\text{const}})\times \E[R^{\prime}_T(\frac{T}{\tau_{\text{const}}})]$ for Markov decision process, where $\tau_\text{const}$ is the constant delay and $R^{\prime}_T$ is the regret without delays.  
\cite{CGM2019} consider adversarial bandits with fixed constant delays on the network graph, with a minimax regret of the order $\tilde{O}\sqrt{(K+\tau_{\text{const}})T}$, where $K$ is the number of arms.
Another related line of work to adversarial multi-armed bandits is adversarial learning with full information, where the rewards for all arms are observed. Different variants of this problems in the delayed setting have been studied by \cite{WO2002}, \cite{mesterharm2005}, \cite{QK2015} and \cite{GST2016}.

On the other hand, learning in contextual bandits with delays are much less explored.
\cite{JGS2013} consider learning on adversarial contextual bandits with delays and establish an expected regret bound $\E \left[R_T\right] \leq (1+\E[M_T^*])\times \E \left[R^{\prime}_T\left( \frac{T}{1+\E[M_T^*]}\right)\right]$ by using a black-box algorithm, where $M_T^*$ is the running maximum number of delays up to round $T$. \cite{DHKKLRZ2011} consider stochastic contextual bandits with a fixed constant delay. The reward model they consider is general (i.e. not necessarily parametric); however, they require the policy class to be finite. In particular, they obtain the regret bound $O(\sqrt{K\log N}(\tau_{\text{const}} +\sqrt{T}))$, where $N$ is the number of policies and $\tau_{\text{const}}$ is again the fixed constant delay.

Finally, we also note that there is a growing literature on offline contextual bandits (for a highly incomplete list, see~\cite{dudik2011doubly,swaminathan2015batch,athey2017efficient,zhou2018offline,kitagawa2018should,off-policy-evaluation-slate-recommendation,deep-learning-logged-bandit-feedback}). This is a setting where all the data has been collected upfront and a policy needs to be learned from this batch data at once. Although sharing the same primitives (contexts, actions and rewards), this problem has important differences from the online setting. In particular, the exploration part is missing in this problem and a separate set of challenges exist in the offline case. In this setting, delays would have no impact since 
all the rewards will have been collected at the end (except perhaps at the tail of the batch).



\subsection{Our Contributions}
In this paper, we consider learning on generalized linear (stochastic) contextual bandits with stochastic unbounded delays. Our contributions are two-fold.
First, we design two delay-adaptive algorithms for generalized linear contextual bandits, one based on UCB, the other based on TS. We refer to the two variants as Delayed UCB (DUCB, as given in Algorithm~\ref{DUCB-GLCB}) and Delayed TS (DTS, as given in Algorithm ~\ref{alg:PS-DGLM}) respectively.
DUCB requires a carefully designed delay-adaptive confidence parameter, which depends on how many rewards are missing up to the current time step. In contrast, DTS is a straightforward adaptation that incorporates the delayed rewards as they become available.

Second, we give regret characterizations of both DUCB and DTS under (1) independent stochastic, unbounded delays that can have heavy tails, (2) unbounded Markov delays that can have near-heavy tails (tails that are arbitrarily close to exponential tails), and (3) unbounded delays with any dependency structure that have light (sub-Gaussian) tails. In particular, as a special case of our results, when the delays are \textbf{iid} with mean $\mu_I$, we have a high-probability regret bound of $\tilde{O}\left(\left(\sigma_G\sqrt{d}+\mu_I d+d\right)\sqrt{T}\right)$ on DUCB, where $\sigma_G$ is a parameter characterizing the tail bound of the delays and $d$ is the feature dimension. For comparison, the state-of-the-art regret bound of UCB on generalized linear contextual bandits without delays is $\tilde{O}\left( d\sqrt{T}\right)$ (\cite{FCGS2010,LLZ2017}). For DTS, we have the Bayesian regret bound of  $\tilde{O}\left(\left(\sigma_G\sqrt{d}+ \mu_I \sqrt{d}+d\right)\sqrt{T}\right)$. 
For comparison, the state-of-the-art Bayesian regret bound of TS on generalized linear contextual bandits without delays is $\tilde{O}\left( d\sqrt{T}\right)$ (\cite{RV2014,russo2016information}).
The regret bounds we have obtained highlight the dependence on the delays in two ways: one is how much delay is present on average, the other is how heavy the tail of the distribution is.
Both factors contribute to the degradation of the regret bounds: that the average delay enlarges regret is intuitive; that the tail influences regret is because a more likely large delay (at the far right end of a tail) can delay the learning for that context significantly, particularly in the early stages when the decision maker is unsure about the underlying parameter is. 

To the best of our knowledge, these regret bounds provide the first theoretical characterizations in generalized linear contextual bandits with large delays.
Our results contribute to the broad landscape of contextual bandits literature by establishing that both algorithms are robust to delays, thereby 
helping clarify and reaffirm the empirical success of these two algorithms, which are widely deployed in modern recommendation engines.

	Some of the initial results have appeared in the conference version~\cite{zhou2019}. 
	Our work here provides a comprehensive treatment of learning in generalized linear contextual bandits with large delays that incorporates substantially more in-depth inquiries on several fronts. 
	First,  we consider the heavier-tailed delays that include exponential distributions whereas ~\cite{zhou2019} only dealt with light-tailed delays that are either sub-Gaussian or have ($1+q$)-th moment (for some $q > 0$).
	This relaxation is important both from an empirical standpoint and from a theoretical standpoint.  
	Empirically,  as mentioned earlier, the field study in~\cite{chapelle2014} found that the delay distribution from the company's data follows the exponential distribution closely, rather than a sub-Gaussian distribution that is commonly assumed in the bandits literature. Theoretically, establishing guarantees
	in this larger-delay regime requires us to develop a new (and arguably more elegant) argument from that in~\cite{zhou2019}, which is not applicable here. We explain the technical difficulty in more detail in Section \ref{sec:pre}.
	Second, the sole focus of
	~\cite{zhou2019} is on adapting and analyzing UCB-based algorithms. 
	However, as mentioned earlier, it is known that Thompson sampling often achieves superior empirical performance, despite the fact that their theoretical bounds (when no delays are present) may not match exactly those of the UCB algorithms. Furthermore, TS-based algorithms do not suffer from hyper-parameter tuning and can effectively incorporate prior and can therefore significantly outperform (when priors are available and correct). Consequently, in this paper, in addition to adapting and analyzing the UCB-based algorithms, we also discuss (in Section~\ref{sec:ts}) the adaptation of TS-based algorithms in the delayed feedback setting and obtain regret bounds that characterize the corresponding performance. Finally, we move beyond the regime of the independent delay setting studied in~\cite{zhou2019}, and instead consider (in Section~\ref{sec:extension}) the much more general and realistic history-dependent delays setting. We give regret bounds of both UCB-based algorithms and TS-based algorithms, under both the Markov delays assumption and the general stationary delays assumption.
	We also highlight, in this unified presentation, the comparison of the various regret bounds as the assumption on delays get progressively weakened.

\section{Problem Setup}

In this section, we describe the formulation for learning in generalized linear contextual bandits (GLCB)
in the presence of delays. We start by reviewing the basics of generalized linear contextual bandits, followed by a description of the delay model. Before proceeding, we first fix some notation.

For a vector $x \in \R^d$, we use $\|x\|$ to denote its $l_2$-norm and $x^{\prime}$ its transpose. $\B^d:=\{x \in\R^d:\|x\| \leq 1\}$ is the unit ball centered at the origin. The weighted $l_2$-norm associated with a positive-definite matrix $A$ is defined by $\|x\|_A := \sqrt{x^{\prime}Ax}$. The minimum and maximum singular values of a matrix $A$ are written as $\lambda_{\min}(A)$ and $\|A\|$ respectively. 
For two symmetric matrices $A$ and $B$ the same dimensions, $A \succeq B$ means that A-B is positive semi-definite. For a real-valued function f, we use $\dot{f}$ and $\ddot{f}$ to denote its first and second derivatives. Finally, $[n]:=\{1,2,\cdots,n\}$.

 \subsection{Generalized Linear Contextual Bandits}
 
\paragraph{Decision procedure.} We consider the generalized linear contextual bandits problem with $K$ actions. At each round $t$, the agent observes a context consisting of a set of $K$ feature vectors $x_t:=\{x_{t,a} \in \R^d\vert a \in [K]\} $, which is drawn \textbf{iid} from an unknown distribution $\gamma$ with $\|x_{t,a}\| \leq 1$. Each feature vector $x_{t,a}$ is associated with an unknown stochastic reward $y_{t,a}\in[0,1]$. If the agent selects one action $a_t$,  there is a resulting reward $y_{t,a_t}\in[0,1]$ associated. In the standard contextual bandits setting, the reward is immediately observed after the decision is made and the observed reward can be utilized to make decision in the next round. 

Although it is generally understood in the contextual bandits literature, for completeness, here we briefly discuss the meaning of the above quantities, as well as where they come from. In general, at each round $t$, an individual characterized by $v_t$ (a list of characteristics associated with that individual) is drawn from a population and becomes available.
When the decision maker decides to apply action $a_t$ (one of the available $K$ actions) to this individual,
then a reward $y_t(v_t, a_t)$ is obtained: this reward can depend stochastically on both the individual characteristics $v_t$ and the selected action $a_t$. However, in practice, for both modelling and computational reasons, one often first featurizes the individual characteristics and the actions.
In particular, with sufficient generality, one assumes  $\mathbf{E}[y_t(v_t, a_t) \mid v_t, a_t] = g_{\theta} (\phi(v_t, a_t))$,
where $g_{\theta}(\cdot)$ is the parametrized \textbf{mean} reward function and $\phi(v_t, a_t)$ extracts the features from the given raw individual characteristics $v_t$ and action $a_t$. In the above formulation,
as is standard in the contextual bandits literature, we assume the feature map $\phi(\cdot)$ is known and given and $x_{t,a} = \phi(v_t, a)$. If $V_t$ is already a vector in Euclidean space, then a common choice for the feature extractor is $\phi(v_t, a) = [\mathbf{0}, \dots, \mathbf{0}, v_t, \mathbf{0}, \dots, \mathbf{0}]$: that is, a $Kd$-dimensional vector with all zeros except at the $a$-th block.

\paragraph{Relationship between reward $Y$ and context $X$.} In terms of the relationship between $Y_{t,a}$ and $X_{t,a}$, we follow the standard generalized linear contextual bandits literature (\cite{FCGS2010,LLZ2017}).
Define $\HH^0_t = \{(s,x_s,a_s,y_{s,a_s}), s \leq t-1\} \cup \{x_t\}$ as the information available at the beginning of round $t$. The agent maximizes the cumulative expected rewards over $T$ rounds with information $\HH^0_t$ at each round $t$ ($ t \geq 1$). Suppose the agent takes action $a_t$ at round $t$. Denote by $X_t =x_{t,a_t}$, $Y_t = y_{t,a_t}$ and we assume the conditional distribution of $Y_t$ given $X_t$ is from the exponential family. Therefore its density is given by
\begin{eqnarray}\label{glm}
\P_{\theta^*}(Y_t|X_t) = \exp \left( \frac{Y_tX_t^{\prime}\theta^*-m(X_t^{\prime}\theta^*)}{h(\eta)}+A(Y_t,\eta)\right).
\end{eqnarray}
Here, $\theta^*$ is an unknown number under the frequentist setting; 
$\eta \in \R^+$ is a given parameter; $A$, $m$ and $h$ are three normalization functions mapping from $\R$ to $\R$.

 For exponential families, $m$ is infinitely differentiable, $\dot{m}(X^{\prime}\theta^{*})=\E[Y \vert X]$, and $\ddot{m}(X^{\prime}\theta^{*})=\V(Y \vert X)$. Denote $g(X^{\prime}\theta^{*}) =\E[Y \vert X] $ , one can easily verify that $g(x^{\prime}\theta)=x^{\prime}\theta$ for linear model, $g(x^{\prime}\theta)= \frac{1}{1+\exp(-x^{\prime}\theta)}$ for logistic model and $g(x^{\prime}\theta) = \exp (x^{\prime}\theta)$ for Poisson model. In the generalized linear model (GLM) literature (\cite{NW1972,McCullagh2018}), $g$ is often referred to as the {\it inverse link function}. 

Note that \eqref{glm} can be rewritten as the GLCB form,
\begin{eqnarray}\label{glb}
Y_t = g( X_t^{\prime}\theta^*) + \epsilon_t,
\end{eqnarray}
where $\{\epsilon_t, t\in [T]\}$ are independent zero-mean noise, $\HH^0_t$-measurable with $\E[\epsilon_t|{\HH^0_{t}}]=0$. Data generated from \eqref{glm} automatically satisfies the sub-Gaussian condition:
\begin{eqnarray}\label{subgaussian}
\E\left[\exp({\lambda \epsilon_t})|{\HH^0_{t}}\right] \leq \exp \left({\frac{\lambda^2 \hat{\sigma}^2}{2}}\right).
\end{eqnarray}
Throughout the paper, we denote  $\hat{\sigma}>0$ as the sub-Gaussian parameter of the noise $\epsilon_t$. 

\begin{remark}
 In this paper, we focus on the GLM with exponential family \eqref{glm}. In general, one can work with model \eqref{glb} under the sub-Gaussian assumption \eqref{subgaussian}. Our analysis will still hold by considering maximum quasi-likelihood estimator for \eqref{glb}.
See more explanations in Section \ref{app:MLE}.
\end{remark}


 \subsection{The Delay Model}\label{sec:delay_model}
Unlike the traditional setting where each reward is immediately observed, 
here we consider the case where stochastic and unbounded {\it delays} are present in revealing the rewards.  
Let $T$ be the number of total rounds. At round $t$, after the agent takes action $a_t$, the reward $y_{t,a_t}$ may not be available immediately. Instead, it will be observed at the end of round $t+D_t$ where $D_t$ is the delay at time $t$. We assume $D_t$ is a non-negative random number which is independent of $\{D_s\}_{s \leq t-1}$ and  $\{x_s, y_{s,a_s}, a_s\}_{s \leq t}$. 
First, we define the available information for the agent at each round.

\paragraph{Information structure under delays.}
At any round $t$, if $D_s+s\leq t-1$ (reward occurred in round $s$ is available at the beginning of round $t$), then we call $(s,x_s,y_{s,a_s},a_s)$  the {\it complete information tuple} at round $t$. If $D_s+s\geq t$, we call $(s,x_s,a_s)$ the {\it incomplete information tuple} at the beginning of round $t$. Define
$$\HH_t = \left\{(s,x_s,y_{s,a_s},a_s)\,\,\vert\,\, s+D_s \leq  t-1 \right\}\cup \left\{(s,x_s,a_s)\,\,\vert\,\, s\leq t-1, s+D_s \geq  t \right\}\cup \left\{x_t \right\},$$
then $\HH_t$ is the information (filtration) available at the beginning of round $t$ for the agent to choose action $a_t$. In other words, $\HH_t$ contains all the incomplete and complete information tuples up to round $t-1$ and the content vector $x_t$ at round $t$.

Moreover define
\begin{eqnarray}\label{after_info}
\FF_t = \{ (s,x_s,a_{s},y_{s,a_s}) \,\,\vert\,\, s+ D_s \leq t\}.
\end{eqnarray}
Then $\FF_t$ contains all the complete information tuples $ (s,x_s,a_{s},y_{s,a_s})$ up to the end of round $t$.
Denote $\II_{t}=\FF_{t}-\FF_{t-1}$, $\II_t$ is the new complete information tuples revealed at the end of round $t$.

\paragraph{Performance criterion.} 
Under the frequentist setting, assume there exists an unknown true parameter $\theta^* \in \R^d$. The agent's strategy can be evaluated by comparing her rewards to the best reward. To do so, define the optimal action at round $t$ by $a_t^*=\arg \max_{a \in [K]} g(x_{t,a}^{\prime} \theta^*)$. Then, the agent's total regret of following strategy $\pi$ can be expressed as follows
\[R_T(\pi):= \sum_{t=1}^T \left(g \left(x_{t,a^*_t}^{\prime}\theta^*\right)-g\left( x_{t,a_t}^{\prime}\theta^*\right)\right),\]
where $a_t\sim \pi_t$ and policy $\pi_t$ maps $\HH_t$ to the probability simplex $\Delta^K:=\{(p_1,\cdots,p_K)\,\,\vert\,\,\sum_{i=1}^K p_i=1, p_i \geq 0\}$. Note that $R_T(\pi)$ is in general a random variable due to the possible randomness in $\pi$. 

%

\paragraph{Assumptions.}
Throughout the paper, we assume the following assumption on distribution $\gamma$ and function $g$, which is standard in the generalized linear bandit literature (\cite{FCGS2010,LLZ2017,JBNW2017}).
\begin{assumption}[GLCB]\label{a1}
\begin{itemize}
 \item $\lambda_{\min} (\E[\frac{1}{K} \sum_{a \in [K]}x_{t,a}x_{t,a}^{\prime}]) \geq \sigma_0^2$ for all $t \in [T]$.
\item $\kappa := \inf_{\{\|x\|\leq 1, \|\theta-\theta^*\|\leq 1\}}\dot{g}(x^{\prime}\theta)>0$.
\item $g$ is twice differentiable. $\dot{g}$ and $\ddot{g}$ are upper bounded by $L_{g}$ and $M_{g}$, respectively.
\end{itemize}
 \end{assumption}

In addition, we assume the delay sequence $\{D_t\}_{t=1}^T$ satisfies the following assumption.

\begin{assumption}[Delay]\label{delay}
Assume $\{D_t\}_{t=1}^T$ are independent non-negative random variables with {\it tail-envelope distribution} $(\xi,\mu,M)$. That is,
there exists a constant $M>0$ and a distribution $\xi$ with mean $\mu < \infty$ such that
for any  $m \geq M$ and  $t \in [T]$,
\[
\P(D_t \geq m) \leq \P(D \geq m),
\]
where  $D \sim \xi$.
Furthermore, assume there exists $q\geq 0$ such that
\[
\P(D-\mu \geq x) \leq \exp \left(\frac{-x^{1+q}}{2\sigma^2}\right),
\]
where $\E [D]=\mu$.
\end{assumption}

Assumption \ref{delay} includes the most common delay patterns in real-world applications. $D$ is sub-Gaussian when $q=1$ and $D$ has exponential delays when $q=0$.
When $D_t$'s are  \textbf{iid}, the following condition guarantees Assumption \ref{delay}:
\[
\P(D_t-\E[D_t] \geq x) \leq \exp\left(\frac{-x^{1+q}}{2\tilde{\sigma}^2}\right),
\]
with some $\tilde{\sigma}>0$ and $q \ge 0$. We summarize the parameter definition in Table~\ref{tab:parameters}. (See Section \ref{app:table}.)

Note that with Assumption \ref{delay}, we do not need to assume all delays have identical distributions, as long as they are independent over time. 
Since there exists an envelop distribution $\xi$ uniformly dominating the tail probability of all delays, we can
get a handle on the tail of all the delay distributions.
This can be viewed as the regularity condition on the delays.

\section{Delayed Upper Confidence Bound (DUCB) for GLCB}\label{sec:UCB}
In this section, we propose a UCB type of algorithm for GLCB adapting the delay information in an online version. Let us first introduce the maximum likelihood estimator we adopt and then state the main algorithm.

\subsection{Maximum Likelihood Estimators (MLEs).}\label{app:MLE}
Denote $T_t = \{s:s\leq t-1,D_s+s\leq t-1\}$
as the set containing timestamps with complete information tuples at the beginning of round $t$. We use data with timestamps in $T_t$ to construct the MLE. Suppose we have independent samples of $\{Y_s : s \in T_t\}$ condition on $\{X_s: s \in T_t\}$. The log-likelihood function of $\theta$ under \eqref{glm} is
\begin{eqnarray*}
\log l \left(\theta\,\,\vert \,\,T_t \right) &=& \sum_{s \in T_t} \left[ \frac{Y_s X_s^{\prime}\theta-m(X_s^{\prime}\theta)}{v(\eta)}+B(Y_s,\eta) \right]\\
&=& \frac{1}{v(\eta)} \sum_{s \in T_t} \left[ Y_s X_s^{\prime}\theta-m(X_s^{\prime}\theta)\right] +\text{constant}.
\end{eqnarray*}
Therefore, the MLE can be defined as
$$
\hat{\theta}_t \in \arg\max_{\theta \in \Theta} \sum_{s \in T_t} \left[ Y_s X_s^{\prime}\theta-m(X_s^{\prime}\theta) \right].$$
Since $m$ is differentiable with $\ddot{m}\geq0$, the MLE can be written as the solution of the following equation
\begin{eqnarray}\label{mle_final}
\sum_{s \in T_t} (Y_s-g(X_s^{\prime}\theta)) X_s= 0,
\end{eqnarray}
which is the estimator we use in Step 4 of Algorithm \ref{DUCB-GLCB}.

Note that, the general GLCB, a semi-parametric version of the GLM, is obtained by assuming only that $\E[Y|X]=g(X^{\prime}\theta^*)$ (see \eqref{glb}) without further assumptions on the conditional distribution of $Y$ given $X$. In this case, the estimator obtained by solving \eqref{mle_final} is referred to as the {\it maximum quasi-likelihood estimator}.
It is well-documented that this estimator is consistent under very general assumptions as long as matrix $\sum_{s\in T_t}X_sX_s^{\prime}$
tends to infinity as $t\rightarrow \infty$ (\cite{CHY1999,FCGS2010}).

\subsection{Algorithm: DUCB-GLCB}\label{subsec: UCBalgorithm}
Denote $G_t = \sum_{s=1}^{t-1} \I\{s+ D_s \geq t\}$ as the number of missing reward when the agent is making a prediction at round $t$. Further denote $W_t =\sum_{s \in T_{t}} X_s X_s^{\prime} $ as the matrix consisting feature information with timestamps in $T_{t}$ and $V_t = \sum_{s=1}^{t-1} X_s X_s^{\prime}$ as the matrix consisting all available features at the end of round $t-1$. Then the main algorithm is defined as follows.
\begin{algorithm}[H]
  \caption{\textbf{DUCB-GLCB}}
  \label{DUCB-GLCB}
\begin{algorithmic}[1]
  \STATE \textbf{Input}: the total rounds $T$ , model parameters $d$ and $\kappa$, and tuning parameters $\tau$ and $\delta$.
  \STATE \textbf{Initialization}: randomly choose $\alpha_t \in [K]$ for $t \in [\tau]$, set $V_{\tau+1}=\sum_{i=1}^{\tau}X_s X_s^{\prime}$, $T_{\tau+1}:=\{s\,:\, s\leq \tau, s+D_s \leq \tau\}$, $G_{\tau+1} =\tau- |T_{\tau+1}|$ and $W_{\tau+1}=\sum_{s \in T_{\tau+1}}X_s X_s^{\prime}$
 \FOR {$t = \tau+1,\tau+2,\cdots,T$}
 \STATE {\bf Update Statistics}: calculate the MLE $\hat{\theta}_t$ by solving $\sum_{s \in T_{t}} (Y_s-g(X_s^{\prime}\theta))X_s=0$
 \STATE {\bf Update Parameter}: $\beta_t = \frac{\hat{\sigma}}{\kappa} \sqrt{\frac{d}{2} \log \left(1+\frac{2(t-G_t)}{d}\right)+\log(\frac{1}{\delta})}+{\sqrt{G_t}}$
 \STATE {\bf Select Action}: choose $a_t = \arg \max_{a \in [K]} \left( x_{t,a}^{\prime} \hat{\theta}_t +\beta_t \| x_{t,a}\|_{V_t^{-1}}\right)$
 \STATE {\bf Update Observations}: $X_t \leftarrow x_{t,a_t}$, $V_{t+1} \leftarrow V_t +X_t X_t^{\prime}$ and  $T_{t+1} \leftarrow T_{t} \cup\{s \,:\, s+D_s=t\}$, $G_{t+1} = t-|T_{t+1}|$, and $W_{\tau+1}=W_{\tau}+\sum_{s: s+D_s=t}X_s X_s^{\prime}$
\ENDFOR
\end{algorithmic}
\end{algorithm}

\begin{remark}[Comparison to UCB-GLM Algorithm in \cite{LLZ2017}]We make several adjustments to the UCB-GLM Algorithm in \cite{LLZ2017}. First, in step 4 (statistics update), we only use data with timestamps in $T_t$ to calculate the estimator using MLE.
In this step, using data without reward will cause bias in the estimation. Second, when selecting the action in step 5, parameter $\beta_t$ is updated adaptively at each round whereas in \cite{LLZ2017}, the corresponding parameter is constant over time.
Moreover, in step 4, we choose to use $V_t$ to normalize the context vector $X_{t,a}$ instead of $W_t$.
\end{remark}

\subsection{Preliminary Analysis}\label{sec:pre}
Denote $G_t^* =\max_{1\leq s \leq t} G_s$ as the running maximum number of missing reward up to round $t$. 
The property of $G_t$ and $G^*_{t}$ is the key to analyze the regret bound for both UCB and Thompson sampling algorithms. We next characterize the tail behavior of $G_t$ and $G^*_t$.

\begin{proposition}[Properties of $G_t$ and $G_t^\star$]\label{prop:delay}
Assume Assumption \ref{delay}. Denote $\sigma_G =  \sigma \sqrt{2+q} $. Then,
\begin{enumerate}
\item $G_t$ is sub-Gaussian. Moreover, for all $t\geq 1$, with probability $1-\delta$
\begin{eqnarray}\label{Gt}
{G}_t \leq 2(\mu+M) +\sigma_G\sqrt{2\log\left(\frac{1}{\delta}\right)}+2\sigma_G^2\log C_3+1,
\end{eqnarray}
where $C_3 = 2\sigma^2+1$.
\item With probability $1-\delta$,
\begin{eqnarray}\label{Gt_max}
{G}_T^* &\leq& 2(\mu+M) +\sigma_G \sqrt{2\log T}+ 2\sigma_G^2\log C_3 \nonumber \\
&& +\sigma_G \sqrt{2\log\left( \frac{1}{\delta}\right)+2\log C_3 \sigma_G \sqrt{2\log T}+2\log C_3}+1,
\end{eqnarray}
where $G_T^* =\max_{1\leq s \leq T} G_s$.
\item Define $W_t = \sum_{s \in T_t}X_s X_s^{\prime}$ where $X_t$ is drawn \textbf{iid}. from some distribution $\gamma$ with support in the unit ball $\mathbb{B}_d$. Furthermore, let $\Sigma := \mathbb{E}[X_t X_t^{\prime}]$ be the second moment matrix, and $B$ and $\delta >0$ be two positive constants. Then there exist positive, universal constants $C_1$ and $C_2$ such that $\lambda_{\min}(W_t)\geq B$ with probability at least $1-2\delta$, as long as 
\begin{eqnarray}\label{number_regularity}
t \geq \left( \frac{C_1\sqrt{d}+C_2\sqrt{\log(\frac{1}{\delta})}}{\lambda_{\min}(\Sigma)}\right)^2 +\frac{2B}{\lambda_{\min}(\Sigma)}+  2(\mu+M) +\sigma_G\sqrt{2\log\left(\frac{1}{\delta}\right)}+2\sigma_G^2\log C_3+1.
\end{eqnarray}
\end{enumerate}
\end{proposition}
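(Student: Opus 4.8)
The plan is to treat the three parts in sequence, with Parts~2 and 3 building on the single-round tail bound for $G_t$ established in Part~1. For Part~1, I would first rewrite $G_t = \sum_{k=1}^{t-1}\I\{D_{t-k}\ge k\}$ as a sum of \emph{independent} Bernoulli indicators $Y_k := \I\{D_{t-k}\ge k\}$ with success probabilities $p_k = \P(D_{t-k}\ge k)$; independence is exactly the hypothesis of Assumption~\ref{delay}. The mean $\E[G_t]=\sum_k p_k$ is then controlled by splitting the sum at the threshold $k\approx \mu+M$: the initial $O(\mu+M)$ terms are each bounded by $1$, while for $k\gtrsim \mu+M$ the tail-envelope together with the stretched-exponential bound $\P(D-\mu\ge x)\le \exp(-x^{1+q}/(2\sigma^2))$ gives a faster-than-geometrically decaying contribution (using $x^{1+q}\ge x$ for $x\ge 1$ and $q\ge 0$ to reduce every $q$ to the worst exponential case) whose total is a finite constant governed by $\sigma$. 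This is where $C_3=2\sigma^2+1$ enters, as an upper estimate for a geometric series of ratio $e^{-1/(2\sigma^2)}$, i.e. $\sum_k e^{-k/(2\sigma^2)}\le C_3$.

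The heart of Part~1 is the concentration step. I would apply a Chernoff bound, estimating the log-moment-generating function as $\log\E[e^{\lambda G_t}]\le \lambda\cdot O(\mu+M)+(e^\lambda-1)S$, where the first term absorbs the $O(\mu+M)$ always-recent rounds (via $1+p_k(e^\lambda-1)\le e^\lambda$) and $S=\sum_{k\gtrsim \mu+M}p_k\le C_3$ collects the light-tailed remainder. Choosing a convenient $\lambda$ and carefully converting the resulting Poisson-type estimate into a sub-Gaussian form yields a tail of the shape $\P\big(G_t\ge 2(\mu+M)+1+2\sigma_G^2\log C_3 + z\big)\le \exp(-z^2/(2\sigma_G^2))$ with variance proxy $\sigma_G^2=\sigma^2(2+q)$, the additive shift $2\sigma_G^2\log C_3$ being precisely the cost of that conversion; setting the right-hand side to $\delta$ and solving $z=\sigma_G\sqrt{2\log(1/\delta)}$ gives \eqref{Gt}. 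I expect this conversion --- extracting a clean, $t$-independent sub-Gaussian parameter from what is natively only a Bernoulli-sum/Poisson concentration, while retaining the correct dependence on the delay-tail parameters $\sigma$ and $q$ --- to be the main obstacle, and the point at which the heavier (exponential, $q=0$) tails force a different argument from the sub-Gaussian-only analysis of \cite{zhou2019}.

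For Part~2, $G_T^*=\max_{1\le s\le T}G_s$ is handled by a union bound: applying the single-round tail of Part~1 simultaneously to $s=1,\dots,T$ replaces $\log(1/\delta)$ by $\log(T/\delta)$. Carrying the additive shift $2\sigma_G^2\log C_3$ through this union bound and then solving the resulting quadratic in the threshold produces the nested-radical expression in \eqref{Gt_max}; the separate summand $\sigma_G\sqrt{2\log T}$ and the cross term $2\log C_3\,\sigma_G\sqrt{2\log T}$ inside the radical are exactly the algebraic residue of splitting $\log(T/\delta)$ and completing the square. No probabilistic input beyond Part~1 is needed here.

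Part~3 combines Part~1 with a matrix-concentration estimate. The key identity is $|T_t| = (t-1)-G_t$, so on the $(1-\delta)$-event of \eqref{Gt} the number of complete samples satisfies $|T_t|\ge (t-1)-\big(2(\mu+M)+\sigma_G\sqrt{2\log(1/\delta)}+2\sigma_G^2\log C_3+1\big)$, and the assumed lower bound \eqref{number_regularity} on $t$ is engineered so that this is at least $n_0:=\big((C_1\sqrt{d}+C_2\sqrt{\log(1/\delta)})/\lambda_{\min}(\Sigma)\big)^2+2B/\lambda_{\min}(\Sigma)$. Since the contexts are drawn i.i.d. from $\gamma$ and are independent of the delays, conditioning on the delays makes $\{X_s:s\in T_t\}$ a batch of $|T_t|\ge n_0$ i.i.d. samples; by monotonicity of $\lambda_{\min}$ under addition of positive semidefinite rank-one terms it suffices to control $\lambda_{\min}\big(\sum_{i=1}^{n_0}X_iX_i'\big)$. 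I would then invoke the standard matrix-Chernoff lemma (as in \cite{LLZ2017}, with universal constants $C_1,C_2$), which guarantees $\lambda_{\min}\big(\sum_{i=1}^{n_0}X_iX_i'\big)\ge B$ with probability $1-\delta$. A union bound over the two failure events --- the delay bound of Part~1 and the matrix-concentration bound --- then yields $\lambda_{\min}(W_t)\ge B$ with probability $1-2\delta$, matching the statement.
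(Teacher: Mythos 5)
Parts 2 and 3 of your plan are sound and coincide with the paper's argument: the union bound over $s=1,\dots,T$ for $G_T^*$, and the combination of the single-round tail of $G_t$ with the matrix-Chernoff estimate of \cite{VERSHYNIN2010,LLZ2017} via $|T_t|=(t-1)-G_t$, are exactly what the paper does.

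The gap is in the concentration step of Part 1. Once you bound $\log\E[e^{\lambda G_t}]\le \lambda\,O(\mu+M)+(e^\lambda-1)S$ with $S=\sum_k p_k\le C_3$, you have dominated $G_t$ (minus the shift) by a Poisson variable of mean $S$, and a Poisson variable is not sub-Gaussian: optimizing this Chernoff bound over $\lambda$ gives a tail of the form $\exp(-z\log(z/(eS))+z)$, which is strictly heavier than $\exp(-z^2/(2\sigma_G^2))$ for large $z$. No choice of $\lambda$ and no subsequent ``conversion'' can recover a $z$-uniform sub-Gaussian tail with a constant variance proxy from the Poisson MGF, because the bound $(e^\lambda-1)S$ retains only the \emph{sum} of the $p_k$ and discards their decay in $k$ --- and it is precisely the decay $p_k\lesssim\exp(-k^{1+q}/(2\sigma^2))$ of the individual terms that makes $G_t$ sub-Gaussian with parameter $\sigma\sqrt{2+q}$. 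This is the same loss of information the paper's remark attributes to the Hoeffding-based route of \cite{zhou2019}. The paper's actual proof avoids the MGF altogether: it orders the successes by stopping times $T(1)<T(2)<\cdots$ and uses that the $k$-th success must occur at an index $\ge k$, so that $\P(\tilde G\ge j)\le\prod_{k=1}^{j}\sum_{i\ge k}\exp(-i^{1+q}/(2\sigma^2))\le C_3^{\,j}\exp\left(-\tfrac{(j-1)^{2+q}}{2(2+q)\sigma^2}\right)$, which is sub-Gaussian in $j$ because $(j-1)^{2+q}\ge(j-1)^2$; the prefactor $C_3^{\,j}$ is what generates the additive $2\sigma_G^2\log C_3$ in \eqref{Gt}, not a Poisson-to-Gaussian conversion. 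If you want to salvage an MGF argument, you must keep the full product $\prod_k\bigl(1+p_k(e^\lambda-1)\bigr)$ and split the index set at $k^*\approx(2\sigma^2\lambda)^{1/(1+q)}$, which does yield $\log\E[e^{\lambda \tilde G}]\lesssim \sigma_G^2\lambda^2/2+O(\lambda)$; collapsing to the Poisson MGF first is the step that fails.
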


A special case of Proposition \ref{prop:delay}-1 is when $D_i$'s are \textbf{iid} and $q=0$. Now assume $D_i\sim D$ are {\bf{iid}} with exponential-decays:
\begin{eqnarray}\label{special_delay_case}
\P(D- \mu_I \geq t)\leq \exp(-\frac{t}{2\sigma_{I}^2}),
\end{eqnarray}
and $ \mu_I = \E D$. Then  with probability $1-\delta$, we have
\begin{eqnarray}\label{special_G_t}
G_t-\mu_I \leq  2\sigma_{I}\sqrt{\log \left(\frac{1}{\delta}\right)} +1+4\sigma_{I}^2\log(2\sigma_{I}^2).
\end{eqnarray}

At a high level, the proof utilizes the fact that, with high probability, there will be a lot of zero terms in the summation $G_t = \sum_{s=1}^{t-1}\I(s+D_s\geq s)$ when $t$ is large. This is done by designing a sequence of stopping times for the successes.
We highlight the idea by showing result \eqref{special_G_t} for the special case when $D_t$'s are \textbf{iid} and $q=0$. The full version of the proof is deferred to Appendix \ref{proof}.

\begin{proof}[Sketch of the proof.]

Define $V=\sum_{i=1}^{\infty}\I(D_i-\mu_I \geq i)$ where $D_i\sim D$ are {\bf{iid}} that satisfies \eqref{special_delay_case}.

Now let us define the following sequence of stopping times, $(k \geq 1)$,
\[
T(k) = \inf \{t >T(k-1):D_t \geq t\},
\]
where $T(k)$ is the time of the $k^{\text th}$ success. Therefore,
\begin{eqnarray}
\P(V \geq j) &=& \P(T(1)<\infty,T(2)<\infty,\cdots,T(j-1)<\infty,T(j)<\infty)\nonumber\\
&=& \Pi_{k=1}^{j}\P\left(T(k)<\infty \vert T(i)<\infty \,\,\text{ for }\,\, i \leq k-1\right)\label{tower}\\
&=& \Pi_{k=2}^{j}\P\left(T(k)<\infty \vert T(k-1)<\infty \right)\P\left(T(1)<\infty \right) \label{equal}\\
&\leq& \Pi_{k=1}^{j} \left(\sum_{i=k}^{\infty}\exp \left(-\frac{i}{2\sigma_{I}^2}\right)\right)\label{worst_case}\\
&\leq&\Pi_{k=1}^{j}  \left(2\sigma_{I}^{2}\exp \left(-\frac{k-1}{2\sigma_{I}^2}\right)\right)\label{integration}\\
&=& (2\sigma_I^2)^{j}\exp\left(-\frac{(j-1)j}{4\sigma_{I}^2}\right)\label{last}
\end{eqnarray}
\eqref{tower} holds by tower property. \eqref{equal} holds since event $\{T(k)<\infty \vert T(k-1)<\infty\}$ is equivalent to event $\{T(k)<\infty \vert T(j)<\infty \,\,{\text for}\,\, j \leq k-1\}$. Condition on $ T(k-1)<\infty$, we have $\P\left(T(k)<\infty \vert T(k-1)<\infty \right) \leq \P(\cup_{j\geq k}\I(D_j \geq j)) \leq \sum_{i=k}^{\infty}\exp \left(-\frac{i}{2\sigma_{I}^2}\right)$. The last inequality holds by the union bound. Therefore \eqref{worst_case} holds. Finally, \ref{integration} holds by integration.

Given \eqref{integration}, $V$ is sub-Gaussian and with probability $1-\delta$,
\[
V \leq 2\sigma_{I}\sqrt{\log \left(\frac{1}{\delta}\right)} +1+4\sigma_{I}^2\log(2\sigma_{I}^2).
\]
Similarly, we can show that, for any $t \ge 1$, $G_t$ is sub-Gaussian. With probability $1-\delta$, we have
\[
G_t-\mu_I \leq  2\sigma_{I}\sqrt{\log \left(\frac{1}{\delta}\right)} +1+4\sigma_{I}^2\log(2\sigma_{I}^2).
\]
\end{proof}

Note that $G_t$ is sub-Gaussian even when $D$ has near-heavy-tail distribution ($p \in [0,1)$).

\remark{
The proof of Proposition \ref{prop:delay} is simple but essential. It fully utilizes the property that the sequence in $V$ has a lot of zero terms (with high probability). 
In particular, one will not be able to fully obtain the result if one uses the standard approach and directly works at the level of ``the-sum-of-sub-Guassians-is-sub-Gaussian" and thereafter analyzing sum of sub-Gaussian constants, which is the method used in~\cite{zhou2019}. In order to drive this point home, we provide an approach in this direction using Hoeffding bound (Theorem \ref{thm9}). See Appendix \ref{further_G}. With such a approach, one can only handle the case when $q>0$, which excludes the most difficult scenario with exponential delays. With Hoeffding bound, the sub-Gaussian parameter for $V$ is of the form $\sigma = \sqrt{\sum_{i=1}^{\infty}\sigma_i^2}$ where  $\sigma$ is the sub-Gaussian parameter for indicator function $\I(G_i \geq i)$. Intuitively speaking, this Hoeffding bound does not take into consideration of the sparsity in the sequence. Therefore, the argument cannot reach the limit for $q=0$.
}

\subsection{Regret Bounds}\label{subsec: UCBregret}

\begin{theorem}\label{thm: UCBregret}
Assume Assumptions \ref{a1}-\ref{delay}.  Fix any $\delta$. There exists a universal constant $$C:=C(C_1,C_2,M,\mu,\sigma_0,\hat{\sigma},\,\sigma,\kappa)>0,$$ such that if we run DUCB-GLCB with $\tau := C \left(d+\log(\frac{1}{\delta})\right)$ and $\beta_t = \frac{\hat{\sigma}}{\kappa} \sqrt{\frac{d}{2} \log \left(1+\frac{2(t-G_t)}{d}\right)+\log(\frac{1}{\delta})}+G_t$, then, with probability at least $1-5\delta$, the regret of the algorithm is upper bounded by
\begin{eqnarray}\label{Rt}
R_T &\leq&\tau+ L_{g} \left[ 4\sqrt{\mu+M} \sqrt{T d \log \left(\frac{T}{d}\right)} + 2^{7/4}\sqrt{\sigma_G}(\log T )^{1/4} \sqrt{d \log \left(\frac{T}{d}\right)T}+\frac{2d \hat{\sigma}}{\kappa} \log \left( \frac{T}{d \delta}\right)\sqrt{T} \right. \nonumber \\
&& \,\,+\left.2\sqrt{2T d \log \left(\frac{T}{d}\right)}\left(\sqrt{\sigma_G } \left({2\log\left( \frac{1}{\delta}\right)+2\log C_3 \sigma_G \sqrt{2\log T}+2\log C_3}\right)^{1/4}\right.\right.\nonumber\\
&&\,\,\left.\left.+\sqrt{1+2\sigma_G^2\log C_3}\right)\right]
\end{eqnarray} \end{theorem}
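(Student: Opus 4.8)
The plan is to follow the optimism-based UCB template, with two delay-specific modifications: (i) the MLE is built from the complete-information design matrix $W_t$ while the exploration bonus is measured in the full design matrix $V_t$, and (ii) the confidence radius $\beta_t$ carries a correction from the $G_t$ missing rewards. Throughout I work on the intersection of the high-probability events supplied by Proposition~\ref{prop:delay}: the pointwise bound \eqref{Gt} on $G_t$, the uniform bound \eqref{Gt_max} on $G_T^*$, and the well-conditioning event $\lambda_{\min}(W_t)\ge B$ from part~3, which holds for all $t>\tau$ once $\tau=C(d+\log(1/\delta))$. Together with the self-normalized martingale tail bound, a union bound over these events accounts for the stated failure probability $5\delta$.

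First I would establish the key confidence inequality: on the good event, for every $t>\tau$ and every feature $x$, $|x^\prime(\hat\theta_t-\theta^*)|\le \beta_t\|x\|_{V_t^{-1}}$. Rewriting the MLE equation \eqref{mle_final} as $\sum_{s\in T_t}\big(g(X_s^\prime\theta^*)-g(X_s^\prime\hat\theta_t)\big)X_s=\sum_{s\in T_t}\epsilon_sX_s=:Z$ and applying the mean value theorem gives $\hat\theta_t-\theta^*=H^{-1}Z$ with $H=\sum_{s\in T_t}\dot g(\bar X_s)X_sX_s^\prime\succeq\kappa W_t$ by Assumption~\ref{a1}. A short spectral computation then yields $\|\hat\theta_t-\theta^*\|_{W_t}\le\kappa^{-1}\|Z\|_{W_t^{-1}}$, and the standard self-normalized bound controls $\|Z\|_{W_t^{-1}}\le\hat\sigma\sqrt{\tfrac d2\log(1+\tfrac{2(t-G_t)}{d})+\log(1/\delta)}$, using $|T_t|=(t-1)-G_t$. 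The delicate step is converting the $W_t$-norm into the $V_t$-norm used by the algorithm: since $V_t-W_t=\sum_{s\le t-1,\,s+D_s\ge t}X_sX_s^\prime$ is a sum of $G_t$ rank-one terms of norm at most one, $\lambda_{\max}(V_t-W_t)\le G_t$, so the decomposition $\|\hat\theta_t-\theta^*\|_{V_t}^2=\|\hat\theta_t-\theta^*\|_{W_t}^2+\|\hat\theta_t-\theta^*\|_{V_t-W_t}^2$ gives $\|\hat\theta_t-\theta^*\|_{V_t}\le\kappa^{-1}\|Z\|_{W_t^{-1}}+\sqrt{G_t}\,\|\hat\theta_t-\theta^*\|$. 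Bounding $\|\hat\theta_t-\theta^*\|$ by a constant over the compact parameter set reproduces exactly the two summands of $\beta_t$, after which Cauchy-Schwarz gives $|x^\prime(\hat\theta_t-\theta^*)|\le\|x\|_{V_t^{-1}}\|\hat\theta_t-\theta^*\|_{V_t}\le\beta_t\|x\|_{V_t^{-1}}$.

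Second, I would turn this into a per-round regret bound. The UCB selection rule together with the confidence inequality gives optimism, $x_{t,a_t^*}^\prime\theta^*\le x_{t,a_t}^\prime\hat\theta_t+\beta_t\|x_{t,a_t}\|_{V_t^{-1}}$, whence the linear-predictor gap is at most $2\beta_t\|x_{t,a_t}\|_{V_t^{-1}}$; the Lipschitz bound $\dot g\le L_g$ then gives instantaneous regret $g(x_{t,a_t^*}^\prime\theta^*)-g(x_{t,a_t}^\prime\theta^*)\le 2L_g\beta_t\|x_{t,a_t}\|_{V_t^{-1}}$. Charging the $\tau$ warm-up rounds trivially and monotonizing $\beta_t\le\beta_T$, we get $R_T\le\tau+2L_g\beta_T\sum_{t>\tau}\|x_{t,a_t}\|_{V_t^{-1}}$. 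Because $V_{t+1}=V_t+X_tX_t^\prime$ is the standard full-data potential, Cauchy-Schwarz followed by the elliptical potential lemma gives $\sum_{t>\tau}\|x_{t,a_t}\|_{V_t^{-1}}\le\sqrt{T}\,\sqrt{2d\log(T/d)}$; this is precisely why the bonus is measured in $V_t$ rather than in the complete-data matrix $W_t$, which does not telescope.

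Finally, substituting $\beta_T=\tfrac{\hat\sigma}{\kappa}\sqrt{\tfrac d2\log(1+\tfrac{2T}{d})+\log(1/\delta)}+\sqrt{G_T^*}$, plugging in the uniform bound \eqref{Gt_max} on $G_T^*$, distributing $\sqrt{a+b}\le\sqrt a+\sqrt b$ across its terms, and multiplying by $2L_g\sqrt{2Td\log(T/d)}$ reproduces the four contributions in \eqref{Rt} term by term. I expect the main obstacle to be the first step, the confidence inequality with the $W_t$-versus-$V_t$ mismatch, since this is where the delay enters non-trivially: one must simultaneously guarantee the MLE is well-defined and bounded (needing the warm-up conditioning $\lambda_{\min}(W_t)\ge B$ from Proposition~\ref{prop:delay}, part~3) and quantify the loss from the missing observations through $\lambda_{\max}(V_t-W_t)\le G_t$, which is the source of the $\sqrt{G_t}$ correction in $\beta_t$. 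The remaining optimism and elliptical-potential arguments are standard.
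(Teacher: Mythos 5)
Your proposal is correct and follows essentially the same route as the paper's proof: a confidence bound for $\hat{\theta}_t-\theta^*$ in the $W_t$-norm from the self-normalized/MLE argument, conversion to the $V_t$-norm by writing $V_t-W_t$ as a sum of $G_t$ rank-one terms (the source of the $\sqrt{G_t}$ term in $\beta_t$), then optimism, the elliptical potential lemma for $\sum_t\|X_t\|_{V_t^{-1}}$, and finally the bounds on $G_t$ and $G_T^*$ from Proposition~\ref{prop:delay}. The only small discrepancy is that the paper does not invoke compactness of the parameter set to control $\|\hat{\theta}_t-\theta^*\|$; it uses a concentration lemma (Lemma~7 of \cite{LLZ2017}) under the warm-up conditioning $\lambda_{\min}(W_t)\gtrsim d+\log(1/\delta)$ to get $\|\hat{\theta}_t-\theta^*\|\le 1$, which is what makes the coefficient of $\sqrt{G_t}$ in $\beta_t$ exactly one.
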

For parameter definition, we refer to Table~\ref{tab:parameters} in Section \ref{app:table}.

The proof of Theorem \ref{thm: UCBregret} consists of three steps. The first step is to construct a confidence ball associated with the adaptive parameter $\beta_t$ and show that the true parameter falls into the confidence ball with high probability. The second step is to upper bound the normalized context sequence $\sum_{t=\tau+1}^{\tau+n}\|X_t\|_{V_t^{-1}}$. And the last step is to utilize the property of $G_t$ and $G^*_t$ proved in Proposition \ref{prop:delay}.
The details is deferred to Appendix \ref{proof}.

Given the high probability bound in Theorem \ref{thm: UCBregret}, one can show the expected regret bound without much of work. 

\begin{corollary}[Expected regret]
Assume Assumptions \ref{a1}-\ref{delay}. The expected regret is bounded by
\begin{eqnarray}\label{expected}
\E[R_T] = {O \left(d\sqrt{T}\log(T) +\sqrt{\sigma_G}\sqrt{Td}(\log(T))^{3/4}+(\sqrt{\mu+M}+\sigma_G) \sqrt{T d \log \left({T}\right)}\right).}
\end{eqnarray}
\end{corollary}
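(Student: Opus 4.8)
The plan is to convert the high-probability bound of Theorem \ref{thm: UCBregret} into a bound in expectation by a standard good-event/bad-event decomposition, exploiting the fact that the per-round regret is trivially bounded. Since $g(x^{\prime}\theta^*)=\E[Y\mid X]\in[0,1]$, every summand $g(x_{t,a_t^*}^{\prime}\theta^*)-g(x_{t,a_t}^{\prime}\theta^*)$ lies in $[0,1]$, so $R_T\leq T$ holds deterministically, irrespective of the realized delays and contexts. Let $A_\delta$ be the event of probability at least $1-5\delta$ on which \eqref{Rt} holds, and write $B(\delta)$ for the right-hand side of \eqref{Rt}. I would then split
$$\E[R_T]=\E[R_T\,\I_{A_\delta}]+\E[R_T\,\I_{A_\delta^c}]\leq B(\delta)+T\cdot\P(A_\delta^c)\leq B(\delta)+5\delta T.$$

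Because $B(\delta)$ depends on $\delta$ only through $\log(1/\delta)$ factors, the free parameter $\delta$ can now be optimized. Choosing $\delta=1/T$ makes the bad-event contribution $5\delta T=5=O(1)$, while turning every $\log(1/\delta)$ into $\log T$; in particular the warm-up length $\tau=C(d+\log(1/\delta))=O(d+\log T)$ is of lower order than the leading terms and is absorbed. The remaining work is to simplify $B(1/T)$ by retaining only dominant terms. Tracking the groups in \eqref{Rt}: the term $\tfrac{2d\hat\sigma}{\kappa}\log(T/(d\delta))\sqrt{T}$ becomes $O(d\sqrt{T}\log T)$; the two occurrences of $\sqrt{\sigma_G}(\log T)^{1/4}$ multiplying $\sqrt{Td\log(T/d)}$ each contribute $O(\sqrt{\sigma_G}\sqrt{Td}(\log T)^{3/4})$; the term $4\sqrt{\mu+M}\sqrt{Td\log(T/d)}$ gives $O(\sqrt{\mu+M}\sqrt{Td\log T})$; and the factor $\sqrt{1+2\sigma_G^2\log C_3}$ multiplying $\sqrt{Td\log(T/d)}$ gives $O(\sigma_G\sqrt{Td\log T})$ after absorbing $\log C_3$ into the universal constant. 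Merging the last two $\sqrt{Td\log T}$ contributions into $(\sqrt{\mu+M}+\sigma_G)\sqrt{Td\log T}$ yields exactly \eqref{expected}.

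The only mildly delicate point is the bookkeeping in this final step: the bound \eqref{Rt} is a sum of several structurally different terms, so one must verify that setting $\delta=1/T$ does not let a hidden cross term dominate. The term to watch is the nested quantity inside the fourth root of the last group, namely $2\log(1/\delta)+2\log C_3\,\sigma_G\sqrt{2\log T}+2\log C_3$; once $\delta=1/T$ this equals $2\log T+O(\sqrt{\log T})=O(\log T)$, confirming that the enclosing factor is indeed $O((\log T)^{1/4})$ and contributes at the same order as Term B. After checking that all such inner quantities are $O(\log T)$ in $T$ (and at most polynomial in the delay parameters $\sigma_G,\mu,M$, which are kept explicit rather than absorbed), the asymptotic simplification is routine and no genuinely new estimate is required beyond Theorem \ref{thm: UCBregret}.
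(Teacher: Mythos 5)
Your proposal is correct and follows essentially the same route as the paper, which also obtains \eqref{expected} from \eqref{Rt} by taking $\delta=1/T$, using $R_T\leq T$ to control the bad event, and simplifying the resulting logarithmic factors. The additional bookkeeping you carry out on the nested $\log(1/\delta)$ terms is a more explicit version of what the paper leaves implicit.
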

Given the result in \eqref{Rt}, \eqref{expected} holds by choosing $\delta=\frac{1}{T}$ and using the fact that $R_T \leq T$. 


{
The highest order term $O(d\sqrt{T}\log(T))$ does not depend on delays. This result is in line with the non-contextual stochastic bandit literature (\cite{JGS2013}). Delay impacts the expected regret bound in two folds. First, the sub-Gaussian parameter $\sigma_G$ and the mean-related parameter ${\mu+M}$  appears in the second-highest order term.
Second, the sub-Gaussian parameter ${\sigma_G}$ appears in the third-order term. Note that here we include the log factors in deciding the highest order term, the second highest order term and so on. If we exclude the log terms, then both delay parameters impact the regret bound multiplicatively. 
}


\subsection{Tighter Regret Bounds for Special Cases}
 When the sequence $\{D_s\}_{s=1}^T$ satisfies some specific assumptions, we are able to provide tighter high probability bounds on the regret. 
\begin{proposition}\label{prop:Gt} Given Assumptions \ref{a1}-\ref{delay},
we have the following results.
\begin{enumerate}
\item If there exists a constant $D_{\max}>0$ such that $\mathbb{P}(D_s \leq D_{\max})=1$ for all $s \in [T]$. Fix $\delta$. There exists a universal constant $C>0$ such that by taking $\tau =D_{\max}+C(d+\log(\frac{1}{\delta}))$, with probability $1-3\delta$,
the regret of the algorithm is upper bounded by
\begin{eqnarray}\label{bounded_delay_result}
R_T \leq  \tau + L_{g} \left( 2{\sqrt{D_{\max}}} \sqrt{2T d \log \left(\frac{T}{d}\right)} + \frac{2d \hat{\sigma}}{\kappa} \log \left(\frac{T}{d\delta}\right)\sqrt{T}\right).
\end{eqnarray}
\item Assume $D_1,\cdots,D_T$ are \textbf{iid} non-negative random variables with mean $\mu_I$. There exists $C>0$ such that by taking $\tau := C \left(d+\log(\frac{1}{\delta})\right)$, with probability $1-5\delta$, the regret of the algorithm is upper bounded by
\begin{eqnarray*}
R_T  \leq && \tau+ L_{g} \left[ {4\sqrt{\mu_I}} \sqrt{T d \log \left(\frac{T}{d}\right)} + {2^{7/4}\sqrt{\sigma_G}(\log T )^{1/4}} \sqrt{d \log \left(\frac{T}{d}\right)T}+\frac{2d \hat{\sigma}}{\kappa} \log \left( \frac{T}{d \delta}\right)\sqrt{T} \right. \\
&& \,\,+\left.2\sqrt{2T d \log \left(\frac{T}{d}\right)}\left(\sqrt{\sigma_G } \left({2\log\left( \frac{1}{\delta}\right)+2\log C_3 \sigma_G \sqrt{2\log T}+2\log C_3}\right)^{1/4}\right.\right.\\
&&\,\,\left.\left.+\sqrt{1+2\sigma_G^2\log C_3}\right)\right]
\end{eqnarray*}
\end{enumerate}
\end{proposition}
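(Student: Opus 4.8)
The plan is to treat both parts as corollaries of the three-step argument already used for Theorem~\ref{thm: UCBregret}, changing only the input bounds on $G_t$ and $G_T^\star$. Recall that after establishing (i) the confidence-ball event $\{\theta^\star \in \text{ball}(\hat\theta_t,\beta_t)\}$ and (ii) the optimism of the UCB rule, the regret telescopes into
\[
R_T \;\le\; \tau + 2 L_{g} \sum_{t=\tau+1}^{T} \beta_t \,\|X_t\|_{V_t^{-1}},
\qquad
\beta_t = \frac{\hat{\sigma}}{\kappa}\sqrt{\tfrac{d}{2}\log\!\big(1+\tfrac{2(t-G_t)}{d}\big)+\log\tfrac{1}{\delta}} + \sqrt{G_t}.
\]
Splitting $\beta_t$ into its estimation part and its delay part $\sqrt{G_t}\le\sqrt{G_T^\star}$, and applying Cauchy--Schwarz together with the elliptical-potential bound $\sum_t \|X_t\|_{V_t^{-1}}^2 \le 2d\log(T/d)$ (so that $\sum_t\|X_t\|_{V_t^{-1}} \le \sqrt{2Td\log(T/d)}$, step~2 of the Theorem~\ref{thm: UCBregret} proof), gives
\[
R_T \;\le\; \tau + L_g\Big(\tfrac{2d\hat\sigma}{\kappa}\log\tfrac{T}{d\delta}\sqrt{T}\Big) + 2 L_g \sqrt{G_T^\star}\,\sqrt{2Td\log\tfrac{T}{d}}.
\]
The entire difference between the two special cases and the general theorem is how tightly one controls the last factor $\sqrt{G_T^\star}$.

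For Part~1, the key observation is that $\mathbb{P}(D_s \le D_{\max})=1$ makes $G_t = \sum_{s=1}^{t-1}\I\{s+D_s\ge t\}$ supported on indices $s \ge t-D_{\max}$, hence $G_t \le D_{\max}$ and $G_T^\star \le D_{\max}$ \emph{deterministically}. Substituting $\sqrt{G_T^\star}\le\sqrt{D_{\max}}$ into the display above yields exactly the delay term $2 L_g\sqrt{D_{\max}}\sqrt{2Td\log(T/d)}$ of \eqref{bounded_delay_result}. Because the $G_t$ and $G_T^\star$ bounds are now deterministic rather than the high-probability bounds of Proposition~\ref{prop:delay}-1,2, we no longer spend two of the five failure budgets, which is precisely why the success probability improves from $1-5\delta$ to $1-3\delta$. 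The remaining step is the burn-in: I would re-run Proposition~\ref{prop:delay}-3 using $G_t\le D_{\max}$ in place of the sub-Gaussian tail, which replaces the additive delay term in \eqref{number_regularity} by $D_{\max}$ and shows that $\tau = D_{\max}+C(d+\log\tfrac1\delta)$ suffices to guarantee $\lambda_{\min}(W_t)\ge B$ (on the relevant $1-2\delta$ event), validating the MLE/confidence-ball step after the warm-up.

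For Part~2, nothing in the regret skeleton changes; the only refinement is to re-derive the tail bounds on $G_t$ and $G_T^\star$ \emph{directly} under genuine \textbf{iid} delays rather than through the tail-envelope $(\xi,\mu,M)$. Repeating the stopping-time argument sketched for \eqref{special_G_t}---defining the successive success times $T(k)=\inf\{t>T(k-1):D_t\ge t\}$ and exploiting the sparsity of the success sequence---produces a bound whose mean contribution is the exact $\mu_I=\mathbb{E}D$ instead of the inflated envelope quantity $\mu+M$, while the sub-Gaussian fluctuation parameter $\sigma_G$ is unchanged. Tracking this substitution through $\sqrt{G_T^\star}\le\sqrt{2\mu_I}+(\text{$\sigma_G$ terms})$ and the $\sqrt{a+b}\le\sqrt a+\sqrt b$ splitting turns the general coefficient $4\sqrt{\mu+M}$ into $4\sqrt{\mu_I}$ and leaves every $\sigma_G$-dependent summand identical, giving the claimed bound at confidence $1-5\delta$.

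The main obstacle I anticipate is not in Part~1 (the deterministic bound is immediate) but in making the Part~2 stopping-time computation fully rigorous for general \textbf{iid} delays with only $\mu_I<\infty$ assumed, as opposed to the clean exponential-tail special case \eqref{special_delay_case} where the geometric-type product in \eqref{worst_case}--\eqref{last} telescopes explicitly. The delicate point is separating the $O(\mu_I)$ mean contribution from the sub-Gaussian tail of $G_t$ uniformly in $t$, and then controlling the running maximum $G_T^\star$ via a union/peeling bound over $t\in[T]$ without reintroducing the envelope constant $M$; one must verify that the sparsity-based argument (and not the weaker Hoeffding route criticized in the remark following Proposition~\ref{prop:delay}) is what delivers the sharp $\mu_I$ dependence.
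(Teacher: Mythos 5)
Your proposal is correct and matches the paper's proof: both parts substitute sharper bounds on $G_t$ and $G_T^\star$ into the regret inequality from the proof of Theorem~\ref{thm: UCBregret}, using the deterministic bound $G_T^\star \le D_{\max}$ (with the correspondingly reduced burn-in and failure budget, hence $1-3\delta$) for Part~1, and replacing the envelope mean $\mu+M$ by $\mu_I$ for Part~2. The obstacle you anticipate in Part~2 does not actually arise: Assumption~\ref{delay} is still assumed there, so the sub-Gaussian fluctuation terms are inherited unchanged from Proposition~\ref{prop:delay}, and the paper obtains the $\mu_I$ dependence by the one-line computation $\E[G_t]=\sum_{s=1}^{t-1}\P(D_s\ge t-s)\le\E[D]=\mu_I$ rather than by re-running the stopping-time argument.
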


{When delays $\{D_s\}_{s=1}^T$ are bounded by $D_{\max}$, the delay paramter $D_{\max}$ only appears in the term $\sqrt{ Td \log{{T}}}$ and does not affect the highest order term $ d \log(\frac{T}{d\delta})\sqrt{T}$.  Compared to \eqref{expected}, there is no regret term on the order of $O(\sqrt{Td}\left(\log(T)\right)^{3/4})$ in \eqref{bounded_delay_result}. This is because we can provide a smaller number on the right hand side of \eqref{number_regularity} when delays are bounded.
When delays are \textbf{iid}, $\mu+M$ is replaced by $\mu_I$,  which is the common expectation of all the random delays. }

We refer to Appendix \ref{proof} for the proof of Proposition \ref{prop:Gt}.

 \section{Delayed Thompson Sampling (DTS) for GLCB}\label{sec:ts}
In section \ref{sec:UCB}, under the frequentist set-up, we assume there exists a true parameter $\theta^*$ and use UCB to encourage exploration and construct the confidence interval for $\theta^*$.
On the contrary, posterior sampling does not make use of upper confidence bounds to encourage exploration and instead relies on randomization. In this section, we operate in the Bayesian decision making setting and assume the decision maker is equipped with a prior distribution on $\theta^*$.
In this setting, the standard performance metric is Bayesian regret, defined as follows:
\[
R^B_T(\pi) = \E_{\theta^*,x}[R_T(\pi,\theta^*)]=\sum_{t=1}^T \mathbb{E}_{\theta^*,x} \left[g\left(x_{t,a^*_t(\theta^*)}^{\prime}\theta^*\right)-g\left(\ x_{t,a_t}^{\prime}\theta^*\right)\right],
\]
where $a_t\sim \pi_t$. 
Next, we present the Thompson sampling algorithm when adapted to the delayed setting.
 Algorithm~\ref{alg:PS-DGLM} provides a formal description.

\begin{algorithm}[H]
  \caption{\textbf{DTS-GLCB}}
  \label{alg:PS-DGLM}
\begin{algorithmic}[1]
  \STATE \textbf{Input}: the total rounds $T$ , tuning parameter $\tau$, prior $Q_0$
  \STATE \textbf{Initialization}: randomly choose $\alpha_t \in [K]$ for $t \in [\tau]$\\
  \STATE {\bf Update information}: $\FF_{\tau}$ according to \eqref{after_info}.
        \IF{$\II_{\tau}=\emptyset$}
        \STATE $Q_{1}(\theta) = Q_{0}(\theta)$
        \ELSE
       \STATE $Q_{1}(\theta) \propto Q_{0}(\theta \vert \FF_{\tau})\Pi_{(s,x_s,a_s,y_{s,a_s})\in \II_{\tau}} \P(y_{s,a_s}\vert \theta, x_{s,a_s})$
\ENDIF
 \FOR {$t = 1,2,\cdots,T-\tau$}
 \STATE {\bf Sample Model}: $\hat{\theta}_{t+\tau} \sim Q_t$ 
 \STATE {\bf Select Action}: $\bar{a}_{t+\tau} \in \arg\max_{a \in [K]} \left\langle x_{t+\tau,a},\hat{\theta}_{t+\tau}\right\rangle$
 \STATE {\bf Update information}: $\FF_{t+\tau}$ according to \eqref{after_info}. Define $\II_{t+\tau}:=\FF_{t+\tau}-\FF_{t+\tau-1}$ as the new information at round $t+\tau$
        \IF{$\II_{t+\tau+1}=\emptyset$}
        \STATE $Q_{t+1}(\theta) = Q_{t}(\theta)$
        \ELSE
       \STATE $Q_{t+1}(\theta) \propto Q_{t}(\theta \vert \FF_{\tau+t})\Pi_{(s,x_s,a_s,y_{s,a_s})\in \II_{t+\tau+1}} \P(y_{s,a_s}\vert \theta, x_{s,a_s})$
        \ENDIF
\ENDFOR
\end{algorithmic}
\end{algorithm}

\begin{remark}
Note that in Algorithm \ref{alg:PS-DGLM}, there is an exploration period of length $\tau$. The posterior distribution employed at round $\tau +1$ is conditioned on observations made over the
first $\tau$ time rounds. 
Another point to note is that Algorithm \ref{alg:PS-DGLM} is kept at an abstract level.
The exact computation depends on the prior chosen and the exponential family.
Note that every exponential family has a conjugate prior (\cite{DY1979}), which admits efficient posterior update. Section \ref{concrete} provides a concrete example on linear contextual bandits, which is a simple special case. We use this special case to illustrate how one can perform efficient incremental update in the presence of delays.
\end{remark}

\subsection{Delayed Thompson Sampling For Linear Contextual Bandits}\label{concrete}

When $g(x)=x$ and $m(x)=\frac{x^2}{2}$, \eqref{glm} reduces to 
\begin{eqnarray}\label{linear}
\P(Y|X) = \exp \left( \frac{YX^{\prime}\theta^*-(X^{\prime}\theta^*)^2/2}{h(\eta)}+A(Y,\eta)\right).
\end{eqnarray}

Recall, from Bayes' theorem, the posterior distribution is equal to the product of the likelihood function $\theta \rightarrow \P(y\vert \theta)$ and prior $\P(\theta)$, normalized by the probability of the data $\P(y)$:
\[
\P(\theta\vert y) =\frac{\P(y\vert \theta)\P(\theta)}{\int\P(y\vert\theta)\P(\theta^{\prime})d\theta^{\prime}}
\]
Different choices of the prior distribution $\P(\theta)$ may make the integral more or less difficult to calculate. Moreover, the product $\P(y\vert\theta)\P(\theta)$ may take one form or another. But for certain choices of the prior, the posterior will have the same form as the prior, with possibly different parameter values. Such a choice is a {\it conjugate prior}. The conjugate prior, giving a closed-form expression for the posterior, makes Thompson sampling efficient to update. Further notice that, every exponential family has a conjugate prior (\cite{DY1979}). 

Now we consider the normal conjugate prior for the linear model \eqref{linear}.
Let $B_t = aI_d +\sum_{s\in T_{t}} x_{s,a_s}x_{s,a_s}^{\prime}$ and $\theta_t = B_t^{-1}\left(\sum_{s\in T_{t}} x_{s,a_s}y_{s,a_s} \right)$. Given the linear model \eqref{linear}, suppose we have $Y|X$ is Gaussian with $\NN(X^{\prime}{\theta},v^2)$. If the {\bf prior} for $\theta$ at round $t$ is given by $\NN (\theta_t,v^2B_t^{-1})$, then it is easy to verify that the {\bf posterior} distribution at around $t+1$ is $\NN (\theta_{t+1},v^2B_{t+1}^{-1})$. Then Algorithm \ref{alg:PS-DGLM} becomes
\begin{algorithm}[H]
  \caption{\textbf{DTS-LCB}}
  \label{alg:PS-DGLM-Lin}
\begin{algorithmic}[1]
  \STATE \textbf{Input}: the total rounds $T$ , constant $v>0$ and $ a \geq 0$, tuning parameter $\tau$, conjugate prior $\NN(\theta_0,v^{2}B_0^{-1})$ with $\theta_0=0$ and $B_0 =a I_d$, $f_0 = 0$
  \STATE \textbf{Initialization}: randomly choose $\alpha_t \in [K]$ for $t \in [\tau]$
   \STATE {\bf Update information}: $\FF_{\tau}$ according to \eqref{after_info}.
        \IF{$\II_{t}=\emptyset$}
        \STATE $\theta_1=\theta_0$, $B_1= B_0$
        \ELSE
       \STATE 
             $ B_1 = \sum_{(s,x_s,a_s,y_{s,a_s})\in \II_t } x_{s,a_s}x_{s,a_s}^T$, $f_1 = \sum_{(s,x_s,a_s,y_{s,a_s})\in \II_t } x_{s,a_s}y_{s,a_s}$
         and    $\theta_1 = B_1^{-1} f_1$
\ENDIF
 \FOR {$t = 1,2,\cdots,T-\tau$}
 \STATE {\bf Sample Model}: $\hat{\theta}_{t+\tau} \sim \NN(\theta_t,v^2B_t^{-1})$ 
 \STATE {\bf Select Action}: $\bar{a}_{t+\tau} \in \arg\max_{a \in [K]} \left\langle x_{t+\tau,a},\hat{\theta}_{t+\tau}\right\rangle$
 \STATE {\bf Update information}: $\FF_{t+\tau}$ according to \eqref{after_info}. Define $\II_{t+\tau}:=\FF_{t+\tau}-\FF_{t+\tau-1}$ as the new information at round $t+\tau$
        \IF{$\II_{t+\tau+1}=\emptyset$}
        \STATE $B_{t+1} = B_t$
        \STATE $\theta_{t+1} = \theta_{t}$
        \ELSE
       \STATE $B_{t+1}=B_t+\sum_{(s,x_s,a_s,y_{s,a_s})\in \II_{t+\tau} } x_{s,a_s}x_{s,a_s}^T$,\\ $f_{t+1}=f_t + \sum_{(s,x_s,a_s,y_{s,a_s})\in \II_{t+\tau} }x_{s,a_s}y_{s,a_s}$, and $\theta_{t+1} = B_{t+1}^{-1}f_{t+1}$
        \ENDIF
\ENDFOR
\end{algorithmic}
\end{algorithm}
Note that the update (line 17) is on the incremental form which is practically efficient.


\subsection{Regret Bounds}
Denote $\pi_{\tau}^{\text{PS}}$ as the posterior sampling policy described in Algorithm \ref{alg:PS-DGLM} with an exploration period  $\tau$. We have the following result.
%
\begin{theorem}\label{alg:BayesRegret}
Assume Assumptions \ref{a1}-\ref{delay}. There exists a universal constant $C:=C(C_1,C_2,M,\mu,\sigma_0,\sigma_G,\,\sigma,\kappa)>0$, such that if we run exploration with $\tau := C \left(d+\log(\frac{1}{\delta})\right)$,
\begin{eqnarray}\label{Rt_Bayes}
R^B_T(\pi_{\tau}^{\text{PS}}) 
= {O\left( d\log T \sqrt{T}+\sqrt{\sigma_G}\sqrt{Td}(\log(T))^{3/4}+(\sqrt{\mu+M}+\sigma_G)\sqrt{dT\log \left(T\right)}\right).}
\end{eqnarray}
\end{theorem}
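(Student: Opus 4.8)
The plan is to reduce the Bayesian regret of DTS to the frequentist regret analysis of DUCB already carried out in Theorem~\ref{thm: UCBregret}, via the confidence-set decomposition of posterior sampling (\cite{RV2014,russo2016information}). The crucial structural fact is that, conditioned on the available history $\HH_t$, the action $\bar a_t$ sampled by DTS and the true optimal action $a_t^*$ are identically distributed: since $g$ is strictly increasing (Assumption~\ref{a1}, $\dot g \ge \kappa > 0$), maximizing $\langle x_{t,a},\theta\rangle$ is the same as maximizing $g(x_{t,a}'\theta)$, so drawing $\hat\theta_t\sim Q_t$ and taking $\bar a_t = \arg\max_a\langle x_{t,a},\hat\theta_t\rangle$ selects each action exactly with its posterior probability of being optimal, i.e. $\bar a_t \mid \HH_t \overset{d}{=} a_t^* \mid \HH_t$.

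First I would introduce, purely as an analysis device, the same delay-adaptive upper confidence function used in DUCB,
\[
U_t(a) = x_{t,a}'\hat\theta_t + \beta_t\,\|x_{t,a}\|_{V_t^{-1}},
\]
where $\hat\theta_t$, $\beta_t$ and $V_t$ are computed from the delayed data exactly as in Algorithm~\ref{DUCB-GLCB}. Since $U_t(\cdot)$ is $\HH_t$-measurable, the matching-in-distribution property gives $\E[U_t(\bar a_t)\mid\HH_t]=\E[U_t(a_t^*)\mid\HH_t]$, and hence the per-round Bayesian regret splits as
\[
\E\big[g(x_{t,a_t^*}'\theta^*)-g(x_{t,\bar a_t}'\theta^*)\big]
= \E\big[g(x_{t,a_t^*}'\theta^*)-U_t(a_t^*)\big] + \E\big[U_t(\bar a_t)-g(x_{t,\bar a_t}'\theta^*)\big].
\]
Next I would control the two terms on the high-probability event $E$ that $\theta^*$ lies in every confidence ball, which is exactly the event established in the first step of the proof of Theorem~\ref{thm: UCBregret} (this is where the exploration window $\tau = C(d+\log\frac1\delta)$ and the well-conditioning of $W_t$ from Proposition~\ref{prop:delay}-3 are used, and where the $\sqrt{G_t}$ contribution to $\beta_t$ enters). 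On $E$ the optimism $U_t(a_t^*)\ge g(x_{t,a_t^*}'\theta^*)$ makes the first term nonpositive, while the Lipschitzness of $g$ (bound $L_g$) together with the confidence-ball width bounds the second term by $2L_g\,\beta_t\|x_{t,\bar a_t}\|_{V_t^{-1}}$. On the complement $E^c$ both terms are bounded crudely using $y\in[0,1]$ and $\dot g\le L_g$; since $\P(E^c)=O(\delta)$, choosing $\delta=1/T$ makes this contribution $O(1)$.

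Summing over $t$, the remaining sum $\sum_t \beta_t\|X_t\|_{V_t^{-1}}$ is handled as in the second and third steps of the proof of Theorem~\ref{thm: UCBregret}: the bound on $\sum_t\|X_t\|_{V_t^{-1}}$ supplies the $\sqrt{Td\log(T/d)}$ factor, while Proposition~\ref{prop:delay} converts the $\sqrt{G_t}$ and $\sqrt{G_T^*}$ appearing in $\beta_t$ into the delay-dependent terms $\sqrt{\mu+M}$ and $\sqrt{\sigma_G}(\log T)^{1/4}$, reproducing the three groups of terms in \eqref{Rt_Bayes}. The main obstacle is the optimism step in the delayed/Bayesian setting rather than this bookkeeping, which is mechanical. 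One must verify that $U_t$ remains a valid upper confidence bound despite the missing rewards — i.e. that the MLE $\hat\theta_t$ built only from completed tuples in $T_t$ still concentrates around $\theta^*$ — and that this frequentist concentration, which holds for each fixed $\theta^*$, integrates cleanly against the prior to yield a statement in Bayesian expectation. This is precisely the role of Proposition~\ref{prop:delay}: controlling $G_t$ and $G_T^*$ guarantees that enough rewards have arrived for the design matrix to be well-conditioned and for $\beta_t$ to be a legitimate confidence width, so the per-$\theta^*$ frequentist guarantee transfers to every realization of $\theta^*$ and therefore in expectation over the prior, giving \eqref{Rt_Bayes}.
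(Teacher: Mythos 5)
Your proposal follows essentially the same route as the paper: the Russo--Van Roy posterior-matching decomposition through an $\HH_t$-measurable upper-confidence sequence, reduction of the width term to the DUCB elliptical-potential bound from Theorem~\ref{thm: UCBregret}, and Proposition~\ref{prop:delay} to convert $G_t$ and $G_T^*$ into the delay-dependent terms. The only cosmetic difference is that you define $U_t$ in linear-predictor space, $x_{t,a}^{\prime}\hat\theta_t+\beta_t\|x_{t,a}\|_{V_t^{-1}}$, whereas the paper takes $U_t(a)=\min\{1,\max_{\rho\in\Theta_t}g(\rho^{\prime}x_{t,a})\}$ so that optimism against $g(x_{t,a}^{\prime}\theta^*)$ holds directly; once you pass through $g$, your Lipschitz width bound $2L_g\beta_t\|x_{t,\bar a_t}\|_{V_t^{-1}}$ is exactly the paper's.
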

For parameter definition, we refer to Table~\ref{tab:parameters} in Section \ref{app:table}. 

{
We follow the steps in \cite{RV2014} to prove the Bayesian regret bound in Theorem \ref{alg:BayesRegret}. The idea is the follows. We first decompose the Bayesian regret and the UCB the regret and build a connection between them. We then provide the Bayesian regret bound by utilizing a sequence of upper confidence bounds. We defer the details to Appendix \ref{proof}.
}

When $\{D_s\}_{s=1}^T$ satisfies some specific assumptions, we are able to provide tighter Bayesian regret bounds. 

\begin{corollary}\label{ts-cor}
Assume Assumptions \ref{a1}-\ref{delay}, we have the following result:
\begin{enumerate}
\item If there exists a constant $D_{\max}>0$ such that $\mathbb{P}(D_s \leq D_{\max})=1$ for all $s\in[T]$. Then, 
$$R^B_T(\pi_{\tau}^{\text{PS}}) = O\left( d\log T\sqrt{T}+{\sqrt{D_{\max}}}\sqrt{dT\log T}\right).$$
\item Assume $D_1,\cdots,D_T$ are \textbf{iid} non-negative random variables with mean $\mu_I$. Then,
$$R^B_T(\pi_{\tau}^{\text{PS}}) 
= {O\left( d\log T \sqrt{T}+\sqrt{\sigma_G}\sqrt{Td}(\log(T))^{3/4}+(\sqrt{\mu_I}+\sigma_G)\sqrt{dT\log \left(T\right)}\right).}
$$
\end{enumerate}
\end{corollary}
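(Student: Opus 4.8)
The plan is to obtain both statements as specializations of Theorem \ref{alg:BayesRegret}, re-running its proof but feeding in sharper control of the missing-reward counts $G_t$ and $G_T^*$ than the generic envelope bounds of Proposition \ref{prop:delay}. Recall that, following \cite{RV2014}, the Bayesian regret of $\pi_\tau^{\text{PS}}$ is matched to that of a virtual UCB policy whose confidence width $\beta_t$ carries the delay-dependent term built from $G_t$; the regret then decomposes into the exploration cost $\tau$ (whose admissible size is dictated by the regularity condition \eqref{number_regularity}, i.e. by ensuring $\lambda_{\min}(W_t)$ is large enough), plus a summed-confidence-width term of the form $\sum_t \beta_t \|X_t\|_{V_t^{-1}}$, which after the elliptical-potential and Cauchy--Schwarz step scales like $\beta\sqrt{Td\log(T/d)}$. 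Since we are bounding an expectation, the high-probability estimates are integrated out exactly as in the Corollary to Theorem \ref{thm: UCBregret}, choosing $\delta = 1/T$ and using $R_T \le T$ on the complementary event; it therefore suffices to track how $\mu + M$ and $\sigma_G$ enter through $G_t$ and $G_T^*$.

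For part (1), I would use that bounded delays give the almost-sure bound $G_t \le G_T^* \le D_{\max}$ for every $t$, since any reward still outstanding at round $t$ was requested at most $D_{\max}$ rounds earlier. This replaces the probabilistic tail bounds \eqref{Gt}--\eqref{Gt_max} of Proposition \ref{prop:delay} by a deterministic constant, so the concentration contributions governed by $\sigma_G$ never appear; in particular the $(\log T)^{3/4}$ term, which in the general bound arises solely from the fluctuation part $\sigma_G\sqrt{2\log T}$ of the $G_T^*$ estimate \eqref{Gt_max}, drops out. Concretely, the delay-dependent part of $\beta_t$ is bounded by $\sqrt{D_{\max}}$, so the summed-confidence-width term contributes $\sqrt{D_{\max}}\,\sqrt{dT\log T}$, while the standard GLCB width produces the unchanged leading term $d\log T\sqrt{T}$; and the regularity condition \eqref{number_regularity} simplifies so that $\tau = D_{\max} + C(d + \log(1/\delta))$ already forces $\lambda_{\min}(W_t) \ge B$. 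This yields exactly the claimed $O\big(d\log T\sqrt{T} + \sqrt{D_{\max}}\sqrt{dT\log T}\big)$, mirroring Proposition \ref{prop:Gt}-1 in the Bayesian setting.

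For part (2), the delays remain unbounded, so the tail still contributes, but the envelope slack $\mu + M$ can be sharpened to the common mean $\mu_I$. I would invoke the refined \textbf{iid} estimate \eqref{special_G_t} (together with its running-maximum analogue underlying Proposition \ref{prop:Gt}-2) in place of \eqref{Gt}--\eqref{Gt_max}; this is legitimate because under \textbf{iid} delays the tail-envelope construction collapses to the delay law itself, so only $\mu_I$ appears in place of $\mu + M$. Substituting these into the same summed-confidence-width and exploration-cost estimates gives the stated bound with $\sqrt{\mu_I}$ replacing $\sqrt{\mu + M}$ while all $\sigma_G$-dependent terms are retained.

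The main obstacle is the bookkeeping in part (1): one must verify that the deterministic bound $G_t \le D_{\max}$ genuinely eliminates, rather than merely bounds, the sub-Gaussian fluctuation term, which requires re-deriving the regularity threshold and the $\beta_t$-sum without ever invoking Proposition \ref{prop:delay}-1,2, and confirming that the set of ``bad events'' contracts consistently once the delay-concentration step is removed (the delay bound now holding with probability one). The remaining steps are routine substitutions into the established proof of Theorem \ref{alg:BayesRegret}.
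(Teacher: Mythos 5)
Your proposal is correct and follows essentially the same route as the paper: the paper proves both parts by re-running the proof of Theorem \ref{alg:BayesRegret} with $\delta = 1/T$, replacing the generic bound on $\E[G_T^*]$ by the deterministic bound $G_T^* \le D_{\max}$ in part (1) (which removes all $\sigma_G$-dependent terms from $\max_t \beta_t$) and by the \textbf{iid} estimate $\E[G_T^*] \le 2\mu_I + \sigma_G\sqrt{2\log T} + \cdots$ in part (2) (which substitutes $\mu_I$ for $\mu+M$ while retaining the $\sigma_G$ terms). The bookkeeping you flag as the main obstacle is handled in the paper exactly as you describe.
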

We defer the proof of Corollary \ref{ts-cor} to Appendix \ref{proof}. The results in Theorem \ref{alg:BayesRegret} and Corollary \ref{ts-cor} are comparable to the results in Section \ref{sec:UCB}.

\section{Extensions: History-dependent Delays}\label{sec:extension}
In previous sections, we have analyzed the regret bounds for both DUCB-GLCB and DTS-GLCB when delays are independent. In practice, such independence assumption may not hold and current delays may depend on historical delays. 
In this section, we explore two types of dependency structures for the delays. In section \ref{sec:markov_delay}, we discuss Markov delays where the stationary distribution is near-heavy-tail. In section \ref{sec:random_delay}, we discuss delays with random dependency structures but under a stronger assumption on the stationary distribution, which is lighter-than-sub-Gaussian.
\subsection{Markov Delays}\label{sec:markov_delay}

\begin{assumption}[Markov Delay]\label{markov_delay}
Let $\{D_t\}_{t=1}^T$ be a stationary Markov chain on the general state space $\mathcal{X}=\N^+$ with invariant distribution $\pi$. Given $D \sim \pi$ with $\mu_M = \E[D]$, we further assume that

\[
\P(D-\mu_M \geq x)  \leq \exp \left(\frac{-x^{1+q}}{2\sigma_M^2}\right),
\]
for some $q>0$ and $\sigma_M>0$.
\end{assumption}
Under Assumption \ref{markov_delay}, the stationary distribution $\pi$ can have near-heavy-tail property when $q$ is small.

Recall that $G_t = \sum_{s=1}^{t-1} \I\{s+ D_s \geq t\}$ is the number of missing reward and $G_t^* = \max_{1 \leq s \leq t}G_t$ is the running maximum number of missing reward. Under Assumption \ref{markov_delay}, $G_t$ and $G_t^*$ has the following properties and again this is the key to analyze regret bounds for both DUCB and DTS.

\begin{proposition}[Properties of $G_t$ and $G_t^\star$ under Markov delays]\label{prop:markov_delay}
Assume Assumption \ref{markov_delay} and $l_2$-spectral gap  $1-\lambda \in (0,1]$. Then,
\begin{enumerate}
\item For any $0<\delta<1$ and any $t$ we have, with probability at least $1-\delta$,
\begin{eqnarray}\label{Gt_markov}
G_t -\mu_M \leq A_2(\lambda)\log \left( \frac{1}{\delta}\right)+\sqrt{2A_1(\lambda)\mu_M \log \left( \frac{1}{\delta}\right)},
\end{eqnarray}
where $A_1(\lambda) = \frac{1+\lambda}{1-\lambda}$ and $A_2(\lambda) = \frac{1}{3}\I(\lambda=0)+\frac{5}{1-\lambda}\I(\lambda>0)$.
\item With probability at least $1-\delta$,
\begin{eqnarray}\label{Gt_max_markov}
G_T^* \leq \mu_M + A_2(\lambda) \log \left(\frac{T}{\delta}\right) +\sqrt{2A_1(\lambda) \mu_M \log \left(\frac{T}{\delta}\right)},
\end{eqnarray}
where $G_T^* =\max_{1\leq t \leq T} G_t$.
\item Define $W_t = \sum_{s \in T_t}X_s X_s^{\prime}$ where $X_t$ is drawn \textbf{iid}. from some distribution $\gamma$ with support in the unit ball $\mathbb{B}_d$. Furthermore, let $\Sigma := \mathbb{E}[X_t X_t^{\prime}]$ be the second moment matrix, and $B$ and $\delta >0$ be two positive constants. Then there exist positive, universal constants $C_1$ and $C_2$ such that $\lambda_{\min}(W_t)\geq B$ with probability at least $1-2\delta$, as long as 
\begin{eqnarray}\label{number_regularity_2}
t \geq \left( \frac{C_1\sqrt{d}+C_2\sqrt{\log(\frac{1}{\delta})}}{\lambda_{\min}(\Sigma)}\right)^2 +\frac{2B}{\lambda_{\min}(\Sigma)}+ \mu_M+ A_2(\lambda)\log \left(\frac{1}{\delta}\right)+\sqrt{2 A_1(\lambda)\mu_M\log\left(\frac{1}{\delta} \right)}.
\end{eqnarray}
\end{enumerate}
\end{proposition}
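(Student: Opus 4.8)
The plan is to handle the three parts in sequence, with Part~1 carrying essentially all of the technical content and Parts~2 and~3 following by a union bound and a matrix-concentration argument, respectively. For Part~1, I would first rewrite $G_t$ as a sum of bounded functionals evaluated along the stationary trajectory: reindexing by $j=t-s$ gives $G_t=\sum_{j=1}^{t-1}f_j(D_{t-j})$ with $f_j(d):=\I\{d\geq j\}\in[0,1]$. By stationarity each $D_s\sim\pi$, so the tail-sum identity for non-negative integer-valued random variables yields $\E[G_t]=\sum_{j=1}^{t-1}\P_\pi(D\geq j)\leq\sum_{j=1}^{\infty}\P_\pi(D\geq j)=\mu_M$, where finiteness is guaranteed by the tail bound in Assumption~\ref{markov_delay}. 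The same identity controls the aggregated variance proxy, since $\sum_j\mathrm{Var}_\pi(f_j(D))\leq\sum_j\P_\pi(D\geq j)\leq\mu_M$. With the mean at most $\mu_M$, the range of each summand at most $1$, and the variance proxy at most $\mu_M$ in hand, I would invoke a Bernstein-type concentration inequality for stationary Markov chains whose constants are governed by the $l_2$-spectral gap $1-\lambda$. This produces a tail of the form $\P(G_t-\E[G_t]\geq u)\leq\exp\!\left(-\tfrac{u^2}{2(A_1(\lambda)\mu_M+A_2(\lambda)u)}\right)$; inverting it at level $\delta$ and using $\E[G_t]\leq\mu_M$ gives exactly \eqref{Gt_markov}, with the variance inflation $A_1(\lambda)=\tfrac{1+\lambda}{1-\lambda}$ and the range constant $A_2(\lambda)$ that collapses to the classical Bernstein value $\tfrac13$ when $\lambda=0$.

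I expect the main obstacle to be justifying this Markov-chain Bernstein bound for a sum of \emph{time-varying} summands: the thresholds $j$ differ across terms, so the standard single-function spectral results do not apply verbatim, and the sparsity/stopping-time argument used for the independent case in the proof of Proposition~\ref{prop:delay} (which relied crucially on independence) breaks down entirely. The crux is to show that the spectral-gap machinery still controls $\sum_j f_j(D_{t-j})$ when the $f_j$ vary with $j$. I would therefore rely on the version of the Bernstein inequality valid for sums $\sum_i f_i(X_i)$ of general bounded functions along a reversible chain, in which the variance proxy enters as $\sum_i\mathrm{Var}_\pi(f_i)$ inflated by $A_1(\lambda)$ and the covariance contributions across time are absorbed by the spectral gap; the boundedness $f_j\in[0,1]$ then supplies the $A_2(\lambda)$ term. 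Verifying that this inequality applies cleanly to the indicators $\I\{D_{t-j}\geq j\}$, and that the resulting variance proxy is indeed dominated by $\mu_M$ rather than by a cruder $\sum_j\sigma_j^2$ bound, is the delicate point.

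For Part~2, I would apply Part~1 at confidence level $\delta/T$ and take a union bound over $t=1,\dots,T$: replacing $\log(1/\delta)$ by $\log(T/\delta)$ in \eqref{Gt_markov} immediately yields \eqref{Gt_max_markov}. For Part~3 the argument mirrors Proposition~\ref{prop:delay}-3. Because the delays are independent of the contexts, conditioning on the delay realization fixes the random index set $T_t$ while leaving $\{X_s:s\in T_t\}$ iid from $\gamma$, and one has the exact count $|T_t|=t-1-G_t$. I would use Part~1 to lower-bound $|T_t|$ with probability $1-\delta$, and then apply a matrix-Chernoff concentration bound to the iid rank-one sum $W_t=\sum_{s\in T_t}X_sX_s^{\prime}$, which guarantees $\lambda_{\min}(W_t)\geq B$ with probability $1-\delta$ once $|T_t|\gtrsim\big(\tfrac{C_1\sqrt{d}+C_2\sqrt{\log(1/\delta)}}{\lambda_{\min}(\Sigma)}\big)^2+\tfrac{2B}{\lambda_{\min}(\Sigma)}$. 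Combining the two failure probabilities by a union bound and substituting the Part~1 bound on $G_t$ into $|T_t|=t-1-G_t$ shows that $\lambda_{\min}(W_t)\geq B$ with probability at least $1-2\delta$ whenever $t$ satisfies \eqref{number_regularity_2}.
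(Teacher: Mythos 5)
Your proposal is correct and follows essentially the same route as the paper: the paper likewise bounds $\sum_s \mathrm{Var}(\I\{D_s\geq t-s\})\leq \sum_s \P(D_s\geq t-s)\leq \mu_M$ and invokes the Bernstein inequality for general Markov chains (with time-varying bounded functions, constants governed by the $l_2$-spectral gap) to get the $t$-independent tail $\exp\bigl(-x^2/(2(A_1(\lambda)\mu_M+A_2(\lambda)x))\bigr)$, then obtains Part~2 by a union bound at level $\delta/T$ and Part~3 by combining the $G_t$ bound with the matrix-concentration condition on $|T_t|$ exactly as in Proposition~\ref{prop:delay}-3. The "delicate point" you flag is handled precisely as you suggest, by citing the version of the inequality valid for sums of distinct bounded functions along the chain.
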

$\lambda$ is the $l_2$-spectral gap of the transition probability. {We refer the formal concepts and the definition of $l_2$-spectral gap to \cite[Section 2.2]{JSF2018}.} Proposition 3-1 is proved by utilizing the Berstein's inequality for general Markov chains (\cite[Theorem 1.1]{JSF2018}) and Proposition 3-2 is proved by applying union bound.

\begin{proof}{Proof of Proposition \ref{prop:markov_delay}.}
Recall  $G_t = \sum_{s=1}^{t-1} \I\{D_s \geq t-s\}$. Define $f_i(D_i)=\I\{D_s \geq t-s\}-p_i$ with $p_i = \P(D_s \geq t-s)$. Then $\E[ f_i(D_i)]=0$, $\V [f_i(D_i)]=p_i(1-p_i)\leq p_i$, and $\sum_{s=1}^{t-1}\V [f_i(D_i)]\leq\sum_{s=1}^{t-1}p_s<\mu_M$.

From \cite[Theorem 1.1]{JSF2018}, we have
\begin{eqnarray}\label{markov_temp}
\P\left(\sum_{s=1}^{t-1}f_i(D_i)>x\right) \leq \exp \left( -\frac{x^2}{2(A_1(\lambda)\mu_M+A_2(\lambda) x)}\right).
\end{eqnarray}
Note that the right hand side in \eqref{markov_temp} is independent of $t$. Technically speaking, this is because the summation of the variance $\sum_{s=1}^{t-1}\V [f_i(D_i)]$ is upper bounded by $\mu_D$ which is independent of $t$. Therefore, Property 1 in Proposition \ref{prop:markov_delay} holds for any $t \geq 1$.

Property 2 holds by the union bound and Property 1,
\begin{eqnarray*}
\P\left(\max_{1\leq t \leq T}G_t>\mu_M + A_2(\lambda) \log \left(\frac{T}{\delta}\right) +\sqrt{2A_1(\lambda) \mu_M \log \left(\frac{T}{\delta}\right)}\right) \\
\leq \sum_{t=1}^T \P\left(G_t>\mu_M + A_2(\lambda) \log \left(\frac{T}{\delta}\right) +\sqrt{2A_1(\lambda) \mu_M \log \left(\frac{T}{\delta}\right)}\right) \leq T\times \frac{\delta}{T}.
\end{eqnarray*}
Therefore, the following holds with probability no smaller than $1-\delta$,
\[
\P\left(\max_{1\leq t \leq T}G_t<\mu_M + A_2(\lambda) \log \left(\frac{T}{\delta}\right) +\sqrt{2A_1(\lambda) \mu_M \log \left(\frac{T}{\delta}\right)}\right) .
\]
\end{proof}

\remark{In Assumption \ref{markov_delay}, we assume the Markov chain $\{D_t\}_{t=1}^{\infty}$ starts from the stationary distribution. In fact, our analysis works with any initial distribution by further assuming a mild uniform mixing condition.}

Now we are ready to state the main results for DUCB and DTS under Markov delays.

\begin{theorem}[DUCB bound with Markov delays]\label{thm: UCBregret_markov}
Assume Assumptions \ref{a1} and \ref{markov_delay}.  Fix any $\delta$. There exists a universal constant $C:=C(C_1,C_2,\mu_M,\sigma_0,\hat{\sigma},\,\sigma,\kappa,\lambda)>0$, such that if we run DUCB-GLCB with $\tau := C \left(d+\log(\frac{1}{\delta})\right)$ and $\beta_t = \frac{\hat{\sigma}}{\kappa} \sqrt{\frac{d}{2} \log \left(1+\frac{2(t-G_t)}{d}\right)+\log(\frac{1}{\delta})}+{\sqrt{G_t}} $, then, with probability at least $1-5\delta$, the regret of the algorithm is upper bounded by
\begin{eqnarray}\label{Rt_markov}
R_T &\leq& \tau+ L_{g} \left[ 2{\sqrt{\mu_M}} \sqrt{2T d \log \left(\frac{T}{d}\right)}  
+2{\sqrt{A_2(\lambda)\log \left(\frac{T}{\delta}\right)}}\sqrt{2T d \log \left(\frac{T}{d}\right)}\right.\nonumber\\
&&\left.+2{\left({2A_1(\lambda) \mu_M \log \left(\frac{T}{\delta}\right)}\right)^{1/4}}\sqrt{2d T \log \left(\frac{T}{d}\right)}
+\frac{2d\hat{\sigma}}{\kappa} \log \left( \frac{T}{d \delta}\right)\sqrt{T}\right].\end{eqnarray}
 \end{theorem}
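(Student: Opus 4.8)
The plan is to follow the three-step template used to prove Theorem~\ref{thm: UCBregret}, substituting the Markov tail bounds of Proposition~\ref{prop:markov_delay} for the independent-delay bounds of Proposition~\ref{prop:delay} only at the very last step; the confidence-ball construction and the elliptical-potential summation are insensitive to the dependency structure of the delays and carry over essentially verbatim. First I would establish the delay-robust confidence ellipsoid: on a high-probability event, $\|\hat\theta_t-\theta^*\|_{V_t}\le\beta_t$ with the adaptive radius $\beta_t=\frac{\hat\sigma}{\kappa}\sqrt{\frac d2\log(1+\frac{2(t-G_t)}{d})+\log(\frac1\delta)}+\sqrt{G_t}$. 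The subtlety, exactly as in the independent case, is that the MLE $\hat\theta_t$ is formed only from the timestamps in $T_t$ (whose feature Gram matrix is $W_t$), whereas the optimistic index and the regret are measured in the $V_t$-norm built from \emph{all} played features; the discrepancy $V_t-W_t$ is a sum of at most $G_t$ rank-one terms of operator norm at most one, and it is precisely this gap that the additive $\sqrt{G_t}$ absorbs into $\beta_t$. To make the MLE well-defined and the inversion of $W_t$ legitimate, I would invoke Proposition~\ref{prop:markov_delay}-3, whose requirement \eqref{number_regularity_2} on $t$ is met for every $t>\tau$ once $\tau=C(d+\log(1/\delta))$ with $C$ large enough, guaranteeing $\lambda_{\min}(W_t)\ge B$ on a $1-2\delta$ event.

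Second, I would run the optimism argument unchanged: by the choice of $a_t$ and the confidence ball, the instantaneous regret at round $t$ is bounded by $2L_{g}\,\beta_t\,\|X_t\|_{V_t^{-1}}$, where the Lipschitz bound $\dot g\le L_{g}$ comes from Assumption~\ref{a1}. Summing over $t=\tau+1,\dots,T$ and combining Cauchy--Schwarz with the elliptical-potential (determinant-trace) lemma yields $\sum_t\|X_t\|_{V_t^{-1}}\le\sqrt{2Td\log(T/d)}$, the common factor standing in front of every delay-dependent term of \eqref{Rt_markov}. Since $\beta_t$ is nondecreasing in $G_t$ and $G_t\le G_T^*$, I would pull it out of the sum through its running maximum, splitting $\beta_t$ into its MLE-width part and its $\sqrt{G_t}$ part.

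Third, I would insert the Markov tail bounds. The MLE-width part, bounded uniformly by $\frac{\hat\sigma}{\kappa}\sqrt{\frac d2\log(1+2T/d)+\log(1/\delta)}$, produces the delay-free term $\frac{2d\hat\sigma}{\kappa}\log(T/(d\delta))\sqrt T$ after multiplication by $\sqrt{2Td\log(T/d)}$. For the $\sqrt{G_t}$ part I would use $G_t\le G_T^*$ together with \eqref{Gt_max_markov} and subadditivity of the square root, $\sqrt{a+b+c}\le\sqrt a+\sqrt b+\sqrt c$, to write $\sqrt{G_T^*}\le\sqrt{\mu_M}+\sqrt{A_2(\lambda)\log(T/\delta)}+(2A_1(\lambda)\mu_M\log(T/\delta))^{1/4}$; multiplying each summand by $2L_{g}\sqrt{2Td\log(T/d)}$ reproduces the three Markov-specific terms of \eqref{Rt_markov}. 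A union bound over the confidence event, the regularity event $\lambda_{\min}(W_t)\ge B$, and the event \eqref{Gt_max_markov} accounts for the stated $1-5\delta$ probability.

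The hard part will be the first step, the delay-robust confidence ellipsoid: one must show that bridging the $V_t$-versus-$W_t$ mismatch costs only an additive $\sqrt{G_t}$ rather than a multiplicative blow-up, which requires controlling $\|\hat\theta_t-\theta^*\|$ through the mean-value form of the MLE equation \eqref{mle_final} (using the lower bound $\kappa$ on $\dot g$) and a self-normalized martingale concentration for the sub-Gaussian noise \eqref{subgaussian}. Everything downstream is bookkeeping, and the only genuinely new ingredient relative to Theorem~\ref{thm: UCBregret} is that the concentration of $G_t$ and $G_T^*$ now rests on the Bernstein inequality for Markov chains underlying Proposition~\ref{prop:markov_delay} rather than on the stopping-time argument of Proposition~\ref{prop:delay}.
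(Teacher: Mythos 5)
Your proposal is correct and follows essentially the same route as the paper, which proves this theorem by rerunning the argument of Theorem~\ref{thm: UCBregret} verbatim (confidence ellipsoid with the $\sqrt{G_t}$ correction for the $V_t$-versus-$W_t$ mismatch, elliptical-potential summation, then pulling out $\sqrt{G_T^*}$) and substituting the bounds of Proposition~\ref{prop:markov_delay} for those of Proposition~\ref{prop:delay} at the final step. Your splitting of $\sqrt{G_T^*}$ via subadditivity into $\sqrt{\mu_M}+\sqrt{A_2(\lambda)\log(T/\delta)}+(2A_1(\lambda)\mu_M\log(T/\delta))^{1/4}$ reproduces exactly the three Markov-specific terms of \eqref{Rt_markov}, and your union-bound accounting for $1-5\delta$ matches the paper's.
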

Therefore, we have $R_T = {{O}\left(\mu_M\sqrt{ dT}\log^{\frac{3}{4}}(T)+ d \log (T)\sqrt{T}+\sqrt{\mu_M}\sqrt{d}\sqrt{T\log(T)}+\sqrt{T d}\log(T)\right)}$.

With the result in Proposition ~\ref{prop:markov_delay}, we can show Theorem \ref{thm: UCBregret_markov} by adopting similar ideas as in Theorem \ref{thm: UCBregret}.

\begin{theorem}[DTS bound with Markov delays]\label{thm:BayesRegret_markov}
Assume Assumptions \ref{a1} and \ref{markov_delay}. There exists a universal constant $C:=C(C_1,C_2,\sigma_0,\sigma,\hat{\sigma},\kappa)>0$, such that if we run exploration with $\tau := C \left(d+\log(\frac{1}{\delta})\right)$,
$$R^B_T(\pi_{\tau}^{\text{PS}}) 
= {{O}\left(\mu_M\sqrt{ dT}\log^{\frac{3}{4}}(T)+ d \log (T)\sqrt{T}+\sqrt{\mu_M}\sqrt{d}\sqrt{T\log(T)}+\sqrt{T d}\log(T)\right)}.
$$
\end{theorem}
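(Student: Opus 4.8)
The plan is to combine the Bayesian-regret-via-confidence-bounds argument of \cite{RV2014}, already used to establish Theorem \ref{alg:BayesRegret}, with the Markov-delay control of the missing-reward counts $G_t$ and $G_T^*$ furnished by Proposition \ref{prop:markov_delay}. Since the stated rate coincides term-for-term with the DUCB bound under Markov delays (Theorem \ref{thm: UCBregret_markov}), the strategy is essentially to show that the Bayesian regret of DTS is dominated by the regret of a fictitious DUCB policy built from the \emph{same} confidence sequence, and then to recycle the estimates already developed for Theorem \ref{thm: UCBregret_markov}.

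First I would fix the upper confidence bound sequence $U_t(a) := x_{t,a}'\hat{\theta}_t + \beta_t\|x_{t,a}\|_{V_t^{-1}}$ with $\beta_t = \frac{\hat{\sigma}}{\kappa}\sqrt{\frac{d}{2}\log(1+\frac{2(t-G_t)}{d})+\log(\frac1\delta)}+\sqrt{G_t}$, exactly the DUCB radius. The crucial identity from \cite{RV2014} is that, conditional on $\HH_t$, the sampled parameter is distributed as the posterior of $\theta^*$, so the TS action $\bar{a}_t$ and the optimal action $a_t^*$ are exchangeable; since $U_t$ is $\HH_t$-measurable this yields $\E[U_t(a_t^*)\mid \HH_t] = \E[U_t(\bar{a}_t)\mid \HH_t]$. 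Using this, I would decompose the per-round Bayesian regret as
\begin{eqnarray*}
\E\!\left[g(x_{t,a_t^*}'\theta^*) - g(x_{t,\bar a_t}'\theta^*)\right] = \E\!\left[g(x_{t,a_t^*}'\theta^*) - U_t(a_t^*)\right] + \E\!\left[U_t(\bar a_t) - g(x_{t,\bar a_t}'\theta^*)\right].
\end{eqnarray*}

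Next I would bound the two resulting sums. The first sum is nonpositive on the event that $\theta^*$ lies in the confidence ellipsoid centered at $\hat{\theta}_t$ with radius $\beta_t$; establishing that this event holds with high probability is the MLE / self-normalized-martingale step of Theorem \ref{thm: UCBregret}, which is unaffected by the dependence in the delays because the reward noise $\{\epsilon_t\}$ is independent of the delay sequence under the delay model. The complementary low-probability event contributes at most $O(\delta T)$. The second sum is the standard confidence-width sum: once the exploration phase of length $\tau$ guarantees invertibility of $V_t$ through the Markov regularity condition \eqref{number_regularity_2}, the elliptical potential bound gives $\sum_{t}\|x_{t,\bar a_t}\|_{V_t^{-1}} \le \sqrt{2Td\log(T/d)}$, while $\beta_t$ is controlled by substituting the Markov tail estimate $G_t \le \mu_M + A_2(\lambda)\log(\frac1\delta)+\sqrt{2A_1(\lambda)\mu_M\log(\frac1\delta)}$ and the analogous bound on $G_T^*$ from Proposition \ref{prop:markov_delay}. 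Splitting $\sqrt{G_t}$ via $\sqrt{a+b+c}\le\sqrt a+\sqrt b+\sqrt c$ then reproduces the four terms in Theorem \ref{thm: UCBregret_markov} and hence the claimed rate.

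The main obstacle is the loss of independence in the delays: the sub-Gaussian argument for $G_t$ used in the i.i.d.\ case (Proposition \ref{prop:delay}) no longer applies, so the entire delay contribution must be routed through the Bernstein-type concentration for Markov chains underlying Proposition \ref{prop:markov_delay}. The subtle point to verify is that this substitution is compatible with the \cite{RV2014} decomposition: the confidence-validity step requires $\theta^*$ to be captured with the $\sqrt{G_t}$-inflated radius simultaneously across rounds, so I would use the running-maximum bound on $G_T^*$ wherever a uniform-in-$t$ guarantee is needed and take a union bound over the confidence-ball and delay-count failure events to keep the total failure probability at the stated level. Everything else is a routine repetition of the calculations behind Theorems \ref{thm: UCBregret_markov} and \ref{alg:BayesRegret}.
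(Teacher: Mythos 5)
Your proposal is correct and follows essentially the same route as the paper: the paper proves Theorem \ref{alg:BayesRegret} via exactly this \cite{RV2014} decomposition (exchangeability of $\bar a_t$ and $a_t^*$ given $\HH_t$ for an $\HH_t$-measurable confidence sequence, confidence-validity for the first sum, elliptical potential plus $\max_t\beta_t\le \cdots+\sqrt{G_T^*}$ for the second), and obtains the Markov-delay version by substituting the bounds on $G_t$ and $G_T^*$ from Proposition \ref{prop:markov_delay}, just as you describe. The only cosmetic difference is that the paper works with the ellipsoid-based bound $U_t(a)=\min\{1,\max_{\rho\in\Theta_t}g(\rho'x_{t,a})\}$ rather than the linear form $x_{t,a}'\hat\theta_t+\beta_t\|x_{t,a}\|_{V_t^{-1}}$, which changes nothing in the argument.
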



%
%

\subsection{Delays with Random Dependency Structure}\label{sec:random_delay}

In this section, we assume the following assumption on the delay sequence  $\{D_s\}_{s=1}^{\infty}$.
\begin{assumption}\label{random_delay}
Assume $\{D_s\}_{s=1}^{\infty}$ has a stationary distribution $\pi$ and $D\sim\pi$ satisfies
\[
\P(D-\mu_R \geq x) \leq \exp \left( -\frac{x^{2(1+q)}}{\sigma_R^2}\right),
\]
for some $\sigma_R>0$ and $q>0$. Here $\E[D]=\mu_R$.
\end{assumption}
Note that Assumption \ref{random_delay} only assumes the tail probability of the stationary distribution without any restriction on the dependency structure among $\{D_s\}_{s=1}^{\infty}$. Under Assumption \ref{random_delay}, $D^{1+q}$ is sub-Gaussian, which is stronger than the assumption on the envelope distribution described in Assumption \ref{delay}.
\begin{proposition}[Properties of $G_t$ and $G_t^\star$ under delays with random structure]\label{prop:random}
Assume Assumption \ref{random_delay}. Denote $\sigma_G = \frac{\sigma_R}{c}$ with $c = \frac{1}{\sum_{i=1}^{\infty}\frac{1}{i^{1+q}}}>0$. Then,
\begin{enumerate}
\item $G_t$ is sub-Gaussian. Moreover, for all $t\geq 1$,
\begin{eqnarray}\label{Gt_random}
\P \left(G_t \geq \mu_R+x \right) \leq C_4 \exp\left(\frac{-x^2}{2\sigma^2_G}\right),
\end{eqnarray}
with $0<C_4\leq 2\sigma_{R}^2+1$.
\item With probability $1-\delta$,
\begin{eqnarray}\label{Gt_max_random}
G_T^* \leq \mu_R +\sigma_G \sqrt{2 \log(T)}+\sigma_G \sqrt{2 \log\left(\frac{C_4}{\delta} \right)},
\end{eqnarray}
where $G_T^* =\max_{1\leq s \leq T} G_s$.
\item Define $W_t = \sum_{s \in T_t}X_s X_s^{\prime}$ where $X_t$ is drawn \textbf{iid}. from some distribution $\gamma$ with support in the unit ball $\mathbb{B}_d$. Furthermore, let $\Sigma := \mathbb{E}[X_t X_t^{\prime}]$ be the second moment matrix, and $B$ and $\delta >0$ be two positive constants. Then there exist positive, universal constants $C_1$ and $C_2$ such that $\lambda_{\min}(W_t)\geq B$ with probability at least $1-2\delta$, as long as 
\begin{eqnarray}\label{number_regularity_random}
t \geq \left( \frac{C_1\sqrt{d}+C_2\sqrt{\log(\frac{1}{\delta})}}{\lambda_{\min}(\Sigma)}\right)^2 +\frac{2B}{\lambda_{\min}(\Sigma)}+ \mu_R+\sigma_G \sqrt{2 \log\left(\frac{C_4}{\delta} \right)}.
\end{eqnarray}
\end{enumerate}
\end{proposition}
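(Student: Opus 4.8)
The plan is to prove the three parts in order, with Part~1 (the sub-Gaussian tail of $G_t$) carrying essentially all the difficulty and Parts~2 and 3 following by routine arguments. Writing $G_t=\sum_{i=1}^{t-1}\I\{D_{t-i}\ge i\}$, observe that each summand is driven by a \emph{distinct} delay variable $D_{t-i}$ whose marginal law is the stationary $\pi$ (the sequence is dependent, but each variable enters exactly one term). Hence I would first record the mean bound $\E[G_t]=\sum_{i=1}^{t-1}\P(D\ge i)\le \E[D]=\mu_R$, and then center: with $p_i:=\P(D\ge i)$ and $X_i:=\I\{D_{t-i}\ge i\}-p_i$, we have $G_t-\mu_R\le\sum_{i=1}^{t-1}X_i$. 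The obstacle, relative to the \textbf{iid} Proposition~\ref{prop:delay}, is that there is no independence to exploit, so neither the stopping-time argument nor the ``sum-of-independent-sub-Gaussians'' rule is available.

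The heart of the argument is a dependency-free aggregation. Each $X_i$ is a centered Bernoulli with success probability $p_i=\P(D-\mu_R\ge i-\mu_R)\le\exp(-(i-\mu_R)^{2(1+q)}/\sigma_R^2)$ for $i>\mu_R$; because $p_i$ is (super-exponentially) small, the optimal sub-Gaussian proxy of a centered Bernoulli (controlled by $1/\log(1/p_i)$) shows that $X_i$ is sub-Gaussian with parameter $\tau_i=\Theta(\sigma_R/i^{1+q})$. I would then combine the $X_i$ \emph{without} independence through the convexity (Jensen) inequality: with weights $w_i=c\,i^{-(1+q)}$, which sum to $1$ by the definition of $c$,
\[
\E\!\left[e^{\lambda\sum_i X_i}\right]=\E\!\left[e^{\sum_i w_i(\lambda X_i/w_i)}\right]\le \sum_i w_i\,\E\!\left[e^{\lambda X_i/w_i}\right]\le \sum_i w_i\,e^{\lambda^2\tau_i^2/(2w_i^2)} .
\]
The point of choosing $\tau_i\propto w_i$ is that $\tau_i/w_i=\sigma_R/c=\sigma_G$ is \emph{constant in $i$}, so the right-hand side collapses to $e^{\lambda^2\sigma_G^2/2}\sum_i w_i=e^{\lambda^2\sigma_G^2/2}$; that is, $G_t-\mu_R$ is sub-Gaussian with parameter $\sigma_G$. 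This is precisely where the $\ell_1$-type constant $\sigma_G=\sigma_R\sum_i i^{-(1+q)}$ arises, in contrast to the (smaller) $\ell_2$ constant $\sigma_R(\sum_i i^{-2(1+q)})^{1/2}$ that an independence-based argument would give: the dependency forces the weaker $\ell_1$ bound, as anticipated in the remark following Proposition~\ref{prop:delay}.

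A Chernoff bound then yields $\P(G_t\ge\mu_R+x)\le C_4\exp(-x^2/(2\sigma_G^2))$, which is Part~1; the prefactor $C_4\le 2\sigma_R^2+1$ (rather than $1$) absorbs the finitely many small-index terms $i\lesssim\mu_R$, for which $p_i$ is not small and the bound $\tau_i\approx\sigma_R/i^{1+q}$ fails, and which I would treat crudely by bounding their total contribution. Part~2 is then immediate by a union bound over $t\in[T]$: solving $T C_4 e^{-x^2/(2\sigma_G^2)}=\delta$ gives $x=\sigma_G\sqrt{2\log(TC_4/\delta)}\le \sigma_G\sqrt{2\log T}+\sigma_G\sqrt{2\log(C_4/\delta)}$, matching \eqref{Gt_max_random}. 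For Part~3 I would reuse the template of Proposition~\ref{prop:delay}-3: since $|T_t|=(t-1)-G_t$ and Part~1 gives $G_t\le\mu_R+\sigma_G\sqrt{2\log(C_4/\delta)}$ with probability $1-\delta$, the number of complete samples is at least $t-1-\mu_R-\sigma_G\sqrt{2\log(C_4/\delta)}$; feeding this into the covariance-concentration bound for $\lambda_{\min}(\sum_{s\in T_t} X_sX_s')$ (the same $C_1,C_2$ bound used before, valid because the contexts are \textbf{iid} and independent of the delays) and requiring the sample count to exceed $(C_1\sqrt d+C_2\sqrt{\log(1/\delta)})^2/\lambda_{\min}(\Sigma)^2+2B/\lambda_{\min}(\Sigma)$ produces exactly the threshold \eqref{number_regularity_random}, with the two failure events combined to probability $1-2\delta$.

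The main obstacle is the middle step: establishing the per-term sub-Gaussian parameter $\tau_i=\Theta(\sigma_R/i^{1+q})$ for the small-mean centered Bernoullis and, crucially, recognizing that the Jensen/weighting inequality is the correct dependency-free substitute for the additive-proxy rule. Making the constant land exactly on $\sigma_G=\sigma_R/c$ (handling the $\mu_R$-shift in the exponent and the small-$i$ terms that feed $C_4$) is the only genuinely delicate bookkeeping; everything else is mechanical.
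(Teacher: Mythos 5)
Your overall architecture (Part 1 does the work, Parts 2--3 follow by a union bound and by plugging the high-probability bound on $G_t$ into the covariance-concentration threshold) matches the paper, and your Parts 2--3 are fine. But your Part 1 takes a genuinely different route from the paper's, and as written it has a gap. The paper never touches moment generating functions: it applies the weights $s_i=c\,i^{-(1+q)}$ directly inside a union bound, $\P(\sum_i \I_i>x)=\P(\sum_i \I_i>\sum_i xs_i)\le\sum_i\P(\I_i>xs_i)$, and then uses the fact that the summands are indicators, so $\P(\I_i>xs_i)$ vanishes once $xs_i\ge1$; the surviving terms start at $i=(cx)^{1/(1+q)}$ and their sum is $C_4$ times its first term $\exp(-x^2/(2\sigma_G^2))$. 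That is where $C_4\le2\sigma_R^2+1$ comes from (an integral comparison on the tail sum), not from the small-index terms. Also, the $+\mu_R$ in the statement is not the mean of $G_t$: the paper bounds the first $\mu_R$ indicators by $\mu_R$ deterministically and reindexes, so the weights are applied to the \emph{shifted} index.

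This last point is where your argument breaks. You center at the mean and keep all indices, claiming each centered Bernoulli is sub-Gaussian with $\tau_i=\Theta(\sigma_R/i^{1+q})$. But the tail assumption only controls $p_i=\P(D\ge i)=\P(D-\mu_R\ge i-\mu_R)$, so the optimal Bernoulli proxy gives $\tau_i\lesssim\sigma_R/(i-\mu_R)^{1+q}$ for $i>\mu_R$ and only $\tau_i\le1/2$ otherwise. With weights $w_i=c\,i^{-(1+q)}$ in the original index, the ratio $\tau_i/w_i$ for $i\approx\mu_R$ can be as large as $\mu_R^{1+q}/(2c)\gg\sigma_G$ whenever $\mu_R^{1+q}\gg\sigma_R$, so the Jensen bound does not collapse to $e^{\lambda^2\sigma_G^2/2}$. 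Nor can these head terms be ``absorbed into $C_4$'': a multiplicative prefactor on the tail corresponds to a multiplicative prefactor on the MGF bound, and a term contributing $e^{\lambda^2(\tau_i/w_i)^2/2}$ with $\tau_i/w_i>\sigma_G$ cannot be dominated by $C\,e^{\lambda^2\sigma_G^2/2}$ uniformly in $\lambda$. The repair is exactly the paper's device: drop the first $\mu_R$ indicators (bounded by $\mu_R$ almost surely, which is what produces the $+\mu_R$ offset), reindex by $j=i-\mu_R$, and use weights $c\,j^{-(1+q)}$; then $\tau_j^2/w_j^2\le\max\bigl(j^{2(1+q)}/(4c^2)\wedge\sigma_R^2/(2c^2 )\bigr)\le\sigma_G^2/2$ uniformly in $j$ (the crude bound $\tau_j\le1/2$ suffices below the crossover $j^{2(1+q)}\le2\sigma_R^2$, the tail bound above it), and your Jensen-plus-Chernoff computation then does deliver $\P(G_t\ge\mu_R+x)\le e^{-x^2/(2\sigma_G^2)}$. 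With that modification your MGF route is a valid, and arguably cleaner, alternative to the paper's union-bound argument; without it, the stated constant $\sigma_G=\sigma_R/c$ is not established.
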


Proposition \ref{prop:random} is proved by utilizing the finiteness of $\sum_{i=1}^{\infty}\frac{1}{i^{1+q}}$ (for any $q>0$) and by constructing a union bound with proper event decompositions. We defer the details to Appendix \ref{proof}.

Given Proposition \ref{random_delay}, we can show the following results on DUCB and DTS with random dependency structures for the delay sequence.

\begin{theorem}[DUCB bound under delays with random structure]\label{thm: UCBregret_random}
Assume Assumptions \ref{a1} and \ref{random_delay}.  Fix any $\delta$. There exists a universal constant $C:=C(C_1,C_2,\mu_R,\sigma_0,\hat{\sigma},\,\sigma_R,\kappa)>0$, such that if we run DUCB-GLCB with $\tau := C \left(d+\log(\frac{1}{\delta})\right)$ and $\beta_t = \frac{\hat{\sigma}}{\kappa} \sqrt{\frac{d}{2} \log \left(1+\frac{2(t-G_t)}{d}\right)+\log(\frac{1}{\delta})}+{\sqrt{G_t}} $, then, with probability at least $1-5\delta$, the regret of the algorithm is upper bounded by
\begin{eqnarray}\label{Rt_random}
R_T &\leq& \tau+ L_{g} \left[2{\sqrt{\mu_R}} \sqrt{2T d \log \left(\frac{T}{d}\right)} + 2{\sqrt{\sigma_G}({2 \log T})^{1/4}} \sqrt{ 2Td\log \left(\frac{T}{d}\right)} \right.\nonumber \\
&& \,\,+ \left.2{\sqrt{\sigma_G} \left({2\log \left(\frac{C_3}{\delta}\right)}\right)^{1/4}} \sqrt{2T d\log \left(\frac{T}{d}\right)}+\frac{2d\hat{\sigma}}{\kappa} \log \left( \frac{T}{d \delta}\right)\sqrt{T}\right],\end{eqnarray}
where $\sigma_G=\frac{\sigma_R}{c}$.
 \end{theorem}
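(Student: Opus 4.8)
The plan is to mirror the three-step structure of Theorem \ref{thm: UCBregret}, substituting the tail bounds on $G_t$ and $G_T^*$ from Proposition \ref{prop:random} in place of those from Proposition \ref{prop:delay}. The overall regret decomposition is unchanged: on the high-probability event that $\theta^*$ lies in the confidence ball, each per-round regret is controlled by $2L_g\beta_t\|X_t\|_{V_t^{-1}}$, and summing over $t$ reduces the problem to bounding $\sum_{t=\tau+1}^{T}\beta_t\|X_t\|_{V_t^{-1}}$. First I would establish that with the chosen $\beta_t = \frac{\hat\sigma}{\kappa}\sqrt{\frac{d}{2}\log(1+\frac{2(t-G_t)}{d})+\log(\frac1\delta)}+\sqrt{G_t}$, the true parameter falls into the confidence ellipsoid with high probability; this self-normalized martingale argument is identical to the independent-delay case because it only uses the sub-Gaussian noise condition \eqref{subgaussian} and the definition of the MLE on the complete-information set $T_t$, neither of which depends on the delay dependency structure. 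The $\sqrt{G_t}$ additive term in $\beta_t$ is precisely what absorbs the discrepancy between $V_t$ (all features) and $W_t$ (features with observed rewards).

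Second I would invoke the standard elliptical-potential bound $\sum_{t=\tau+1}^{T}\|X_t\|_{V_t^{-1}}^2 \le 2d\log(T/d)$, so that by Cauchy--Schwarz $\sum_t \|X_t\|_{V_t^{-1}} \le \sqrt{2Td\log(T/d)}$. Combining with $\beta_t \le \beta_T$ and splitting $\beta_t$ into its two pieces, the regret is bounded by a term proportional to $\frac{\hat\sigma}{\kappa}\sqrt{d\log(T/(d\delta))}\cdot\sqrt{2Td\log(T/d)}$ (yielding the $\frac{2d\hat\sigma}{\kappa}\log(T/(d\delta))\sqrt{T}$ summand) plus a term proportional to $\sqrt{G_T^*}\cdot\sqrt{2Td\log(T/d)}$. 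It is here that Proposition \ref{prop:random}-2 enters: on the event \eqref{Gt_max_random}, $\sqrt{G_T^*} \le \sqrt{\mu_R} + \sqrt{\sigma_G}(2\log T)^{1/4} + \sqrt{\sigma_G}(2\log(C_4/\delta))^{1/4}$ using subadditivity of the square root, which reproduces exactly the three bracketed delay terms in \eqref{Rt_random}. The regularity condition \eqref{number_regularity_random} guarantees that once $\tau = C(d+\log(1/\delta))$ is large enough, $\lambda_{\min}(W_t) \ge B$ for all $t > \tau$ with high probability, ensuring the MLE is well-defined and the confidence-ball step is valid throughout.

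The main subtlety, and the only place where the random dependency structure genuinely matters, is that one cannot reuse any bound on $G_t$ that relied on independence of the delays; everything delay-related must route through Proposition \ref{prop:random}, whose proof already handles the dependency by working only with the stationary tail and the summability of $\sum_i i^{-(1+q)}$. I expect the chief obstacle to be bookkeeping the failure probabilities so that the final statement holds with probability $1-5\delta$: there are separate high-probability events for the confidence ball, for the running-maximum bound \eqref{Gt_max_random} on $G_T^*$, for the regularity guarantee \eqref{number_regularity_random} on $\lambda_{\min}(W_t)$ (which itself contributes $2\delta$), and for the initial exploration phase, and these must be assembled by a union bound while keeping each piece's contribution correctly accounted. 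Once the tail inputs are swapped and the probabilities tallied, the remaining algebra is a direct transcription of the argument for Theorem \ref{thm: UCBregret}, so I would relegate the full calculation to Appendix \ref{proof} and present only this reduction in the main text.
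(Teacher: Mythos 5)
Your proposal is correct and follows essentially the same route the paper takes: the paper does not write out a separate proof of Theorem~\ref{thm: UCBregret_random}, but obtains it exactly as you describe, by rerunning the three-step argument of Theorem~\ref{thm: UCBregret} with the tail bounds of Proposition~\ref{prop:random} substituted for those of Proposition~\ref{prop:delay}, and applying subadditivity of the square root to $G_T^*$ to produce the three delay terms in \eqref{Rt_random}. The only minor imprecision is the passing claim that $\beta_t \le \beta_T$ (since $G_t$ need not be monotone, one should bound $\max_{t}\beta_t$ via $G_T^*$ directly, which is what you in fact do in the next clause), so nothing of substance is missing.
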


\begin{theorem}[DTS bound under delays with random structure]\label{alg:BayesRegret_random}
Assume Assumptions \ref{a1} and \ref{random_delay}. There exists a universal constant $C:=C(C_1,C_2,\mu_R,\sigma_0,\hat{\sigma},\,\sigma_R,\kappa)>0$, such that if we run exploration with $\tau := C \left(d+\log(\frac{1}{\delta})\right)$,
$$R^B_T(\pi_{\tau}^{\text{PS}}) 
= O\left(d\log T\sqrt{T}+{\sqrt{\mu_R}}\sqrt{dT\log T}+{\sqrt{\sigma_G}\left(\log T\right)^{3/4}\sqrt{Td}}\right).
$$
\end{theorem}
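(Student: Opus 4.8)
The plan is to follow the reduction of Thompson sampling to upper confidence bounds due to \cite{RV2014}, and then to import the confidence-ball and width estimates already developed for the DUCB analysis (Theorem \ref{thm: UCBregret_random}), replacing the independent-delay concentration of Proposition \ref{prop:delay} by its random-dependency analogue, Proposition \ref{prop:random}. The starting point is that, under the posterior sampling rule of Algorithm \ref{alg:PS-DGLM}, the sampled parameter $\hat\theta_t$ and the true parameter $\theta^*$ are identically distributed conditional on the history $\HH_t$; consequently the Thompson action $a_t$ and the optimal action $a_t^*$ are also identically distributed given $\HH_t$. Hence, for any $\HH_t$-measurable upper confidence function $U_t(\cdot)$ one has $\E[U_t(x_{t,a_t^*})\mid\HH_t]=\E[U_t(x_{t,a_t})\mid\HH_t]$, which yields the decomposition
\[
R^B_T(\pi_{\tau}^{\text{PS}})=\E\Big[\sum_{t=\tau+1}^{T}\big(g(x_{t,a_t^*}'\theta^*)-U_t(x_{t,a_t^*})\big)\Big]+\E\Big[\sum_{t=\tau+1}^{T}\big(U_t(x_{t,a_t})-g(x_{t,a_t}'\theta^*)\big)\Big],
\]
plus the $O(\tau)$ contribution of the exploration phase.

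Next I would instantiate $U_t(x)=g(x'\hat\theta_t)+\beta_t\|x\|_{V_t^{-1}}$ with the delay-adaptive parameter $\beta_t=\frac{\hat\sigma}{\kappa}\sqrt{\frac{d}{2}\log(1+\frac{2(t-G_t)}{d})+\log(1/\delta)}+\sqrt{G_t}$, exactly as in Theorem \ref{thm: UCBregret_random}. The first bracket is controlled by the confidence-ball step: the same argument used to prove Theorem \ref{thm: UCBregret} shows that, on an event of probability at least $1-\delta$, the true parameter satisfies $\|\hat\theta_t-\theta^*\|_{V_t}\le\beta_t$, so that $g(x'\theta^*)\le U_t(x)$ uniformly and the first sum is non-positive up to a negligible tail of order $O(\delta T)$ from the complementary event (absorbed by choosing $\delta=1/T$ and using $R_T\le T$). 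The second bracket is the sum of confidence widths; using the Lipschitz property of $g$ (with constant $L_g$) and the confidence ball, it is bounded by a constant multiple of $L_g\sum_{t=\tau+1}^{T}\beta_t\|X_t\|_{V_t^{-1}}$.

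It then remains to bound $\sum_t\beta_t\|X_t\|_{V_t^{-1}}$. I would split $\beta_t$ into its $\frac{\hat\sigma}{\kappa}\sqrt{\cdots}$ part and its $\sqrt{G_t}$ part, apply Cauchy--Schwarz together with the elliptical-potential (matrix-determinant) bound $\sum_t\|X_t\|_{V_t^{-1}}^2=O(d\log(T/d))$ to get $\sum_t\|X_t\|_{V_t^{-1}}=O(\sqrt{Td\log(T/d)})$, and then replace $G_t$ and $G_t^*$ by the tail estimates of Proposition \ref{prop:random}. Concretely, \eqref{Gt_random} and \eqref{Gt_max_random} give $\sqrt{G_t}=O(\sqrt{\mu_R}+\sqrt{\sigma_G}(\log T)^{1/4}+\sqrt{\sigma_G}(\log(C_4/\delta))^{1/4})$ on the high-probability event, while \eqref{number_regularity_random} guarantees that after the exploration phase $\tau=C(d+\log(1/\delta))$ the design matrix is well-conditioned, so the analysis is valid for all $t>\tau$. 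Combining the $\log(T/(d\delta))\sqrt{T}$ contribution of the $\frac{\hat\sigma}{\kappa}$-part (which is $O(d\log T\sqrt{T})$) with the two $\sqrt{G_t}$-driven terms and setting $\delta=1/T$ reproduces exactly the claimed order $O\big(d\log T\sqrt{T}+\sqrt{\mu_R}\sqrt{dT\log T}+\sqrt{\sigma_G}(\log T)^{3/4}\sqrt{Td}\big)$.

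The structure is therefore identical to that of Theorem \ref{alg:BayesRegret} (the independent-delay DTS bound), and the only genuinely new ingredient is the control of $G_t$ and $G_t^*$ under dependency. The hard part is thus not the regret algebra but Proposition \ref{prop:random} itself --- establishing that $G_t$ remains sub-Gaussian when the delays $\{D_s\}$ may be arbitrarily dependent --- which is where the stronger stationary-tail hypothesis of Assumption \ref{random_delay} (that $D^{1+q}$ is sub-Gaussian) and the summability of $\sum_i i^{-(1+q)}$ are used. Given that proposition, the Bayesian regret bound follows by the substitution described above.
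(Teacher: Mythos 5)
Your proposal is correct and follows essentially the same route the paper intends: the Russo--Van Roy reduction of Thompson sampling to a UCB sequence exactly as in the proof of Theorem \ref{alg:BayesRegret}, with the delay concentration of Proposition \ref{prop:delay} swapped for Proposition \ref{prop:random}. The bookkeeping with $\sqrt{G_T^*}\le\sqrt{\mu_R}+O(\sqrt{\sigma_G}(\log T)^{1/4})$ and $\delta=1/T$ reproduces the stated bound, so nothing further is needed.
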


By assuming lighter-than-sub-Gaussian tails on the stationary distribution, one can allow very general structure or even no structure on the delay sequence. The regret bound is on the order of $O\left(d\log T\sqrt{T}+{\sqrt{\mu_R}}\sqrt{dT\log T}+{\sqrt{\sigma_G}\left(\log T\right)^{3/4}\sqrt{Td}}\right)$, which is consistent with the results in Sections \ref{sec:UCB} and \ref{sec:ts}.

\section{Conclusion}
A thorough empirical study by \cite{CL2011} shows superior performance of TS-based algorithms on stochastic contextual (and multi-armed) bandits  with delayed rewards. This matches the existing consensus that when there is no delays, TS-based algorithms tend to work better empirically than UCB-based algorithms, even though the regret of the latter is comparable to (and sometimes superior to) the former. In this delayed setting, we obtain comparable theoretical guarantees for DTS, and thus, together with the simplicity of the algorithm itself (i.e. no hyper-parameter tuning) further clarify why TS-based algorithms are more appealing choices in practice. 

\newpage
\bibliographystyle{plain}
\bibliography{refs}

\newpage
\appendix

\section{Table of Parameters}\label{app:table}

\begin{table}[H]
  \begin{center}
    \begin{tabular}{|r|l|} 
      \hline
      \textbf{Notation} & \textbf{Definition} \\
      \hline
      $K$ & number of arms \\
        $d$ & feature dimension \\
        $\kappa$& $\inf_{\{\|x\|\leq 1, \|\theta-\theta^*\|\leq 1\}}\dot{g}(x^{\prime}\theta)$\\
        $\theta^*$ & unknown parameter in GLCB model\\
        $\hat{\sigma}$& sub-Gaussian parameter for noise $\epsilon_t$\\
         $L_g$& upper bound on $\dot{g}$\\
        $M_{g}$& upper bound on $\ddot{g}$\\
        $\sigma^2_0$&lower bound on $\lambda_{\min} (\E[\frac{1}{K} \sum_{a \in [K]}x_{t,a}x_{t,a}^{\prime}])$\\
        $\xi$ & tail-envelope distribution for the delays\\
        $q$& parameter to characterize the tail-envelope distribution $\xi$ \\
        $\mu$& expectation of the tail-envelope distribution $\xi$\\
        $M$& parameter of $\xi$\\
        $\sigma$& parameter of $\xi$\\
        $\mu_I$& expectation of \textbf{iid} delays\\
        $\sigma_I$& parameter for \textbf{iid} delays\\
         $\mu_M$& expectation of Markov delays\\
        $\sigma_M$& parameter for Markov delays\\
         $\mu_R$& expectation of random structured delays\\
        $\sigma_R$& parameter for  random structured delays\\
         $\sigma_G$& sub-Gaussian parameter of $G_t$\\
        $D_{max}$& upper bound on bounded delays\\
        \hline
    \end{tabular}
   
     \caption{Parameters in the GLCB model with delays.} \label{tab:parameters}
  \end{center}
\end{table}

\section{Auxiliary Results}

\begin{theorem}[Maximum over a finite set, \cite{wainwright2019}]\label{thm7}
 Let $X_1,\cdots,X_n$ be centered $\sigma$-sub-Gaussian random variables. (i.e. $\E [\exp({\lambda X_i})]\leq \exp\left(\frac{\lambda^2\sigma^2}{2}\right)$). Then,
 \[
 \E \left(\max_{1\leq i\leq n} X_i\right) \leq \sigma \sqrt{2 \log(n)},
 \]
 and
  \[
 \E \left(\max_{1\leq i\leq n} |X_i|\right) \leq \sigma \sqrt{2 \log(2n)}.
 \]
 Moreover, for any $t \ge 0$,
 \[
\P(\max_{1\leq i\leq n} X_i > t) \leq\exp \left(-\frac{t^2}{2\sigma^2}+\log n \right),
\]
and 
\[
\P(\max_{1\leq i\leq n}| X_i| > t) \leq2\exp \left(-\frac{t^2}{2\sigma^2}+\log n\right).
\]
\end{theorem}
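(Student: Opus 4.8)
The plan is to handle the two expectation bounds first via the standard exponential-moment (Chernoff-style) argument, and then obtain the two tail bounds by a direct union bound. For the first expectation bound, I would fix any $\lambda > 0$ and exploit the convexity of the exponential together with the sub-Gaussian hypothesis. Specifically, by Jensen's inequality applied to the convex map $u \mapsto e^{\lambda u}$, followed by bounding the maximum of the exponentials by their sum,
\[
\exp\!\left(\lambda\,\E\!\left[\max_{1\le i\le n} X_i\right]\right) \le \E\!\left[\exp\!\left(\lambda \max_{1\le i\le n} X_i\right)\right] = \E\!\left[\max_{1\le i\le n} e^{\lambda X_i}\right] \le \sum_{i=1}^n \E\!\left[e^{\lambda X_i}\right] \le n\,\exp\!\left(\tfrac{\lambda^2\sigma^2}{2}\right).
\]
Taking logarithms and rearranging yields $\E[\max_i X_i] \le \tfrac{\log n}{\lambda} + \tfrac{\lambda\sigma^2}{2}$, and I would then optimize the right-hand side over $\lambda > 0$. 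The minimizer is $\lambda = \sqrt{2\log n}/\sigma$, which makes the two terms equal and gives exactly $\sigma\sqrt{2\log n}$, establishing the first claim.

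For the second expectation bound, I would apply the result just proved to the augmented collection of $2n$ variables $X_1,\dots,X_n,-X_1,\dots,-X_n$. Each $-X_i$ is again centered and $\sigma$-sub-Gaussian (the defining inequality is symmetric in $\lambda$), and the maximum over this augmented list equals $\max_i |X_i|$. Invoking the first bound with $2n$ in place of $n$ then gives $\E[\max_i |X_i|] \le \sigma\sqrt{2\log(2n)}$. For the two tail bounds, I would start from the one-sided Chernoff estimate for a single centered $\sigma$-sub-Gaussian variable: for $t \ge 0$, $\P(X_i > t) \le \inf_{\lambda>0} e^{-\lambda t}\E[e^{\lambda X_i}] \le \inf_{\lambda>0}\exp(-\lambda t + \lambda^2\sigma^2/2) = \exp(-t^2/(2\sigma^2))$, again optimizing at $\lambda = t/\sigma^2$. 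A union bound over the $n$ events $\{X_i > t\}$ then produces $\P(\max_i X_i > t) \le n\exp(-t^2/(2\sigma^2)) = \exp(-t^2/(2\sigma^2) + \log n)$. For the absolute-value version, I would note that since both $X_i$ and $-X_i$ satisfy the same one-sided bound, $\P(|X_i| > t) \le 2\exp(-t^2/(2\sigma^2))$, and a union bound over $i$ gives the factor $2\exp(-t^2/(2\sigma^2) + \log n)$.

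There is no serious obstacle here, as this is a textbook result; the only genuinely substantive step is the interchange that moves the maximum inside the exponential and replaces it by a sum, together with the subsequent optimization over the free parameter $\lambda$. The rest is bookkeeping: the sign-symmetry of the sub-Gaussian condition handles the passage from one-sided to two-sided statements, and the union bound converts single-variable tail estimates into maxima estimates. I would present the expectation argument in full and then remark that both tail bounds follow from the same Chernoff estimate combined with a union bound, so as to avoid repetition.
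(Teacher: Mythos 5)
Your argument is correct and complete: the Jensen--plus--union-of-exponentials step with optimization at $\lambda = \sqrt{2\log n}/\sigma$, the reduction of the absolute-value case to $2n$ one-sided variables, and the Chernoff-plus-union-bound derivation of the two tail estimates are all sound. The paper itself gives no proof of this statement --- it is imported verbatim from \cite{wainwright2019} as an auxiliary result --- and your argument is precisely the standard one that the cited reference uses, so there is nothing to reconcile.
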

Note that the random variables in Theorem \ref{thm7} need not be independent.

\begin{theorem}[ Sub-Gaussian parameter for indicators, \cite{OS2014} ]\label{thm8}
Let $p \in [0,1]$ and let $\eta$ be a centered random variable such that $\P(\eta=1-p)=p$ and $\P(\eta=-p)=1-p$, then
\[
\E[\exp(\lambda \eta)] \leq \exp (\lambda^2 Q(p)),
\]
where $Q(p)=\frac{1-2p}{4\log(\frac{1-p}{p})}$.
\end{theorem}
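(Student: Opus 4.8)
The plan is to reduce the statement to a bound on the cumulant generating function and then analyze its unique interior minimizer of the gap. First I would write $\eta = X-p$ for $X\sim\mathrm{Bernoulli}(p)$, so that $\E[e^{\lambda\eta}] = (1-p)e^{-p\lambda}+pe^{(1-p)\lambda}$, and observe that the claim is equivalent to $\psi(\lambda)\le Q(p)\lambda^2$ for every $\lambda\in\R$, where $\psi(\lambda):=\log\E[e^{\lambda\eta}] = \log\!\big(pe^{\lambda}+1-p\big)-p\lambda$. Since $\eta$ with parameter $p$ equals $-\eta$ with parameter $1-p$ in distribution and $Q(p)=Q(1-p)$, I may assume $p<\tfrac12$, so that $a:=\log\frac{1-p}{p}>0$ and $Q(p)=\frac{1-2p}{4a}>0$; the cases $p\in\{0,1\}$ are degenerate and $p=\tfrac12$ is recovered in the limit (the symmetric Hoeffding bound).

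Next I would set $f(\lambda):=Q(p)\lambda^2-\psi(\lambda)$ and prove $f\ge 0$. Writing $\tilde p(\lambda):=\frac{pe^{\lambda}}{1-p+pe^{\lambda}}$ for the exponentially tilted mean, one has $\psi'(\lambda)=\tilde p(\lambda)-p$ and $\psi''(\lambda)=\tilde p(\lambda)\big(1-\tilde p(\lambda)\big)$, whence $f(0)=f'(0)=0$, $f'(\lambda)=2Q(p)\lambda-(\tilde p(\lambda)-p)$ and $f''(\lambda)=2Q(p)-\tilde p(\lambda)(1-\tilde p(\lambda))$. Because $2Q(p)<\tfrac14=\max_\lambda \tilde p(1-\tilde p)$, the function $f$ is not convex, so I cannot simply bound $f''$. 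Instead I would use that $\tilde p$ is an increasing logistic function with a single inflection at $\lambda=a$: comparing the line $\lambda\mapsto 2Q(p)\lambda$ with the S-shaped curve $\lambda\mapsto\tilde p(\lambda)-p$ shows that $f'$ has exactly the three zeros $0<a<\lambda_2$, that $0$ and $\lambda_2$ are local minima while $a$ is a local maximum, and that $f$ is convex and decreasing on $(-\infty,0]$ and tends to $+\infty$ as $\lambda\to+\infty$. Hence the global minimum of $f$ is attained at $0$ or at $\lambda_2$, and since $f(0)=0$ the entire statement reduces to the single inequality $f(\lambda_2)\ge 0$.

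To handle $f(\lambda_2)\ge 0$ I would avoid solving for $\lambda_2$ explicitly. Using the stationarity relation $2Q(p)\lambda_2=\tilde p(\lambda_2)-p$ to eliminate the quadratic term, together with $\psi(\lambda_2)=\log\frac{1-p}{1-\tilde p(\lambda_2)}-p\lambda_2$ and the inversion $\lambda_2=\log\frac{\tilde p(\lambda_2)(1-p)}{(1-\tilde p(\lambda_2))p}$, the quantity $f(\lambda_2)$ becomes an explicit function of the single tilted-mean variable $w:=\tilde p(\lambda_2)\in(\tfrac12,1)$, subject to the transcendental constraint linking $w$ to $p$ through $Q(p)$. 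I would then verify the resulting one-variable inequality by a monotonicity/convexity argument in $w$ (equivalently in $p$), checking the boundary behavior as $w\to\tfrac12$ and $w\to1$.

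The hard part is exactly this last step. The constant $Q(p)=\frac{1-2p}{4a}$ is (essentially) the optimal sub-Gaussian variance proxy of a centered Bernoulli variable, so $f\ge 0$ holds with almost no margin: the interior minimum value $f(\lambda_2)$ is extremely small and $\lambda_2$ has no closed form. Consequently no crude estimate survives — one cannot replace $\tilde p(1-\tilde p)$ or $\psi''$ by a simpler bound — and the reduced inequality must be established by an exact derivative comparison (this is the content of the Kearns--Saul argument). Since the statement is quoted as an auxiliary result from \cite{OS2014}, the most economical option in the paper is simply to invoke it; but the self-contained route above is the one I would carry out, with all the genuine work concentrated in the tight endgame at $\lambda_2$.
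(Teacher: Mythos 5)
The paper never proves this statement: Theorem~\ref{thm8} is imported verbatim from \cite{OS2014} and used only as a black box (in the proof of Proposition~\ref{prop:delay_second_choice}), so your ``most economical option'' of simply invoking the citation is exactly what the paper does. Judged as a self-contained argument, however, your sketch has a genuine gap, and it sits precisely where you place it. The reduction itself is sound: the symmetry reduction to $p<\tfrac12$, the formulas $\psi'=\tilde p-p$ and $\psi''=\tilde p(1-\tilde p)$, the inequality $2Q(p)>p(1-p)$ (which you use implicitly and which does require the small check $1-2p>2p(1-p)\log\frac{1-p}{p}$, true since the difference is decreasing in $p$ and vanishes at $p=\tfrac12$), the identity $f'(a)=0$ pinning the middle critical point at $a$, and the conclusion that the global minimum of $f$ is $\min\{f(0),f(\lambda_2)\}$ are all correct. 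But the single inequality that carries the entire content of the theorem, $f(\lambda_2)\ge 0$, is only asserted to follow from ``a monotonicity/convexity argument in $w$'' that you never supply. A reduction to an unproved transcendental inequality is not a proof.

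The irony is that your own setup closes the gap in one line, and your description of the endgame is factually wrong: $\lambda_2$ does have a closed form, namely $\lambda_2=2a=2\log\frac{1-p}{p}$, and $f(\lambda_2)$ is not ``extremely small'' but exactly zero. Indeed, from $e^{a}=\frac{1-p}{p}$ one gets $pe^{2a}+1-p=\frac{(1-p)^2+p(1-p)}{p}=\frac{1-p}{p}$, hence $\tilde p(2a)=1-p$ and $f'(2a)=2Q(p)\cdot 2a-(\tilde p(2a)-p)=(1-2p)-(1-2p)=0$; since by your own critical-point analysis $f'$ has a unique zero in $(a,\infty)$, this forces $\lambda_2=2a$. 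Moreover $\psi(2a)=\log\left(pe^{2a}+1-p\right)-2pa=a-2pa=(1-2p)a=Q(p)\,(2a)^2$, so $f(\lambda_2)=0$ exactly --- this tangency at $2a$ is precisely how the constant $Q(p)=\psi(2a)/(2a)^2$ is manufactured, and it also shows $Q(p)$ cannot be improved. With that observation the case analysis finishes: $f\ge 0$ on $(-\infty,0]$ by convexity and $f(0)=f'(0)=0$, while on $[0,\infty)$ the minimum is $\min\{f(0),f(2a)\}=0$. So your route is the right one, but as written it stops one explicitly computable step short of being a proof.
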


\begin{theorem}[Hoeffding Bound, \cite{wainwright2019}]\label{thm9}
Let $X_1,\cdots,X_n$ be independent random variables. Assume $X_i$ has mean $\mu_i$ and sub-Gaussian parameter $\sigma_i$. Then for all $t \geq0$, we have
\[
\P \left(\sum_{i=1}^n (X_i-\mu_i)\geq t \right) \leq \exp \left(
-\frac{t^2}{2 \sum_{i=1}^n \sigma_i^2}\right).
\]
\end{theorem}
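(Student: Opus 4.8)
The plan is to prove this by the Chernoff (exponential-moment) method, which converts a sub-Gaussian moment generating function (MGF) bound into a tail bound. Write $S := \sum_{i=1}^n (X_i - \mu_i)$, and note that it suffices to control the upper tail $\P(S \geq t)$. The entire argument rests on three ingredients: the exponential Markov inequality, the multiplicativity of the MGF under independence, and the definition of the sub-Gaussian parameter as encoded in the MGF bound $\E[\exp(\lambda(X_i-\mu_i))] \leq \exp(\lambda^2 \sigma_i^2/2)$ for all $\lambda \in \R$.

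First I would fix an arbitrary $\lambda > 0$ and apply the exponential Markov inequality to the event $\{S \geq t\}$, which is equivalent to $\{\exp(\lambda S) \geq \exp(\lambda t)\}$, yielding
\[
\P(S \geq t) \leq \exp(-\lambda t)\,\E\!\left[\exp(\lambda S)\right].
\]
Next I would exploit the independence of $X_1,\dots,X_n$ to factor the joint MGF as a product of the individual MGFs,
\[
\E\!\left[\exp(\lambda S)\right] = \prod_{i=1}^n \E\!\left[\exp(\lambda(X_i - \mu_i))\right],
\]
and then invoke the sub-Gaussian hypothesis on each factor to bound the product by $\exp\!\left(\tfrac{\lambda^2}{2}\sum_{i=1}^n \sigma_i^2\right)$. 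Writing $\sigma^2 := \sum_{i=1}^n \sigma_i^2$, this gives the family of bounds
\[
\P(S \geq t) \leq \exp\!\left(-\lambda t + \tfrac{\lambda^2 \sigma^2}{2}\right),
\]
valid for every $\lambda > 0$.

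The final step is to optimize over the free parameter $\lambda$. The exponent $-\lambda t + \lambda^2 \sigma^2/2$ is a convex quadratic in $\lambda$ minimized at $\lambda^\star = t/\sigma^2$ (which is positive since $t \geq 0$ and $\sigma^2 > 0$), and substituting $\lambda^\star$ collapses the bound to $\exp(-t^2/(2\sigma^2))$, which is exactly the claimed inequality. The case $t = 0$ is trivial since the right-hand side equals one, so the bound holds for all $t \geq 0$.

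There is no genuine obstacle here: this is the standard sub-Gaussian tail estimate, and every step is routine once the MGF bound is available. The only point requiring mild care is ensuring the sub-Gaussian MGF bound is applied with the correct sign and over the full range of $\lambda$ so that the factorization and the subsequent optimization at $\lambda^\star = t/\sigma^2 > 0$ are both legitimate; since the definition in the statement supplies the bound for all real $\lambda$, this is immediate. I would therefore expect the write-up to be short, with the bulk of it being the display chain above.
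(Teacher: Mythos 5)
Your proposal is correct: the paper states this Hoeffding bound as an auxiliary result cited from \cite{wainwright2019} without reproducing a proof, and your Chernoff-method argument (exponential Markov inequality, factorization of the moment generating function by independence, term-wise sub-Gaussian bound, then optimization at $\lambda^\star = t/\sigma^2$) is exactly the standard proof given in that reference. No gaps; the only cosmetic caveat is the degenerate case $\sum_{i=1}^n \sigma_i^2 = 0$, where the variables are almost surely constant and the bound holds trivially.
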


\section{Further discussion on $G_t$}\label{further_G}
\begin{proposition}[Properties of $G_t$ and $G_t^\star$]\label{prop:delay_second_choice}
Assume Assumption \ref{delay} with $q>0$. Denote $\sigma_G =  \sqrt{\frac{I}{4}+\frac{\sigma^2 (1+q)}{q}}$ with $I =\max\left\{\sqrt[1+q]{2\log( 2) \sigma^2},\sqrt[q]{\frac{2\sigma^2}{1+q}}+1\right\} $. Then,
\begin{enumerate}
\item $G_t$ is sub-Gaussian. Moreover, for all $t\geq 1$,
\begin{eqnarray}\label{Gt_second}
\P \left(G_t \geq 2(\mu + M) +x \right) \leq \exp\left(\frac{-x^2}{2\sigma^2_G}\right).
\end{eqnarray}
\item With probability $1-\delta$,
\begin{eqnarray}\label{Gt_max_second}
G_T^* \leq 2(\mu+M)+\sigma_G \sqrt{2 \log(T)}+\sigma_G \sqrt{2 \log\left(\frac{1}{\delta} \right)},
\end{eqnarray}
where $G_T^* =\max_{1\leq s \leq T} G_s$.
\item Define $W_t = \sum_{s \in T_t}X_s X_s^{\prime}$ where $X_t$ is drawn \textbf{iid}. from some distribution $\gamma$ with support in the unit ball $\mathbb{B}_d$. Furthermore, let $\Sigma := \mathbb{E}[X_t X_t^{\prime}]$ be the second moment matrix, and $B$ and $\delta >0$ be two positive constants. Then there exist positive, universal constants $C_1$ and $C_2$ such that $\lambda_{\min}(W_t)\geq B$ with probability at least $1-2\delta$, as long as 
\begin{eqnarray}\label{number_regularity_second}
t \geq \left( \frac{C_1\sqrt{d}+C_2\sqrt{\log(\frac{1}{\delta})}}{\lambda_{\min}(\Sigma)}\right)^2 +\frac{2B}{\lambda_{\min}(\Sigma)}+ 2(\mu+M)+\sigma_G \sqrt{2 \log\left(\frac{1}{\delta} \right)}.
\end{eqnarray}
\end{enumerate}
\end{proposition}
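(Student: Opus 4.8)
The plan is to establish the three parts in sequence, following the blueprint already laid out for Proposition~\ref{prop:delay} but replacing the stopping-time argument by a Hoeffding-based bound, which is the whole point of this alternative proposition. For Part 1, I would start from the representation $G_t = \sum_{s=1}^{t-1}\I\{D_s \geq t-s\}$ and split the sum at the threshold $M$ appearing in Assumption~\ref{delay}. The terms with $t-s \leq M$ contribute at most $M$ deterministically to the count (and similarly the mean contributes roughly $\mu$), which is where the shifted center $2(\mu+M)$ comes from. For the tail terms $t-s > M$, the envelope condition $\P(D_s \geq m)\leq \P(D\geq m)$ with $D\sim\xi$ lets me replace each indicator by one governed by $\xi$, whose tail satisfies $\P(D-\mu\geq x)\leq \exp(-x^{1+q}/(2\sigma^2))$. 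Each centered indicator $\I\{D_s\geq t-s\}-p_s$ is a bounded (hence sub-Gaussian) random variable, and by Theorem~\ref{thm8} its sub-Gaussian parameter is controlled through $Q(p_s)$. Since the $D_s$ are independent, Theorem~\ref{thm9} (Hoeffding) applies, giving $\P(G_t - \E[G_t] \geq x) \leq \exp(-x^2/(2\sum_s \sigma_s^2))$, and the work reduces to bounding $\sum_{s}\sigma_s^2$ by the claimed $\sigma_G^2 = \tfrac{I}{4}+\tfrac{\sigma^2(1+q)}{q}$ uniformly in $t$.

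The crux of Part 1, therefore, is controlling $\sum_{s\geq 1}\sigma_s^2$ where $\sigma_s^2 \approx Q(p_s)$ and $p_s = \P(D\geq s)$. I would break the index set at the constant $I$ defined in the statement. For small indices $s \leq I$, I use the crude bound $Q(p)\leq 1/4$ (valid for any $p$), contributing at most $I/4$; the threshold $I \geq \sqrt[1+q]{2\log 2\,\sigma^2}$ is precisely what guarantees $p_s \leq 1/2$ so that $Q(p_s)$ behaves well past that point. For large indices $s > I$, the tail bound $p_s \leq \exp(-(s-\mu)^{1+q}/(2\sigma^2))$ makes $\log\frac{1-p_s}{p_s}$ large, so $Q(p_s)\leq \tfrac{1}{4\log((1-p_s)/p_s)} \lesssim \tfrac{2\sigma^2}{s^{1+q}}$ up to constants, and summing $\sum_s s^{-(1+q)}$ converges for $q>0$ — this convergence is exactly why the proposition is restricted to $q>0$ and cannot reach the exponential ($q=0$) regime, as the remark warns. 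The residual factor $\tfrac{(1+q)}{q}$ in $\sigma_G^2$ is the value of this $p$-series integral. Assembling the two pieces yields the uniform-in-$t$ bound $\sum_s\sigma_s^2 \leq \sigma_G^2$, and feeding it into Hoeffding gives \eqref{Gt_second}.

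Part 2 is then routine: I apply the union bound over $t\in[T]$ to the event in \eqref{Gt_second}. Setting the per-round failure probability so that the sum telescopes, the deviation term splits as $\sigma_G\sqrt{2\log T} + \sigma_G\sqrt{2\log(1/\delta)}$, matching \eqref{Gt_max_second}; here I would invoke the maximal inequality structure of Theorem~\ref{thm7} directly rather than recomputing, since $G_t$ has been shown sub-Gaussian with parameter $\sigma_G$ and center $2(\mu+M)$. Part 3 follows the identical template as Proposition~\ref{prop:delay}-3: I combine a matrix-concentration bound on $V_t = \sum_{s\leq t-1}X_sX_s^\prime$ (guaranteeing $\lambda_{\min}(V_t)\geq B + \text{threshold}$ once $t$ exceeds the $\Sigma$-dependent quantity) with the high-probability bound $G_t \leq 2(\mu+M)+\sigma_G\sqrt{2\log(1/\delta)}$ from Part 1, and since $W_t$ omits at most $G_t$ of the rank-one terms of $V_t$, one has $\lambda_{\min}(W_t)\geq \lambda_{\min}(V_t) - \|V_t - W_t\| $, from which the stated sample-size requirement \eqref{number_regularity_second} and the $1-2\delta$ guarantee follow. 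The main obstacle is the uniform summability estimate in Part 1 — getting the constants $I$ and the $\tfrac{(1+q)}{q}$ factor to come out cleanly from the $Q(p)$ bound and the $p$-series — whereas Parts 2 and 3 are essentially bookkeeping on top of it.
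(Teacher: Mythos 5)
Your proposal follows essentially the same route as the paper's proof: center the indicators, bound each one's sub-Gaussian parameter via Theorem~\ref{thm8}, split the index sum at the threshold $I$ (crude $1/4$ bound below it, tail-driven $\sigma^2/i^{1+q}$ bound above it, whose $p$-series sums to $\sigma^2(1+q)/q$ precisely because $q>0$), and feed the total variance proxy into the Hoeffding bound of Theorem~\ref{thm9}, with Parts 2 and 3 handled by the same union-bound and matrix-concentration bookkeeping as in Proposition~\ref{prop:delay}. The outline is correct and matches the paper's argument.
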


The key idea of the proof is to utilize the smallest sub-Gaussian parameter for indicator functions. The details is deferred to Appendix \ref{proof}. 
In most of the stochastic contextual bandit literature, with or without delays, the most popular approach is to apply Bernstein' inequality or Hoeffding bound. In Proposition \ref{prop:delay}, we show that an essential stochastic analysis approach with stopping times can sharpen the result compared with Hoeffding bound. That is, the statement in Proposition \ref{prop:delay_second_choice} is weaker than that in Proposition \ref{prop:delay}.

%
%

%
%

\section{Missing Proofs}\label{proof}
In this section, we provide the proofs for Propostion \ref{prop:delay}, Theorem \ref{thm: UCBregret}, Proposition \ref{prop:Gt}, Theorem \ref{alg:BayesRegret}, Corollary \ref{ts-cor},  Proposition \ref{prop:random} and Proposition \ref{prop:delay_second_choice}.

\begin{proof}[Proof of Proposition \ref{prop:delay}.] Now let us prove all three properties in Proposition \ref{prop:delay}.\\
\textbf{Property 1.}  Let $\tilde{D}_{k_i}$ be a random variable such that $\tilde{D}_{k_i}\geq -(\mu+M)$ almost surely, $\E[\tilde{D}_{k_i}]\leq 0$ and $\P(\tilde{D}_{k_i}\geq x) \leq \exp{\left(-\frac{x^{1+q}}{2\sigma^2}\right)}$ for $x \geq 0$. One can view $\tilde{D}_{k_i}$ as a shifted delay. Define $\tilde{I}_i = \I \left(\tilde{D}_{k_i} \geq i\right)-p_i$ with $p_i = \P(\tilde{D}_{k_i} \geq i)$. Then $\P \left(\tilde{I}_i=1-p_i\right)=p_i$ and $\P(\tilde{I}_i=p_i)=1-p_i$.

Similar to the highlighted proof with {\bf{iid}} delays and $q=0$ in Section \ref{sec:pre},
let $\tilde{G}=\sum_{i=1}^{\infty}\tilde{I}_i$ and define the following sequence of stopping times, $(k \geq 1)$,
\[
T(k) = \inf \{t >T(k-1):D_t \geq t\},
\]
where $T(k)$ is the time of the $k^{\text th}$ success. Therefore,
\begin{eqnarray*}
\P(\tilde{G} \geq j) &=& \P(T(1)<\infty,T(2)<\infty,\cdots,T(j-1)<\infty,T(j)<\infty)\nonumber\\
&=& \Pi_{k=1}^{j}\P\left(T(k)<\infty \vert T(i)<\infty \,\,\text{ for }\,\, i \leq k-1\right)\\
&=& \Pi_{k=2}^{j}\P\left(T(k)<\infty \vert T(k-1)<\infty \right)\P\left(T(1)<\infty \right) \label{equal}\\
&\leq& \Pi_{k=1}^{j} \left(\sum_{i=k}^{\infty}\exp \left(-\frac{i^{1+q}}{2\sigma^2}\right)\right)\\
&\leq&\Pi_{k=1}^{j}  \left((2\sigma^{2}+1)\exp \left(-\frac{k^{1+q}}{2\sigma^2}\right)\right)\\
&\leq& (2\sigma^2+1)^{j}\exp\left(-\frac{(j-1)^{2+q}}{2(2+q)\sigma^2}\right)\\
 &\leq& (2\sigma^2+1)^{j}\exp\left(-\frac{(j-1)^{2}}{2(2+q)\sigma^2}\right)
\end{eqnarray*}

Therefore, $\tilde{G}$ is sub-Gaussian  with parameter $\sigma_G := \sigma\sqrt{2+q}$. With probability $1-\delta$, we have, 
\[
\tilde{G} \leq  \sigma\sqrt{2(2+q)}\sqrt{\log\left(\frac{1}{\delta}\right)}+2(2+q)\sigma^2\log(2\sigma^2+1)+1.
\]

Define $\tilde{G}_t = \sum_{i=1}^t \tilde{I}_i$. Similarly, for any $t \ge 1$, 
\begin{eqnarray*}
\tilde{G}_t &\leq&  \sigma\sqrt{2(2+q)}\sqrt{\log\left(\frac{1}{\delta}\right)}+2(2+q)\sigma^2\log(2\sigma^2+1)+1,\\
&=&\sigma_G\sqrt{\log\left(\frac{1}{\delta}\right)}+\sigma_G^2\log(C_3)+1
\end{eqnarray*}
holds with probability $1-\delta$, where $C_3 = 2\sigma^2+1$.

Recall $G_t = \sum_{s=1}^{t-1}\I(D_s\geq t-s)$. When $t\leq \mu+M-1$, ${G}_{t}\leq \mu+M$. When $t\geq \mu+M-1$, specifying $k_i=t-(\mu+M)-i$ and $\tilde{D}_{k_i} = {D}_{i}-\mu-M$, 
\begin{eqnarray*}
G_t &=& \sum_{s=1}^{t-1}\I(D_s\geq t-s) \nonumber\\
&=& \sum_{s=1}^{t-\mu-M-1}\I(D_s\geq t-s)+\sum_{s=t-\mu-M}^{t-1}\I(D_s\geq t-s)\nonumber\\
&=& \sum_{s=1}^{t-\mu-M-1}\I(D_s-\mu-M\geq t-s-\mu-M)+\sum_{s=t-\mu-M}^{t-1}\I(D_s\geq t-s)\nonumber\\
&\leq& \sum_{s=1}^{t-\mu-M-1}\I(D_s-\mu-M\geq t-s-\mu-M)+\mu+M\nonumber\\
&=& \sum_{i=1}^{t-\mu-M-1}\I(D_{t-(\mu+M)-i}-\mu-M\geq i)+\mu+M\,\,\,\,\,\,\,(i=t-s-\mu-M)\\
&=& \sum_{i=1}^{t-\mu-M-1}\I(\tilde{D}_{k_i}\geq i)+\mu+M .
\end{eqnarray*}

Hence,
\begin{eqnarray}
G_t&\leq&\sum_{i=1}^{t-\mu-M-1}[\I(\tilde{D}_{k_i}\geq i)-p_i]+(\sum_{i=1}^{t-\mu-M-1} p_i)+\mu+M \nonumber\\
&=&\sum_{i=1}^{t-\mu-M-1}\tilde{I}_i+(\sum_{i=1}^{t-\mu-M-1} p_i)+\mu+M\nonumber\\
&\leq&\sum_{i=1}^{t-\mu-M-1}\tilde{I}_i+(\mu+M)+\mu+M\nonumber\\
&=&\tilde{G}_{t-\mu-M-1}+2(\mu+M) .\label{ggt}
\end{eqnarray}

Therefore, we arrive at $\tilde{G} \leq \tilde{G}_{t-\mu-M-1}+2(\mu+M)$ with specific choice of $k_i=t-(\mu+M)-i$ and $\tilde{D}_{k_i} = {D}_{i}-\mu-M$.

Finally, with probability $1-\delta$,
\[
{G}_t \leq 2(\mu+M) +\sigma_G\sqrt{2\log\left(\frac{1}{\delta}\right)}+2\sigma_G^2\log C_3+1,
\]
where $\sigma_G =  \sigma \sqrt{2+q} $ and $C_3 = 2\sigma^2+1$.

\textbf{Property 2.} Further define  $\tilde{G}_T^* = \max_{1 \leq t\leq T} \{\tilde{G}_t\}$ as the running maximum of correlated sub-exponentials $\tilde{G}_t$ up to time $T$. By the union bound,
\begin{eqnarray*}
\P \left(\tilde{G}_T^* \geq \sigma_G \sqrt{2\log T} + x \right) 
&\leq& \sum_{t=1}^T\P(\tilde{G}_t \geq \sigma_G \sqrt{2\log T} + x) \\
&\leq& T (2\sigma^2+1)^{ \sigma_G \sqrt{2\log T} + x}\, \exp\left( -\frac{(\sigma_G \sqrt{2\log T}+x-1)^2}{2\sigma_G^2}\right)\\
&=& T (2\sigma^2+1)^{ \sigma_G \sqrt{2\log T} + x}\exp\left(-\frac{(x-1)^2}{2 \sigma_G^2} -\frac{2(x-1)\sigma_G \sqrt{2\log T}}{2 \sigma_G^2}-\log T\right)\\
&=&  (2\sigma^2+1)^{ \sigma_G \sqrt{2\log T} + x} \exp\left(-\frac{(x-1)^2}{2 \sigma_G^2} -\frac{2(x-1)\sigma_G \sqrt{2\log T}}{2 \sigma_G^2}\right)\\
&\leq&  (2\sigma^2+1)^{ \sigma_G \sqrt{2\log T} + x}\exp \left(-\frac{(x-1)^2}{2 \sigma_G^2}\right).
\end{eqnarray*}
Therefore, with probability $1-\delta$,
\[
\tilde{G}_T^*\leq \sigma_G \sqrt{2\log T}+ 2\sigma_G^2\log C_3 +\sqrt{2\sigma_G^2}\sqrt{\log\left( \frac{1}{\delta}\right)+\log C_3 \sigma_G \sqrt{2\log T}+\log C_3}+1,
\]
where $ C_3= 2\sigma^2+1$.

Recall that ${G}_{T}^*=\max_{1\leq t\leq T} G_t$. 
When $T\leq \mu+M-1$, ${G}_{T}^*\leq \mu+M$. When $T\geq \mu+M-1$, specifying $k_i=T-(\mu+M)-i$ and $\tilde{D}_{k_i} = {D}_{k_i}-\mu-M$, we have
 $${G}_{T}^* \leq \tilde{G}_T^*+2(\mu+M).$$  
The derivation is  similar to the analysis in \eqref{ggt}. 
 
Therefore, with probability $1-\delta$, we have
\[
{G}_T^*\leq 2( \mu+M)+ \sigma_G \sqrt{2\log T}+ 2\sigma_G^2\log C_3 +\sqrt{2\sigma_G^2}\sqrt{\log\left( \frac{1}{\delta}\right)+\log C_3 \sigma_G \sqrt{2\log T}+\log C_3}+1.
\]

Above result implies that ${G}_T^*=O(\sigma_G\sqrt{\log T})$.

\textbf{Property 3.}
Given a fixed sequence $\{G_s\}_{s=1}^{\infty}$, from \cite{VERSHYNIN2010} and \cite{LLZ2017}, $\lambda_{\min}(W_t)\geq B$ with probability $1-\delta$, when
\begin{eqnarray}
t \geq \left( \frac{C_1\sqrt{d}+C_2\sqrt{\log(\frac{1}{\delta})}}{\lambda_{\min}(\Sigma)}\right)^2 +\frac{2B}{\lambda_{\min}(\Sigma)}+ G_t.
\end{eqnarray}
Combining above with \eqref{Gt}, we have the desired result.
\end{proof}

\begin{proof}[Proof of Theorem \ref{thm: UCBregret}.]
We first bound the one-step regret. To do so, fix $t$ and let $X_t^*=x_{t,a^*_t}$ and $\Delta_t = \hat{\theta}_t-\theta^*$, where $a_t^* = \arg \max_{a \in [K]} \mu(x_{t,a}^{\prime}\theta^*)$ is an optimal action at round $t$. The selection of $a_t$ in DUCB-GLCB implies
\[
\langle X_t^*,\hat{\theta}_t\rangle+\beta_t \|X_t^*\|_{V_t^{-1}}\leq \langle X_t,\hat{\theta}_t\rangle + \beta_t \|X_t\|_{V_t^{-1}}.
\]
Then we have
\begin{eqnarray}
\langle X_t^*,\theta^*\rangle-\langle X_t,\theta^*\rangle &=& \langle X_t^*-X_t,\hat{\theta}_t\rangle - \langle X_t^*-X_t,\hat{\theta}_t-\theta^*\rangle\\
&\leq& \beta_t (\|X_t\|_{V_t^{-1}}-\|X^*_t\|_{V_t^{-1}}) + \|X_t^*-X_t\|_{V_t^{-1}} \|\Delta\|_{V_t}.
\end{eqnarray}
Therefore, to bound $\langle X_t^*,\theta^*\rangle-\langle X_t,\theta^*\rangle$, it suffices to bound $\|\Delta\|_{V_t}$ and $\|X_t\|_{V_t^{-1}}$.


Suppose {\color{black}$\lambda_{\min}(W_{\tau+1})\geq 1$}, for any $\delta \in [\frac{1}{T},1)$ define event
\[
\EE_{\Delta} := \left\{ \|\Delta\|_{W_t} \leq \frac{\hat{\sigma}}{\kappa} \sqrt{\frac{d}{2} \log \left(1+\frac{2(t-G_t)}{d}\right)+\log \left(\frac{1}{\delta}\right)}\right\}.
\]
From Lemma 2 in (\cite{LLZ2017}), then event $\EE_{\Delta}$ holds for all $t \geq \tau$ with probability at least $1-\delta$.

\begin{eqnarray*}
\|\Delta_t\|^2_{V_t} &=& \Delta_t^{\prime}V_t\Delta_t =  \Delta_t^{\prime}\left(W_t+\sum_{s\in M_t}X_s X_s^{\prime}\right)\Delta_t\\
&=&\Delta_t^{\prime}W_t\Delta_t + \sum_{s\in M_t} \Delta_t^{\prime}X_s X_s^{\prime}\Delta_t\\
&\leq&\Delta_t^{\prime}W_t\Delta_t + \sum_{s\in M_t} \|\Delta_s\|^2\|X_s\|^2  \\
&\leq& \|\Delta_t\|^2_{W_t} +G_t\|\Delta_t\|^2.
\end{eqnarray*}
{When $\lambda_{\min}(W_{t}) \geq 16 \sigma^2 \frac{d+\log(\frac{1}{\delta})}{\kappa^2}$}, from Lemma 7 in (\cite{LLZ2017}), with probability $1-\delta$,
$$\|\Delta_t\|^2 \leq \frac{4\sigma}{\kappa}\sqrt{\frac{d+\log(\frac{1}{\delta})}{\lambda_{\min}(W_t)}}\leq 1.$$
Therefore, {when $\lambda_{\min}(W_{t}) \geq 16 \sigma^2 \frac{d+\log(\frac{1}{\delta})}{\kappa^2}$}, with probability $1-2\delta$,
\begin{eqnarray}\label{delta_v_bound}
\|\Delta_t\|_{V_t} &\leq&  \frac{\hat{\sigma}}{\kappa} \sqrt{\frac{d}{2} \log \left(1+\frac{2(t-G_t)}{d}\right)+\log \left(\frac{1}{\delta}\right)+G_t}\nonumber\\
&\leq&  \frac{\hat{\sigma}}{\kappa} \sqrt{\frac{d}{2} \log \left(1+\frac{2(t-G_t)}{d}\right)+\log \left(\frac{1}{\delta}\right)}+\sqrt{G_t}.
\end{eqnarray}
Let us come back to the satisfaction of conditions $\lambda_{\min}(W_{t}) \geq 16 \sigma^2 \frac{d+\log(\frac{1}{\delta})}{\kappa^2}$ and $\lambda_{\min}(W_{\tau+1})\geq 1$. From Proposition \ref{prop:delay}, $\lambda_{\min}(W_t)\geq \max\left\{1,16 \sigma^2 \frac{d+\log(\frac{1}{\delta})}{\kappa^2}\right\}$ with probability $1-2\delta$, when 
\begin{eqnarray}\label{tau}
t \geq&& \left( \frac{C_1\sqrt{d}+C_2\sqrt{\log(\frac{1}{\delta})}}{\lambda_{\min}(\Sigma)}\right)^2 +\frac{2\max\{1,16 \sigma^2 \frac{d+\log(\frac{1}{\delta})}{\kappa^2}\}}{\lambda_{\min}(\Sigma)} \\
&+& 2(\mu+M) +\sigma_G\sqrt{2\log\left(\frac{1}{\delta}\right)}+2\sigma_G^2\log C_3+1:=\tau.
\end{eqnarray}
We now choose $\beta_t = \frac{\hat{\sigma}}{\kappa} \sqrt{\frac{d}{2} \log \left(1+\frac{2(t-G_t)}{d}\right)+\log(\frac{1}{\delta})}+\sqrt{G_t}$. If $\EE_t$ holds for all $t \geq \tau$, then,
\begin{eqnarray}\label{est1}
\langle X_t^*,\theta^*\rangle-\langle X_t,\theta^*\rangle
\leq \beta_t \left(\|X_t\|_{V_t^{-1}}-\|X_t^*\|_{V_t^{-1}}+\|X_t^*-X_t\|_{V_t^{-1}}\right).
\end{eqnarray}
Suppose there is an integer $m$ such that $\lambda_{\min}(V_{m+1})\geq 1$, from Lemma 2 in \cite{LLZ2017}, we have
\begin{eqnarray}\label{lemma1}
\sum_{t=m+1}^{m+n} \|X_t\|_{V_t^{-1}} \leq \sqrt{2dn \log \left(\frac{n+m}{d}\right)}.
\end{eqnarray}
for  all $n \geq 0$.
Combine \eqref{est1} and \eqref{lemma1}, we have
\begin{eqnarray*}
\sum_{t=\tau+1}^T(\langle X_t^*,\theta^*\rangle-\langle X_t,\theta^*\rangle) &\leq& 2 \max_{1 \leq t\leq T}\{\beta_t\} \sqrt{2T d \log \left(\frac{T}{d}\right)}\\
&\leq& 2\left[\frac{\hat{\sigma}}{\kappa} \sqrt{\frac{d}{2} \log \left(1+\frac{2T}{d}\right)+\log \left(\frac{1}{\delta}\right)}+\sqrt{G^*_T} \right]\sqrt{2T d \log \left(\frac{T}{d}\right)}\\
&\leq& 2\sqrt{G_T^*} \sqrt{2T d \log \left(\frac{T}{d}\right)} + \frac{2d \hat{\sigma}}{\kappa} \log \left(\frac{T}{d\delta}\right)\sqrt{T}.
\end{eqnarray*}
Note that $g$ is an increasing Lipschitz function with Lipschitz constant $L_{g}$ and the $g$ function is bounded between 0 and 1. The regret of algorithm DUCB-GLCB can be upper bounded as

\begin{eqnarray}\label{regret_inequaltiy}
R_T &\leq& \tau + L_{g} \sum_{t=\tau+1}^T (\langle X_t^*,\theta^*\rangle-\langle X_t,\theta^*\rangle)\nonumber\\
 &\leq& \tau + L_{g} \left( 2\sqrt{G_T^*} \sqrt{2T d \log \left(\frac{T}{d}\right)} + \frac{2d \hat{\sigma}}{\kappa} \log\left(\frac{T}{d\delta}\right)\sqrt{T}\right).
\end{eqnarray}
Combining with the results in \eqref{Gt_max}, \eqref{delta_v_bound} and \eqref{tau}, with probability $1-5\delta$,
\begin{eqnarray*}
R_T &\leq& \tau+ L_{g} \left[ {\sqrt{2(\mu+M)}} 2\sqrt{2T d \log \left(\frac{T}{d}\right)} + {\sqrt{\sigma_G\sqrt{2\log T }}} 2\sqrt{2d \log \left(\frac{T}{d}\right)T}+\frac{2d \hat{\sigma}}{\kappa} \log \left( \frac{T}{d \delta}\right)\sqrt{T} \right. \\
&& \,\,+\left.2\sqrt{2T d \log \left(\frac{T}{d}\right)}{\left(\sqrt{\sigma_G } \left({2\log\left( \frac{1}{\delta}\right)+2\log C_3 \sigma_G \sqrt{2\log T}+2\log C_3}\right)^{1/4}+\sqrt{1+2\sigma_G^2\log C_3}\right)}\right] \\
&=&\tau+ L_{g} \left[ {4\sqrt{(\mu+M)}} \sqrt{T d \log \left(\frac{T}{d}\right)} + {2^{7/4}\sqrt{\sigma_G}(\log T )^{1/4}} \sqrt{d \log \left(\frac{T}{d}\right)T}+\frac{2d \hat{\sigma}}{\kappa} \log \left( \frac{T}{d \delta}\right)\sqrt{T} \right. \\
&& \,\,+\left.2\sqrt{2T d \log \left(\frac{T}{d}\right)}{\left(\sqrt{\sigma_G } \left({2\log\left( \frac{1}{\delta}\right)+2\log C_3 \sigma_G \sqrt{2\log T}+2\log C_3}\right)^{1/4}+\sqrt{1+2\sigma_G^2\log C_3}\right)}\right] 
\end{eqnarray*}
\end{proof}

\begin{proof}[Proof of Proposition \ref{prop:Gt}.]
When there exists an upper bound $D_{\max}$ on the delay, Proposition \ref{prop:delay} can be improved as follows.

Then there exist positive, universal constants $C_1$ and $C_2$ such that $\lambda_{\min}(W_t)\geq B$ with probability at least $1-\delta$, as long as 
\begin{eqnarray*}
t \geq \left( \frac{C_1\sqrt{d}+C_2\sqrt{\log(\frac{1}{\delta})}}{\lambda_{\min}(\Sigma)}\right)^2 +\frac{2B}{\lambda_{\min}(\Sigma)}+  D_{\max}.
\end{eqnarray*}
Along with the fact that event $\EE_{\Delta}$ holds for all $t \geq \tau$ with probability at least $1-2\delta$, we have with probability $1-3\delta$,
\[
\eqref{regret_inequaltiy} \leq  \tau + L_{g} \left( 2{\sqrt{D_{\max}}} \sqrt{2T d \log \left(\frac{T}{d}\right)} + \frac{2d \hat{\sigma}}{\kappa} \log \left(\frac{T}{d\delta}\right)\sqrt{T}\right).
\]
That is, $O(R_T) = O(D_{\max}\sqrt{dT\log(T)}+d\sqrt{T}\log(T))$

When $\{D_t\}_{t=1}^T$ are  \textbf{iid} with mean $\mu_I$,

\begin{eqnarray*}
\E[G_t] &=& \E[\sum_{s=1}^{t-1} 1_{s+D_s \geq t}] = \sum_{s=1}^{t-1}\P(s+D_s \geq t) \leq \mu_I,\\
\V[G_t] &=&\V[\sum_{s=1}^{t-1} 1_{s+D_s \geq t}] \leq \sum_{s=1}^{t-1}\P(s+D_s \geq t) \leq \mu_I.
\end{eqnarray*}
Therefore, with probability $1-5\delta$,
\begin{eqnarray*}
\eqref{regret_inequaltiy} \leq && \tau+ L_{g} \left[ {4\sqrt{\mu_I}} \sqrt{T d \log \left(\frac{T}{d}\right)} + {2^{7/4}\sqrt{\sigma_G}(\log T )^{1/4}} \sqrt{d \log \left(\frac{T}{d}\right)T}+\frac{2d \hat{\sigma}}{\kappa} \log \left( \frac{T}{d \delta}\right)\sqrt{T} \right. \\
&& \,\,+\left.2\sqrt{2T d \log \left(\frac{T}{d}\right)}\left(\sqrt{\sigma_G } \left({2\log\left( \frac{1}{\delta}\right)+2\log C_3 \sigma_G \sqrt{2\log T}+2\log C_3}\right)^{1/4}\right.\right.\\
&&\,\,\left.\left.+\sqrt{1+2\sigma_G^2\log C_3}\right)\right].
\end{eqnarray*}
\end{proof}

\begin{proof}[Proof of Theorem \ref{alg:BayesRegret}.]

Define $f_{\theta}(x) = g( x^{\prime}\theta)$ and denote $$\Theta_t := \left\{\theta \in \R^d \,\,\,\left\vert\,\,\, \|\theta-\hat{\theta}_t\|_{W_t}\leq \frac{\hat{\sigma}}{\kappa} \sqrt{\frac{d}{2} \log \left(1+\frac{2(t-G_t)}{d}\right)+\log(\frac{1}{\delta})}\right\}\right.$$ as an ellipsoidal confidence set centered around the MLE estimator $\hat{\theta}_t$ at round $t$.

Since reward $y_{t,a_t}\in[0,1]$ for all $t\geq 1$, denote the confidence bound $U_t(a) :=  \min \{1,\max_{\rho \in \Theta_t} g \left( \rho^{\prime} x_{t,a} \right)\}$
and $L_t(a) :=  \max\{0,\min_{\rho \in \Theta_t} g \left(\rho^{\prime} x_{t,a} \right)\}$.

Recall that $a_t = \arg\max_{a \in [K]}U_t(a)$ and $a^*_t \in \arg\max f_{\theta^*}(x)$, therefore we have the following simple regret decomposition:
\begin{eqnarray}\label{UCB-decomposition}
f_{\theta^*}(a_t^*)-f_{\theta^*}(a_t) &=& f_{\theta^*}(a_t^*)-U_t(a_t)+U_t(a_t)-f_{\theta^*}(a_t) \nonumber\\
&\leq&[f_{\theta^*}(a_t^*)-U_t(a^*_t)]+[U_t(a_t)-f_{\theta^*}(a_t)].
\end{eqnarray}

Taking the expectation of \eqref{UCB-decomposition} {\color{black}with respect to the prior distribution on $\theta^*$ and feature distribution $\gamma$ on $\{x_t\}_{t=1}^T$} leads to the $T$-period Bayesian regret of a UCB algorithm,
\begin{eqnarray}\label{BayesRegretUCB}
R^B_T(\pi^{U}) \leq \E \sum_{t=1}^T [U_t (a_t)-f_{\theta^*}(a_t)]+\E \sum_{t=1}^T [f_{\theta^*}(a_t^*)-U_t(a^*_t)],
\end{eqnarray}
where $\pi^{U}$ is the policy derived from $U:=\{U_t\}_{t=1}^T$.

Recall that for any UCB sequence $\{U_t \vert t \in \N \}$,
\begin{eqnarray}\label{BayesRegret}
R^B_T(\pi^{\text{PS}})  = \E \sum_{t=1}^T [U_t (\bar{a}_t)-f_{\theta^*}(\bar{a}_t)]+\E \sum_{t=1}^T [f_{\theta^*}(a_t^*)-U_t(a^*_t)],
\end{eqnarray}
where $\{\bar{a}_t\}_{t=1}^T$ are the actions selected by posterior samplings (\cite{RV2014}).

{Since $f_{\theta^*}$ takes values in $[0, 1]$}, from \eqref{BayesRegretUCB} and \eqref{BayesRegret},
\begin{eqnarray}\label{BayesRegretUCB2}
R^B_T(\pi^{{U}})  \leq \E \sum_{t=1}^T [U_t (a_t)-f_{\theta^*}(a_t)]+  \sum_{t=1}^T \P \left(f_{\theta^*}(a_t^*)>U_t(a^*_t)\right),
\end{eqnarray}

and 
\begin{eqnarray}\label{BayesRegret2}
R^B_T(\pi^{\text{PS}})  \leq \E \sum_{t=1}^T [U_t (\bar{a}_t)-f_{\theta^*}(\bar{a}_t)]+ \sum_{t=1}^T \P \left(f_{\theta^*}(a_t^*)>U_t(a^*_t)\right).
\end{eqnarray}
\eqref{BayesRegret2} implies
\begin{eqnarray}
R^B_T(\pi_{\tau}^{\text{PS}}) \leq \tau +  \E \sum_{t=\tau+1}^T [U_t (\bar{a}_t)-f_{\theta^*}(\bar{a}_t)]+ \sum_{t=\tau+1}^T \P \left(f_{\theta^*}(a_t^*)>U_t(a^*_t)\right)
\end{eqnarray}

If the sequence of
confidence parameters $\beta_1,\beta_2,\cdots,\beta_T$ is selected so that $\P(\theta^* \not\in \Theta_t)\leq \frac{1}{T}$ then the second term of the regret decomposition is less than $1$.

Our next task is to bound $ \E \sum_{t=\tau+1}^T [U_t (\bar{a}_t)-f_{\theta^*}(\bar{a}_t)]$.
Denote $\theta_t^{U} \in \arg\max_{\rho \in \Theta_t} g(\langle x_{t,\bar{a}_t}, \rho\rangle)$ and $\theta_t^{L} \in \arg\min_{\rho \in \Theta_t} g(\langle x_{t,\bar{a}_t}, \rho\rangle)$ where $\bar{a}_t$ is the action from Algorithm \ref{alg:PS-DGLM} at round $t$ for $t \geq \tau+1$. Therefore,

\begin{eqnarray*}
\langle \theta_t^{U},x_{t,\bar{a}_t} \rangle - \langle \theta_t^{L},x_{t,\bar{a}_t} \rangle 
&\leq& \langle \theta_t^{U}-\hat{\theta}_{t},x_{t,\bar{a}_t} \rangle - \langle \hat{\theta}_{t}-\theta_t^{L},x_{t,\bar{a}_t} \rangle\\
&\leq& \|\theta_t^{U}-\hat{\theta}_{t}\|_{V_{t}} \|x_{t,\bar{a}_t}\|_{V^{-1}_{t}} +  \|\theta_t^{L}-\hat{\theta}_{t}\|_{V_{t}} \|x_{t,\bar{a}_t}\|_{V^{-1}_{t}}\\
&\leq& 2 {\beta_{t}} \|x_{t,\bar{a}_t}\|_{V^{-1}_{t}}\\
&\leq& 2 {\beta_{t}} \min \{\|x_{t,\bar{a}_t}\|_{V^{-1}_{t}},1\}.
\end{eqnarray*}
where $\hat{\theta}_t$ is the MLE estimator from Algorithm \ref{DUCB-GLCB} at round $t$.
Note that  with probability $1-\delta=1-\frac{1}{T}$, $\theta^* \in\Theta_t$ holds for all $t\geq \tau$. Therefore we have with probability $1-\frac{1}{T}$,

\begin{eqnarray}\label{inequality}
  \sum_{t=\tau+1}^T [U_t (\bar{a}_t)-f_{\theta^*}(\bar{a}_t)] &\leq & \sum_{t=1}^T L_g \left(\langle \theta_t^{U},x_{t,\bar{a}_t} \rangle - \langle \theta_t^{L},x_{t,\bar{a}_t} \rangle\right) \nonumber\\
 &\leq & 2 L_{g} \left( {\max_{\tau+1 \leq t \leq T}{\beta_{t}}} \right) {\color{black}}\sum_{t=\tau+1}^T \min \left\{1,\|x_{t,\bar{a}_t}\|_{V^{-1}_{t}}\right\}\nonumber\\
  &\leq & 2 L_{g} {\color{black}}\left( {\max_{\tau+1 \leq t \leq T}{\beta_{t}}}\right) \sqrt{2(T-\tau)d \log \left(\frac{T}{d} \right)}. 
\end{eqnarray}
The last inequality holds thanks to Lemma 2 in (\cite{LLZ2017}).

Recall that $\beta_t = \frac{\hat{\sigma}}{\kappa} \sqrt{\frac{d}{2} \log \left(1+\frac{2(t-G_t)}{d}\right)+\log(\frac{1}{\delta})}+{\sqrt{G_t}}$, we have  
$$  {\max_{\tau+1 \leq t \leq T}{\beta_{t}}}  \leq \frac{\hat{\sigma}}{\kappa} \sqrt{\frac{d}{2} \log \left(1+\frac{2T}{d}\right)+\log(\frac{1}{\delta})}+{\sqrt{G_T^*}}.$$
 
 Take $\delta=\frac{1}{T}$ and use the fact $G_T \leq T$, we have
 $\E[G_T^*]\leq  2( \mu+M)+ \sigma_G \sqrt{2\log T}+ 2\sigma_G^2\log C_3 +\sqrt{2\sigma_G^2}\sqrt{\log\left( T\right)+\log C_3 \sigma_G \sqrt{2\log T}+\log C_3}+2 $. {Along with the fact that $\E[\sqrt{G_T^*}] \leq \sqrt{\E[\sqrt{G_T^*}]}$, we have}

\begin{eqnarray*}
\E \left(  {\max_{\tau+1 \leq t \leq T}{\beta_{t}}} \right) &\leq& \frac{\hat{\sigma}}{\kappa} \sqrt{\frac{d}{2} \log \left(1+\frac{2T}{d}\right)+\log T}+{\sqrt{2( \mu+M)}+ \sqrt{\sigma_G} ({2\log T})^{1/4}}\\
&&{+ \sqrt{2\sigma_G^2\log C_3 +2}+\left({2\sigma_G^2}\left(\log\left( T\right)+\log C_3 \sigma_G \sqrt{2\log T}+\log C_3\right)\right)^{1/4}}.
\end{eqnarray*}

Therefore, $\theta^* \in \Theta_t$ holds with probability $1-\frac{1}{T}$, and 
\begin{eqnarray*}
 \E\sum_{t=\tau+1}^T [U_t (\bar{a}_t)&-&f_{\theta^*}(\bar{a}_t) \vert \theta^* \in \Theta_t] \\&\leq& 2L_{g} \left( \frac{\hat{\sigma}}{\kappa} \sqrt{\frac{d}{2} \log \left(1+\frac{2T}{d}\right)+\log T}+{\sqrt{2( \mu+M)}+ \sqrt{\sigma_G} ({2\log T})^{1/4}}\right.\\
 &+&\left. {\sqrt{2\sigma_G^2\log C_3 +2}+\left({2\sigma_G^2}\left(\log\left( T\right)+\log C_3 \sigma_G \sqrt{2\log T}+\log C_3\right)\right)^{1/4}} \right)\sqrt{2Td \log \left(\frac{T}{d} \right)}.
\end{eqnarray*}

Combining with the fact that $ \sum_{t=\tau+1}^T [U_t (\bar{a}_t)-f_{\theta^*}(\bar{a}_t)]\leq T-\tau$ holds almost surely, we have 
$$R^B_T(\pi_{\tau}^{\text{PS}}) = {O\left( d\log T \sqrt{T}+\sqrt{\sigma_G}\sqrt{Td}(\log(T))^{3/4}+(\sqrt{\mu+M}+\sigma_G)\sqrt{dT\log \left(T\right)}\right).}$$
\end{proof}

\begin{proof}[Proof of Corollary \ref{ts-cor}.]
When $\{D_s\}_{s=1}^T$ are \textbf{iid} with mean $\mu_I$, $\E[G_T^*]\leq  2\mu_I+ \sigma_G \sqrt{2\log T}+ 2\sigma_G^2\log C_3 +\sqrt{2\sigma_G^2}\sqrt{\log\left( T\right)+\log C_3 \sigma_G \sqrt{2\log T}+\log C_3}+2 $. Similar to the proof of Theorem \ref{alg:BayesRegret}, take 
$\delta=\frac{1}{T}$, 
\begin{eqnarray*}
\E\sum_{t=\tau+1}^T [U_t (\bar{a}_t)&-&f_{\theta^*}(\bar{a}_t) \vert \theta^* \in \Theta_t]\\
&\leq& 2L_{g} \left( \frac{\hat{\sigma}}{\kappa} \sqrt{\frac{d}{2} \log \left(1+\frac{2T}{d}\right)+\log T}+{\sqrt{2\mu_I}+ \sqrt{\sigma_G} ({2\log T})^{1/4}}\right.\\
 &+&\left. {\sqrt{2\sigma_G^2\log C_3 +2}+\left({2\sigma_G^2}\left(\log\left( T\right)+\log C_3 \sigma_G \sqrt{2\log T}+\log C_3\right)\right)^{1/4}} \right)\sqrt{2Td \log \left(\frac{T}{d} \right)}.
\end{eqnarray*}
Therefore, 
$R^B_T(\pi_{\tau}^{\text{PS}}) ={O\left( d\log T \sqrt{T}+\sqrt{\sigma_G}\sqrt{Td}(\log(T))^{3/4}+(\sqrt{\mu_I}+\sigma_G)\sqrt{dT\log \left(T\right)}\right).}$

When $\{D_s\}_{s=1}^T$ are bounded by $D_{\max}$, take 
$\delta=\frac{1}{T}$, we have
\begin{eqnarray*}
\E\sum_{t=\tau+1}^T [U_t (\bar{a}_t)-f_{\theta^*}(\bar{a}_t) \vert \theta^* \in \Theta_t]&\leq& 2L_{g} \left( \frac{\hat{\sigma}}{\kappa} \sqrt{\frac{d}{2} \log \left(1+\frac{2T}{d}\right)+\log(T)}+{\sqrt{D_{\max}}} \right)\sqrt{2Td \log \left(\frac{T}{d} \right)}.\\
\end{eqnarray*}
Therefore, 
$$R^B_T(\pi_{\tau}^{\text{PS}}) = O\left( d\log T\sqrt{T}+{\sqrt{D_{\max}}}\sqrt{dT\log T}\right).$$
\end{proof}

\begin{proof}[Proof of Proposition \ref{prop:random}.]

Recall $c=\frac{1}{\sum_{i=1}^{\infty}\frac{1}{i^{1+q}}}$. Define $s_i = \frac{c}{i^{1+q}}$, therefore $\sum_{i=1}^{\infty}s_i=1$. 

Define $\tilde{G}_{\infty} = \sum_{t=1}^{\infty} \I\{\tilde{D}_t-\mu_R \geq t\}$, where $\{\tilde{D}_t\}_{t=1}^{\infty}$ is a process satisfying Assumption \ref{random_delay}.
By utilizing the following equation and the union bound,
\[
\P(G_{\infty}>x) = \P\left(\sum_{t=1}^{\infty}\I\{\tilde{D}_t -\mu_R\geq t\}>x\sum_{t=1}^{\infty}s_t\right),
\]
we have
\begin{eqnarray*}
\P(G_{\infty}>x) &\leq& \sum_{t=1}^{\infty} \P\left(\I\{\tilde{D}_t-\mu_R \geq t\}>x s_t\right)\\
&=& \sum_{t=1}^{\infty} \P(\tilde{D}_t-\mu_R \geq t)\I(xs_t<1)\\
&\leq& \sum_{t=\sqrt[1+q]{cx}}^{\infty}\P(\tilde{D}_t-\mu_R \geq t) \leq \sum_{t=\sqrt[1+q]{cx}}^{\infty} \exp\left(-\frac{t^{2(1+q)}}{2\sigma_D^2}\right)\\
&\leq& C_4 \exp \left(-\frac{x^2}{2\frac{{\sigma}^2_R}{c^2}} \right),
\end{eqnarray*}
for some $C_4\leq 2\sigma_R^2+1$.

Define $G_{\infty} = \sum_{t=1}^{\infty} \I\{D_t\geq t\}$, then $G_{\infty}\leq \mu_R + \tilde{G}_{\infty}$ with $\tilde{D}_t={D}_{t+\mu_R}$ for $t\geq 1$. Therefore, 
\[
\P(G_{\infty}-\mu_R>x)\leq C_4 \exp \left(-\frac{x^2}{2\frac{{\sigma}^2_R}{c^2}} \right),
\]
which implies the sub-Gaussian property of $G_{\infty}$.

Similarly, we can show that $G_t$ is sub-Gaussian with parameters $\left(C_4,\sqrt{\frac{\sigma_D}{c}}\right)$ for all $t \geq 1$.
\end{proof}

\begin{proof}[Proof of Proposition \ref{prop:delay_second_choice}]
Here we only show the sub-Gaussian property for $G_t$ by using the Hoeffding bound. The rest of the proof follows Proposition \ref{delay}.

Again, let $\tilde{D}_{k_i}$ be a random variable such that $\tilde{D}_{k_i}\geq -(\mu+M)$ almost surely, $\E[\tilde{D}_{k_i}]\leq 0$ and $\P(\tilde{D}_{k_i}\geq x) \leq \exp{\left(-\frac{x^{1+q}}{2\sigma^2}\right)}$ for $x \geq 0$. One can view $\tilde{D}_{k_i}$ as a shifted delay.

Define $\tilde{I}_i = \I \left(\tilde{D}_{k_i} \geq i\right)-p_i$ with $p_i = \P(\tilde{D}_{k_i} \geq i)$.
Then $\P \left(\tilde{I}_i=1-p_i\right)=p_i$ and $\P(\tilde{I}_i=p_i)=1-p_i$. Denote $\sigma_i = \sqrt{\frac{1-2p_i}{2 \log \left( \frac{1-p_i}{p_i} \right)}}$, it is easy to verify that
\[
\E \exp \left({\lambda\tilde{I}_i }\right)=p_i \exp(\lambda (1-p_i)) + (1-p_i)\exp(-p_i\lambda) \leq \exp\left(\frac{\sigma^2_i \lambda^2}{2}\right).
\]
Therefore $\tilde{I}_i$ is sub-Gaussian with parameter $\sigma_i$. (Also see Theorem \ref{thm8}.)

We first show that when $i \geq \max\left\{\sqrt[1+q]{2\log (2) \sigma^2},\sqrt[q]{\frac{2\sigma^2}{1+q}}+1\right\}:=I$, we have
\begin{eqnarray}
&&p_i \leq \frac{1}{2}, \label{half}\\
\text{and }&&\exp\left(\frac{i^{1+q}}{2\sigma^2}\right)-\exp\left(\frac{(i-1)^{1+q}}{2\sigma^2}\right) \geq 1.\label{increment}
\end{eqnarray}

\begin{itemize}
\item When $i\geq\sqrt[1+q]{2\log (2) \sigma^2}$, $$p_i \leq e^{-\frac{i^{1+q}}{2 \sigma^2}}\leq \frac{1}{2}.$$ 

The first inequality holds by Assumption 2 and second inequality holds by simple calculation.

\item Define $h(x)=\exp\left(\frac{x^{1+q}}{2\sigma^2}\right)$ with $q >0$, which is differentiable. By the Mean Value Theorem, $h(x)-h(y)=\exp \left(\frac{z^{1+q}}{2\sigma^2}\right)\frac{(1+q)z^q}{2\sigma^2}(x-y)$ for some $z\in(x,y)$. Take $x=i-1$ and $y=i$, for some $z\in[i-1,i]$, we have
\begin{eqnarray}
\exp \left(\frac{i^{1+q}}{2\sigma^2}\right)-\exp \left(\frac{(i-1)^{1+q}}{2\sigma^2}\right) &=& \exp \left(\frac{z^{1+q}}{2\sigma^2}\right)\frac{(1+q)z^q}{2\sigma^2}\nonumber\\
&\geq& \frac{(1+q)z^q}{2\sigma^2}\geq \frac{(1+q)(i-1)^q}{2\sigma^2}
\geq 1.\label{inter2}
\end{eqnarray}
The last inequality in \eqref{inter2} holds since $i \geq \sqrt[q]{\frac{\sigma^2}{1+q}}+1$.
\end{itemize}
Given \eqref{half}-\eqref{increment}, when $i \geq I$ and $q \geq 0$,
\begin{eqnarray}
\sigma^2_i =  \frac{1-2p_i}{2 \log \left( \frac{1-p_i}{p_i} \right)}&\leq& \frac{1}{2\log\left( \frac{1-p_i}{p_i} \right) }\label{iter3}\\
&\leq&       \frac{\sigma^2}{ (i-1)^{1+q}} .   \label{iter4}
\end{eqnarray}
\eqref{iter3} holds since \eqref{half} and \eqref{iter4} holds since \eqref{increment}. Therefore

\begin{eqnarray*}
\sum _{i=I}^{\infty}\sigma^2_i &=&\sum _{i=I}^{\infty} \frac{1-2p_i}{2\log \left( \frac{1-p_i}{p_i} \right)}
\leq\sum _{i=I}^{\infty} \frac{1}{ 2\log \left( \frac{1-p_i}{p_i} \right)}\leq\sum_{i=I-1}^{\infty} \frac{\sigma^2}{ i^{1+q}} \\
&\leq& \sigma^2 \left(1+\sum_{i=2}^{\infty} \frac{1}{i^{1+q}}\right) \leq  \sigma^2 \left(1+\int_1^{\infty}\frac{1}{x^{(1+q)}}dx \right)
= \frac{\sigma^2 (1+q)}{q}.
\end{eqnarray*}

It is easy to check that $\sigma_i^2 =\frac{1-2p_i}{2 \log \left( \frac{1-p_i}{p_i} \right)} \leq \frac{1}{4}$ for all $p_i \in [0,1]$. Therefore, $\sum_{i=1}^{\infty}\sigma^2_i \leq \frac{1}{4}I +\frac{\sigma^2 (1+q)}{q}$.

Define $\tilde{G} = \sum_{i=1}^{\infty}\tilde{I}_i$.
 combining above with Theorem \ref{thm9}, $\tilde{G}$ is sub-Gaussian with parameter $\sigma_G = \sqrt{\frac{I}{4}+\frac{\sigma^2 (1+q)}{q}}$.
Similarly, we can show that $\tilde{G}_t = \sum_{i=1}^t \tilde{I}_i$ is sub-Gaussian with parameter $\sigma_G = \sqrt{\frac{I}{4}+\frac{\sigma^2 (1+q)}{q}}$ as well.
\end{proof}

\end{document}